\documentclass{article} % For LaTeX2e
\usepackage{hyperref}
\usepackage{url}

\usepackage[utf8]{inputenc} % allow utf-8 input
\usepackage[T1]{fontenc}    % use 8-bit T1 fonts
\usepackage{hyperref}       % hyperlinks
\usepackage{url}            % simple URL typesetting
\usepackage{booktabs}       % professional-quality tables
\usepackage{amsfonts}       % blackboard math symbols
\usepackage{nicefrac}       % compact symbols for 1/2, etc.
\usepackage{microtype}      % microtypography
\usepackage{hyperref}
\usepackage{graphicx}
\usepackage{url}
\usepackage{bm}
\usepackage[export]{adjustbox}
\usepackage{amsmath}
\usepackage{amsbsy}
\usepackage{calc}
\usepackage{amsthm}
\usepackage{amssymb}

\newtheorem{theorem}{Theorem}
\newtheorem{definition}{Definition}

%\usepackage{icml2018}

% If accepted, instead use the following line for the camera-ready submission:
\usepackage[accepted]{icml2018}

% The \icmltitle you define below is probably too long as a header.
% Therefore, a short form for the running title is supplied here:
\icmltitlerunning{Can Deep Reinforcement Learning Solve Erdos-Selfridge-Spencer Games?}

\begin{document}

\twocolumn[
\icmltitle{Can Deep Reinforcement Learning
           Solve Erdos-Selfridge-Spencer Games?}

% It is OKAY to include author information, even for blind
% submissions: the style file will automatically remove it for you
% unless you've provided the [accepted] option to the icml2018
% package.

% List of affiliations: The first argument should be a (short)
% identifier you will use later to specify author affiliations
% Academic affiliations should list Department, University, City, Region, Country
% Industry affiliations should list Company, City, Region, Country

% You can specify symbols, otherwise they are numbered in order.
% Ideally, you should not use this facility. Affiliations will be numbered
% in order of appearance and this is the preferred way.
\icmlsetsymbol{equal}{*}

\begin{icmlauthorlist}
\icmlauthor{Maithra Raghu}{google,cornell}
\icmlauthor{Alex Irpan}{google}
\icmlauthor{Jacob Andreas}{berkeley}
\icmlauthor{Robert Kleinberg}{cornell}
\icmlauthor{Quoc Le}{google}
\icmlauthor{Jon Kleinberg}{cornell}
\end{icmlauthorlist}

\icmlaffiliation{cornell}{Cornell University}
\icmlaffiliation{google}{Google Brain}
\icmlaffiliation{berkeley}{University of California, Berkeley}

\icmlcorrespondingauthor{Maithra Raghu}{maithrar@gmail.com}

% You may provide any keywords that you
% find helpful for describing your paper; these are used to populate
% the "keywords" metadata in the PDF but will not be shown in the document
\icmlkeywords{Machine Learning, ICML, reinforcement learning}

\vskip 0.3in
]

% this must go after the closing bracket ] following \twocolumn[ ...

% This command actually creates the footnote in the first column
% listing the affiliations and the copyright notice.
% The command takes one argument, which is text to display at the start of the footnote.
% The \icmlEqualContribution command is standard text for equal contribution.
% Remove it (just {}) if you do not need this facility.

\printAffiliationsAndNotice{}  % leave blank if no need to mention equal contribution
%\printAffiliationsAndNotice{\icmlEqualContribution} % otherwise use the standard text.

\begin{abstract}
Deep reinforcement learning has achieved many recent successes, but our understanding of its strengths and limitations is hampered by the lack of rich environments in which we can fully characterize optimal behavior, and correspondingly diagnose individual actions against such a characterization. Here we consider a family of combinatorial games, arising from work of Erdos, Selfridge, and Spencer, and we propose their use as environments for evaluating and comparing different approaches to reinforcement learning. These games have a number of appealing features: they are challenging for current learning approaches, but they form (i) a low-dimensional, simply parametrized environment where (ii) there is a linear closed form solution for optimal behavior from any state, and (iii) the difficulty of the game can be tuned by changing environment parameters in an interpretable way. We use these Erdos-Selfridge-Spencer games not only to compare different algorithms, but test for generalization, make comparisons to supervised learning, analyze multiagent play, and even develop a self play algorithm.  
\end{abstract}

\section{Introduction}
Deep reinforcement learning has seen many remarkable successes over
the past few years \cite{mnih2015dqn,silver2017alphagozero}.
But developing learning algorithms that are robust
across tasks and policy representations remains a challenge \cite{hendersondeeprlmatters}. 
Standard benchmarks like MuJoCo and Atari provide rich settings
for experimentation, but the specifics of the underlying environments
differ from each other in multiple ways, and hence
determining the principles underlying any particular form of 
sub-optimal behavior is difficult.
Optimal behavior in these environments is generally
complex and not fully characterized, so algorithmic success is generally
associated with high scores, typically on a copy of the training environment making it hard to analyze where errors
are occurring or evaluate generalization.

An ideal setting for studying the strengths and limitations of
reinforcement learning algorithms would be  (i) a simply parametrized
family of environments where (ii) optimal behavior can be completely
characterized and (iii) the environment is rich enough to support interaction
and multiagent play.
%(iii) the inherent difficulty of learning an optimal 
%behavior is tightly controlled by the underlying parameters.

To produce such a family of environments, we look in a novel direction -- 
to a set of two-player combinatorial games with their roots in work of Erdos and Selfridge \citep{erdos1973game}, and placed on a general footing by Spencer (\citeyear{spencer1994game}). Roughly speaking, these {\em Erdos-Selfridge-Spencer (ESS) games} are games in which two players take turns selecting objects from some combinatorial structure, with the feature that optimal strategies can be defined by potential functions derived from conditional expectations over random future play. 

These ESS games thus provide an opportunity to capture the general desiderata 
noted above, with a clean characterization of optimal behavior and a 
set of instances that range from easy to very hard as we sweep over a
simple set of tunable parameters.
We focus in particular on one of the best-known games in
this genre, {\em Spencer's attacker-defender game} (also known as the
``tenure game''; \citeauthor{spencer1994game}, \citeyear{spencer1994game}), in which --- roughly speaking ---
an {\em attacker} advances a set of pieces up the levels of a board,
while a {\em defender} destroys subsets of these pieces to try prevent any
of them from reaching the final level (\autoref{fig-schematic}).
An instance of the game can be parametrized by two key quantities.
The first is the number of levels $K$, which determines both the size
of the state space and the approximate length of the game; the latter
is directly related to the sparsity of win/loss signals as rewards.
The second quantity is a {\em potential function} $\phi$, whose
magnitude characterizes whether the instance favors the defender or attacker,
and how much ``margin of error'' there is in optimal play.

The environment therefore allows us to study learning by the defender and 
attacker, separately or concurrently in multiagent and self-play.
In the process, we are able to develop insights about the robustness
of solutions to changes in the environment. 
These types of analyses have been long-standing goals, but they have
generally been approached much more abstractly, given the difficulty
in characterizing step-by-step optimality in non-trivial environments
such as this one.
Because we have a move-by-move characterization of optimal play, we can
go beyond simple measures of reward based purely on win/loss outcomes
and use supervised learning techniques to pinpoint the exact location
of the errors in a trajectory of play.

The main contributions of this work are thus the following:
\begin{enumerate}
\item We develop these combinatorial
games as environments for studying the behavior of reinforcement learning
algorithms in a setting where it is possible to characterize optimal play and to tune the underlying difficulty using natural parameters.
\item We show how reinforcement learning algorithms in this domain are able to learn generalizable policies in addition to simply achieving high performance, and through new combinatorial results about the domain, we are able to develop strong methods for {\em multiagent} play that enhance generalization.
\item Through an extension of our combinatorial results, we show how this domain lends itself  to a subtle self-play algorithm, which achieves a significant improvement in performance.
\item We can characterize optimal play at a move-by-move level and thus compare the performance of a deep RL agent to one trained using supervised learning
on move-by-move decisions. By doing so, we discover
an intriguing phenomenon: while
the supervised learning agent is more accurate
on individual move decisions than the RL agent, the RL agent
is better at playing the game! We further interpret this result by defining a notion of \textit{fatal mistakes}, and showing that while the deep RL agent makes more mistakes overall, it makes fewer fatal mistakes.
\end{enumerate}

%This is a largely empirical paper, building on a theoretically
%grounded environment derived from a combinatorial game.
In summary, we present learning and
generalization experiments for a variety of commonly used model
architectures and learning algorithms. We show that despite the superficially
simple structure of the game, it provides both significant challenges
for standard reinforcement learning approaches and a number of tools
for precisely understanding those challenges.

\section{Erdos-Selfridge-Spencer Attacker Defender Game}
\label{sec-ess-defn}
We first introduce the family of 
Attacker-Defender Games \citep{spencer1994game},
a set of games with two properties that yield a particularly attractive
testbed for deep reinforcement learning: the ability to continuously vary the
difficulty of the environment through two parameters, and the
existence of a closed form solution that is expressible as a
\textit{linear model}.  
% AI.10.24: maybe this is better phrased as "having a closed form solution"?

% AI.10.24: In the image, label the top level as "goal" to make it clear what the win condition is?
\begin{figure}
\centering
   %\hspace*{-5mm}
   \includegraphics[scale=0.2]{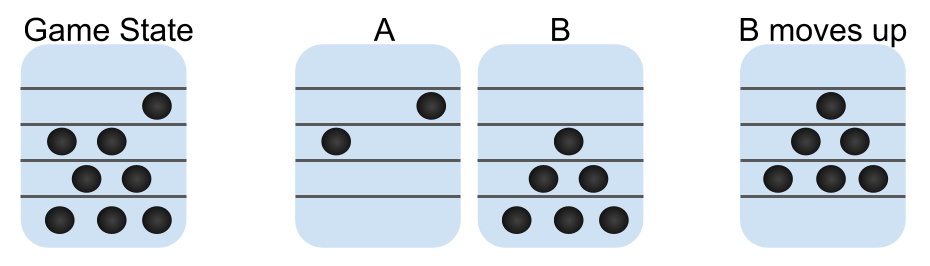}
  \caption{ \small One turn in an ESS Attacker-Defender game. The attacker proposes a partition $A, B$ of the current game state, and the defender chooses one set to destroy (in this case $A$). Pieces in the remaining set ($B$) then move up a level to form the next game state.}
  \label{fig-schematic}
\end{figure}

An Attacker-Defender game involves two players: an attacker who moves
pieces, and a defender who destroys pieces. An instance of the game
has a set of {\em levels} numbered from $0$ to $K$, and $N$ pieces that are
initialized across these levels. The attacker's goal is to get at
least one of their pieces to level $K$, and the defender's goal is to
destroy all $N$ pieces before this can happen. In each turn, the attacker
proposes a partition $A, B$ of the pieces still in play. The defender
then chooses one of the sets to destroy and remove from play. All
pieces in the other set are moved up a level. The game ends when
either one or more pieces reach level $K$, or when all pieces are
destroyed. Figure \ref{fig-schematic} shows one turn of play.

With this setup, varying the number of levels $K$ or the number of
pieces $N$ changes the difficulty for the attacker and the defender.
One of the most striking aspects of the Attacker-Defender game is that
it is possible to make this trade-off precise, and en route to doing so, also
identify a \textit{linear optimal policy.} 

We start with a simple special case --- rather than initializing
the board with pieces placed arbitrarily, we require the pieces to all
start at level 0. In this special case, we can directly think of 
the game's difficulty in terms of the number of levels $K$ and 
the number of pieces $N$.

\begin{theorem}
\label{thm-gamev1} Consider an instance of the Attacker-Defender game with $K$ levels and $N$ pieces, with all $N$ pieces starting at level $0$. Then if $N < 2^K$, the defender can \emph{always} win.
\end{theorem}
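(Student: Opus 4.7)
The plan is to exhibit an explicit potential function on game states together with a defender strategy that keeps this potential non-increasing, and then to observe that the initial configuration has potential strictly below the threshold corresponding to an attacker victory.

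For a game state in which pieces sit at levels $\ell_1, \ell_2, \ldots$, define the potential
\[
\phi \;=\; \sum_i 2^{-(K - \ell_i)}.
\]
A piece at the top level $K$ contributes $2^0 = 1$, so any configuration in which the attacker has won satisfies $\phi \ge 1$. The initial configuration, with all $N$ pieces at level $0$, has potential $N \cdot 2^{-K}$, which is strictly less than $1$ precisely because $N < 2^K$. So the theorem will follow if the defender can play to keep $\phi < 1$ forever (and the game is guaranteed to terminate).

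The defender's strategy is the obvious one suggested by $\phi$: given a proposed partition $(A, B)$ with contributions $\phi_A$ and $\phi_B$ summing to $\phi$, destroy the side of larger potential. The surviving side then moves up one level, which exactly doubles its contribution, so the new potential is $2 \min(\phi_A, \phi_B) \le \phi_A + \phi_B = \phi$. This is the key inequality, and it handles the degenerate partitions too: if one part is empty then its potential is $0$, so the defender destroys the nonempty part and the game ends in a defender win. Iterating, $\phi$ never exceeds its initial value $N/2^K < 1$, so no piece ever lands at level $K$.

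The only remaining subtlety is termination, which is essentially bookkeeping: whenever both parts of the partition are nonempty, the destroyed side removes at least one piece, so there can be at most $N$ such moves before either every piece is gone or the attacker is forced to propose a partition with an empty part (which ends the game immediately, as above). I expect the main conceptual step to be guessing the right potential function; once $\phi = \sum_i 2^{\ell_i - K}$ is in hand, the inequality $2\min(\phi_A, \phi_B) \le \phi_A + \phi_B$ that validates the greedy defender strategy is immediate, and the rest is just matching the initial and winning potentials against the threshold $1$.
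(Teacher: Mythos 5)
Your proof is correct, but it takes a different route from either of the paper's two proofs of this theorem. The paper first gives a bare counting argument: destroy the larger of $A$, $B$ by \emph{cardinality}, so the number of surviving pieces at least halves each turn, and since $N < 2^K$ no piece survives the $K$ steps needed to reach the top. It then gives a second, probabilistic proof (due to Spencer): against a uniformly random defender, the expected number of pieces reaching level $K$ is $N2^{-K} < 1$, so some sequence of defender choices destroys everything, and by determinacy the defender can force a win. Your potential-function argument is the one the paper reserves for the general Theorem~\ref{thm-gamev2}(a), where pieces may start at arbitrary levels; you have in effect proved that stronger statement and specialized it. In the restricted setting of Theorem~\ref{thm-gamev1} all surviving pieces always occupy the same level, so ``higher potential'' and ``more pieces'' coincide and your strategy is literally the paper's first one --- the difference is only in the bookkeeping (the invariant $\phi < 1$ versus the halving count). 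What your version buys is generality and a constructive strategy that transfers unchanged to arbitrary initial states; what the paper's first proof buys is maximal elementarity, and what its probabilistic proof buys is the motivation for why the weights $2^{-(K-\ell)}$ are the right ones, namely as survival probabilities under random play. Your termination argument is fine (and could be shortened: since no piece reaches level $K$ and every surviving piece climbs one level per turn, the game ends within $K$ turns regardless).
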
 

There is a simple proof of this fact: the defender simply always destroys
the larger one of the sets $A$ or $B$.
In this way, the number of pieces is reduced by at least a factor of two
in each step; since a piece must travel $K$ steps in order to reach level $K$,
and $N < 2^K$, no piece will reach level $K$.

When we move to the more general case in which the board is initialized
at the start of the game with pieces placed at arbitrary levels,
it will be less immediately clear how to define the ``larger''
one of the sets $A$ or $B$.  
We therefore describe a second proof of Theorem \ref{thm-gamev1}
that will be useful in these more general settings.
This second 
proof, due to \citet{spencer1994game}, uses Erdos's probabilistic method 
and proceeds as follows.
%suppose, for any particular attacker strategy, that the defender plays randomly. In particular, presented with two sets $A, B$, at any step, the defender chooses randomly between the two of them.

For any attacker strategy, assume the defender plays randomly.
Let $T$ be a random variable for the number of pieces that reach level $K$. Then $T = \sum T_i$ where $T_i$ is the indicator that piece $i$ reaches level $K$. 
% AI.10.24: Reorder below to explain why it's 2^{-K} for each indicator before giving the summation?
But then $E\left[T\right] = \sum E\left[T_i\right] = \sum_i 2^{-K}$: as the defender is playing randomly, any piece has probability $1/2$ of advancing a level and $1/2$ of being destroyed. As all the pieces start at level $0$, they must advance $K$ levels to reach the top, which happens with probability $2^{-K}$. But now, by choice of $N$, we have that  $\sum_i 2^{-K}=  N2^{-K} < 1$. Since $T$ is an integer random variable, $E\left[T\right] < 1$ implies that the distribution of $T$ has nonzero mass at $0$ - in other words there is some set of choices for the defender that guarantees destroying all pieces. 
This means that the attacker does not have a strategy that
wins with probability 1 against random play by the defender; since
the game has the property that one player or the other must be able
to force a win, it follows that the defender can force a win.
This completes the proof.

Now consider the general form of the game, in which the initial
configuration can have pieces at arbitrary levels.
Thus, at any point in time, the state of the game can be described by
a $K$-dimensional vector $S = (n_0, n_1,...,n_K)$, with $n_i$ the number of
pieces at level $i$.

Extending the argument used in the second proof above,
we note that a piece at level $l$ has a $2^{-(K-l)}$ 
chance of survival under random play. 
This motivates the following {\em potential function} on states:
\begin{definition}
\label{def-potfn}
\emph{Potential Function}: Given a game state $S = (n_0, n_1,...,n_K)$, 
we define the \emph{potential} of the 
state as $ \phi(S) = \sum_{i=0}^K n_i 2^{-(K-i)}$. 
\end{definition} 

Note that this is a \textit{linear} function on the input state, expressible as $\phi(S) = w^T \cdot S$ for $w$ a vector with $w_l = 2^{-(K-l)}$. 
We can now state the following generalization of Theorem \ref{thm-gamev1}, again due to \citet{spencer1994game}.
\begin{theorem}[\cite{spencer1994game}]
\label{thm-gamev2}
Consider an instance of the Attacker-Defender game that has $K$ levels and $N$ pieces, with pieces placed anywhere on the board, and let the initial state be $S_0$. Then
\begin{enumerate}
\item[(a)] If $\phi(S_0) < 1$, the defender can always win
\item[(b)] If $\phi(S_0) \geq 1$, the attacker can always win.
\end{enumerate}
\end{theorem}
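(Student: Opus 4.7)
For part (a) I would use a greedy defender that generalizes the doubling argument behind Theorem~\ref{thm-gamev1}: each turn, destroy whichever of $A, B$ has the larger potential. Since the surviving pieces move up a level, each contribution to $\phi$ doubles, so the next potential equals $2\min(\phi(A),\phi(B)) \le \phi(A)+\phi(B) = \phi(S)$. Thus $\phi$ is nonincreasing under this defender strategy, stays strictly below $1$, and a piece at level $K$ (which would by itself contribute $2^{-(K-K)} = 1$) can never exist; the game must then terminate with every piece destroyed.

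For part (b) I would construct a direct attacker strategy maintaining the dual invariant $\phi \ge 1$. Whenever $\phi(S) \ge 1$ and the game is ongoing, the attacker partitions $S$ into $A, B$ with $\phi(A), \phi(B) \ge 1/2$. Regardless of the defender's choice the surviving pieces advance a level, so the new potential is $2\min(\phi(A),\phi(B)) \ge 1$. The game must terminate, and cannot terminate with every piece destroyed (that state has $\phi = 0 < 1$), so some piece must reach level $K$, a win for the attacker.

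The crux of part (b) is the combinatorial existence of such a balanced partition. Rescaling each piece's potential by $2^K$ turns it into an integer weight $w_i = 2^{l_i} \in \{1,\ldots,2^{K-1}\}$, with total weight at least $2^K$. I would prove by induction on $k$ the stronger claim: any multiset of powers of $2$ with each weight at most $2^k$ and total at least $2^k$ contains a subset summing to exactly $2^k$. The inductive step: if some weight equals $2^k$, take that piece alone; otherwise every weight is at most $2^{k-1}$, and applying the inductive hypothesis twice (to the full multiset and then to its remainder) yields two disjoint subsets of weight $2^{k-1}$ whose union has weight $2^k$. Applied with $k = K-1$ this produces an attacker partition with $\phi(A) = 1/2$ and $\phi(B) \ge 1/2$, closing the argument.

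I expect this lemma to be the main obstacle. When $\phi(S)$ is only marginally above $1$ and no piece has potential $1/2$, naive averaging or a longest-processing-time greedy fails to guarantee both halves $\ge 1/2$, because the one-sided analysis only bounds the deficit by the largest single piece. It is precisely the dyadic ``make-exact-change'' structure of the potentials that forces an exact subset sum of $2^{K-1}$ to exist and rescues the construction.
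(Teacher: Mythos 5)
Your proposal is correct. Part (a) coincides with the paper's constructive argument: destroy the higher-potential set, so the surviving potential $2\min(\phi(A),\phi(B)) \le \phi(A)+\phi(B)$ stays below $1$ and no piece can ever occupy level $K$. (The paper also sketches a second, probabilistic-method proof of (a) via $E[T]=\sum_i 2^{-(K-l_i)}<1$, which you do not need.) For part (b) you take a genuinely different route to the key combinatorial fact. The paper defers to Spencer's greedy construction and then, in Theorem~\ref{thm-prefix-attacker}, produces the balanced partition by a \emph{prefix} argument: sweep the cut level $l$ downward, observe that the prefix potentials $\phi(A_l)$ increase from $0$ to $\phi(S_0)\ge 1$ in steps that are integer multiples of $2^{-(K-l)}$, and use that integrality to split level $l$ so that one side hits exactly $0.5$. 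You instead rescale to integer dyadic weights and prove by induction the subset-sum lemma that any multiset of powers of $2$, each at most $2^k$ with total at least $2^k$, contains a subset summing to exactly $2^k$; applied with $k=K-1$ this gives $\phi(A)=1/2$ exactly. Both arguments exploit the same dyadic ``exact change'' structure, and your diagnosis that naive greedy balancing fails when $\phi(S)$ barely exceeds $1$ is exactly the obstacle the paper's integrality step is also designed to overcome. The trade-off: your lemma is more elementary and self-contained as an existence proof, but it produces an arbitrary subset, whereas the paper's prefix construction additionally shows the balanced partition can be taken to be a split by level --- the structural fact the paper needs later to parametrize the attacker's action space linearly in $K$ for multiagent and self-play training.
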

One way to prove part (a) of this theorem is by directly extending the
proof of Theorem \ref{thm-gamev1}, with $E\left[T\right] = \sum
E\left[T_i\right] = \sum_i 2^{-(K - l_i)}$ where $l_i$ is the level of
piece $i$. After noting that $\sum_i 2^{-(K - l_i)} = \phi(S_0) < 1$
by our definition of the potential function and choice of $S_0$, we
finish off as in Theorem \ref{thm-gamev1}.

This definition of the potential function gives a natural, concrete
strategy for the defender: the defender simply destroys whichever of
$A$ or $B$ has higher potential. 
We claim that if $\phi(S_0) < 1$, then this strategy
guarantees that any subsequent state $S$ will also have $\phi(S) < 1$.
% Assume without loss of generality that $\phi(B) \le \phi(A)$. The defender will spare set $B$.
Indeed, suppose (renaming the sets if necessary) that $A$ has a
potential at least as high as $B$'s, 
and that $A$ is the set destroyed by the defender.
Since $\phi(B) \leq \phi(A)$ and $\phi(A) + \phi(B) = \phi(S) < 1$, the next state has potential $2\phi(B)$ (double the potential of $B$ as all pieces move up a level) which is also less than $1$.
In order to win, the attacker would need to place a piece on level $K$,
which would produce a set of potential at least $1$.
Since all sets under the defender's strategy have potential strictly less
than 1, it follows that no piece ever reaches level $K$.
% Is it worth proving this defender strategy is optimal? If it is, it should be in the appendix.

For $\phi(S_0) \geq 1$, \citet{spencer1994game} proves part (b) of the theorem by defining an optimal strategy for the attacker, using a greedy algorithm to pick two sets $A, B$ each with potential $\geq 0.5$. For our purposes, the proof from \citet{spencer1994game} results in an intractably large action space for the attacker; we therefore (in Theorem \ref{thm-prefix-attacker} later in the paper) define a new kind of attacker --- the \textit{prefix-attacker} --- and we prove its optimality. These new combinatorial insights about the game enable us to later perform multiagent play, and subsequently self-play.

\section{Related Work}

The Atari benchmark \citep{mnih2015dqn} is a well known set of tasks, ranging from easy to solve (Breakout, Pong) to very difficult (Montezuma's Revenge). \citet{duan2016benchmarking} proposed a set of continuous environments, implemented in the MuJoCo simulator \cite{todorov2012mujoco}. An advantage of physics based environments is that they can be varied continuously by changing physics parameters \citep{rajeswaran2016epopt}, or by randomizing rendering \citep{tobin2017domain}. DeepMind Lab \citep{deepmindlab} is a set of 3D navigation based environments. OpenAI Gym \citep{openaigym} contains both the Atari and MuJoCo benchmarks, as well as classic control environments like Cartpole \citep{stephenson1909cartpole} and algorithmic tasks like copying an input sequence. The difficulty of algorithmic tasks can be easily increased by increasing the length of the input. Automated game playing in algorithmic settings has also been explored outside of RL \citep{Bouzy2010MultiagentLE, Zinkevich2011Lemonade, Bowling2017poker, Littman1994markovgames}.  Our proposed benchmark merges properties of both the algorithmic tasks and physics-based tasks, letting us increase difficulty by discrete changes in length or continuous changes in potential.

\section{Deep Reinforcement Learning on the Attacker-Defender Game}
\label{sec-deeprl-ess}
From Section \ref{sec-ess-defn}, we see that the Attacker-Defender games are a family of environments with a difficulty knob that can be continuously adjusted through the start state potential $\phi(S_0)$ and the number of levels $K$. 
%Furthermore, representing the potential function, which gives an optimal policy, can be done with a \textit{linear} network. 
In this section, 
we describe a set of baseline results on Attacker-Defender games that motivate the exploration
in the remainder of this paper. We set up the Attacker-Defender environment as follows: the game state is represented by a $K+1$ dimensional vector for levels $0$ to $K$, with coordinate $l$ representing the number of pieces at level $l$. For the defender agent, the input is the concatenation of the partition $A, B$, giving a $2(K+1)$ dimensional vector. The start state $S_0$ is initialized randomly from a distribution over start states of a certain potential.

\begin{figure}
  \centering
  \begin{tabular}{cc}
  % ppo-15
  \vspace*{-5mm}
   \hspace*{-20mm}\includegraphics[width=0.5\columnwidth]{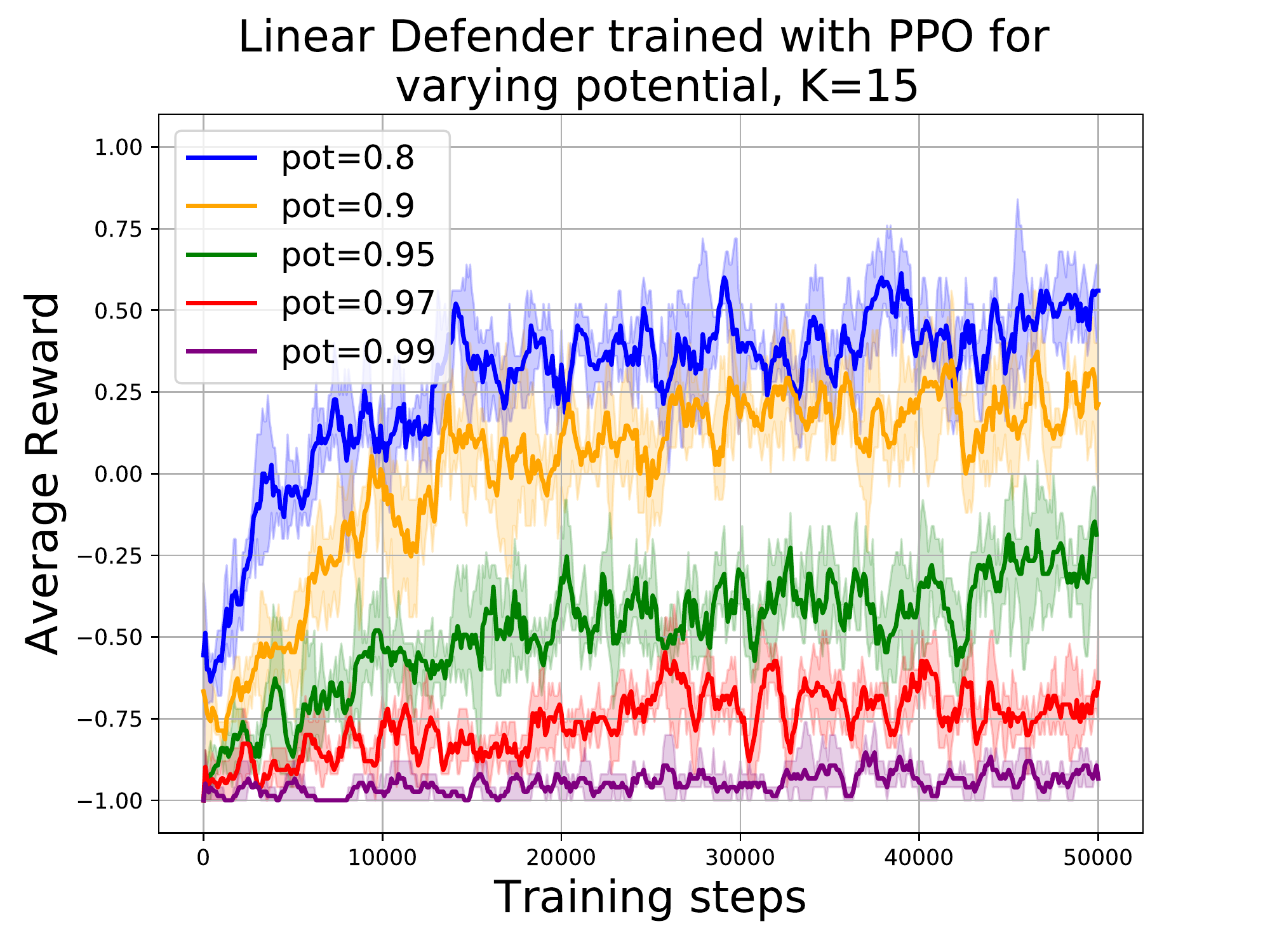}\hspace*{-20mm}
  &
  % ppo-20
  \hspace*{-15mm}\includegraphics[width=0.5\columnwidth]{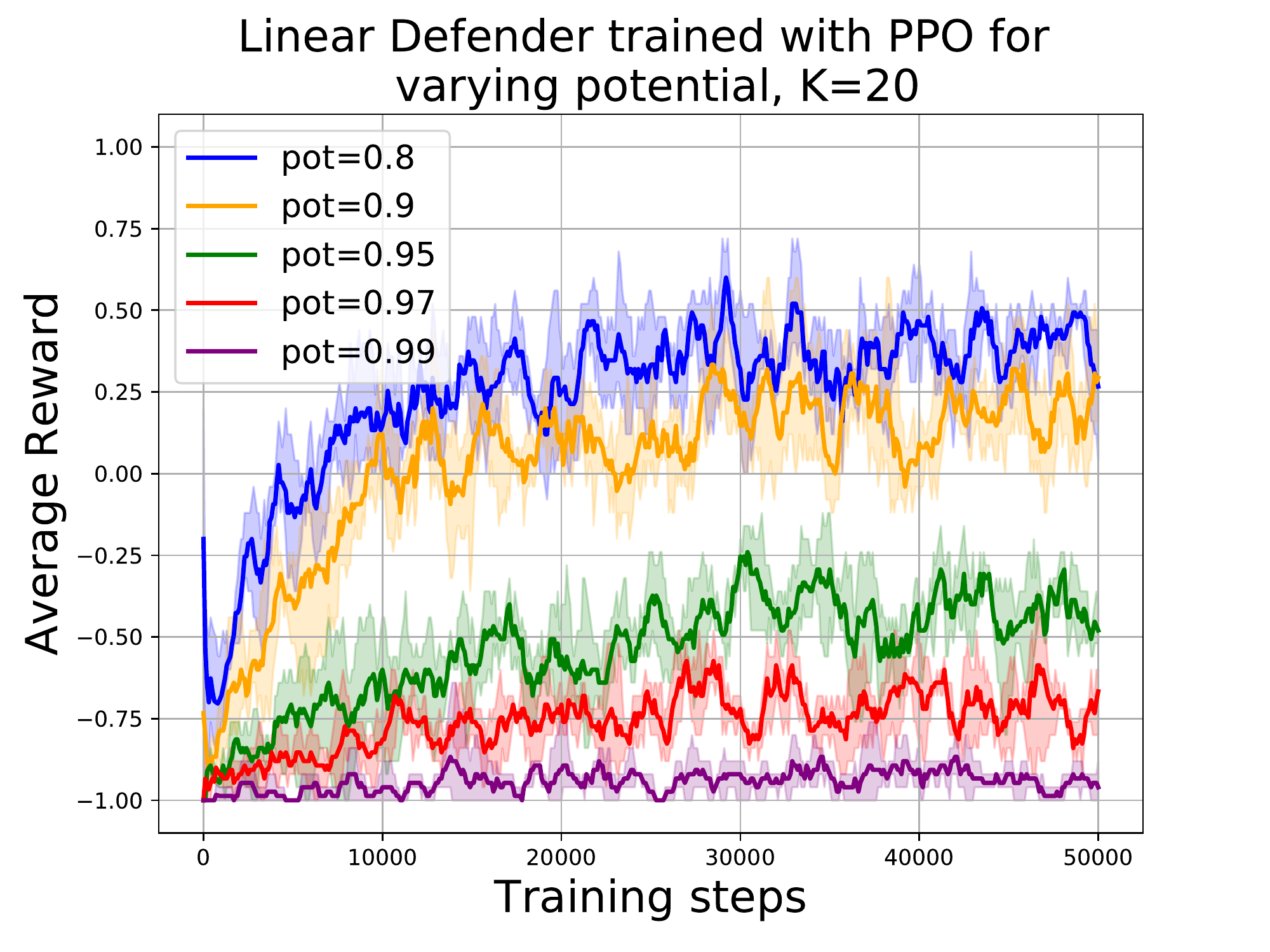}\hspace*{-9mm} 
  \\
  % a2c-15
  \hspace*{-20mm}
  \includegraphics[width=0.5\columnwidth]{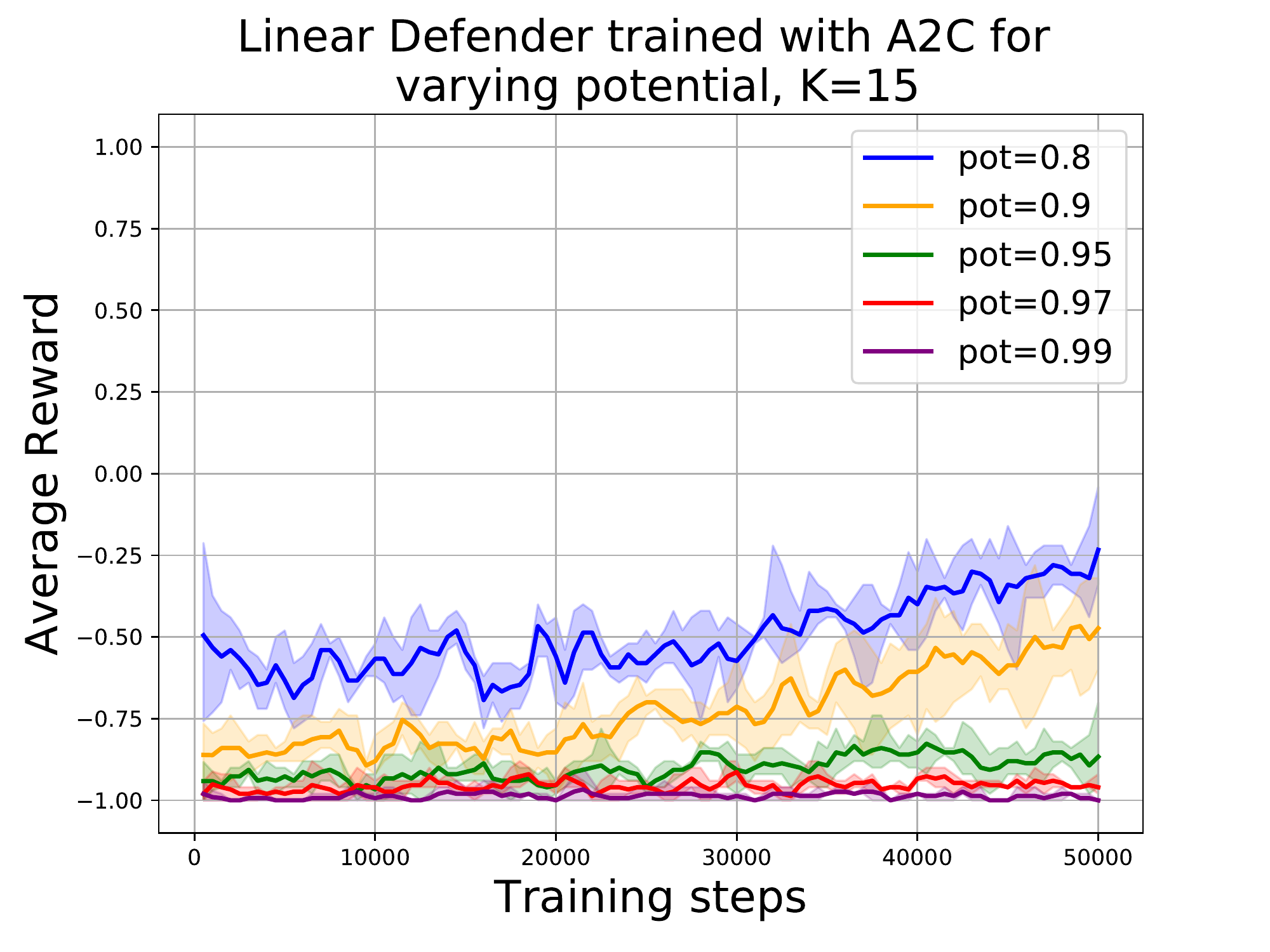}\hspace*{-20mm}
  &
  % a2c-20
    \hspace*{-15mm}
  \includegraphics[width=0.5\columnwidth]{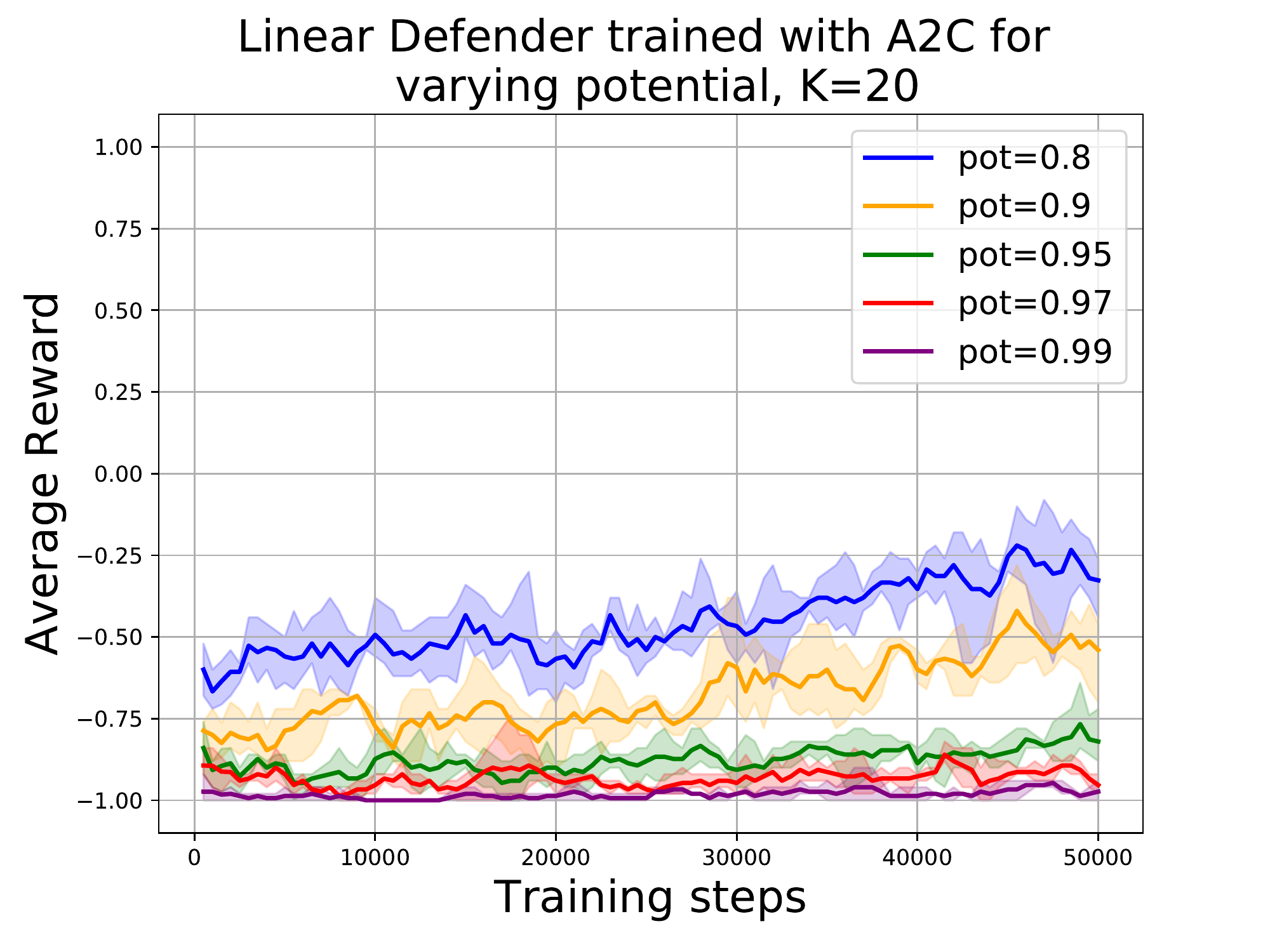}\hspace*{-9mm} 
  \\
  % dqn-15
  \includegraphics[width=0.5\columnwidth]{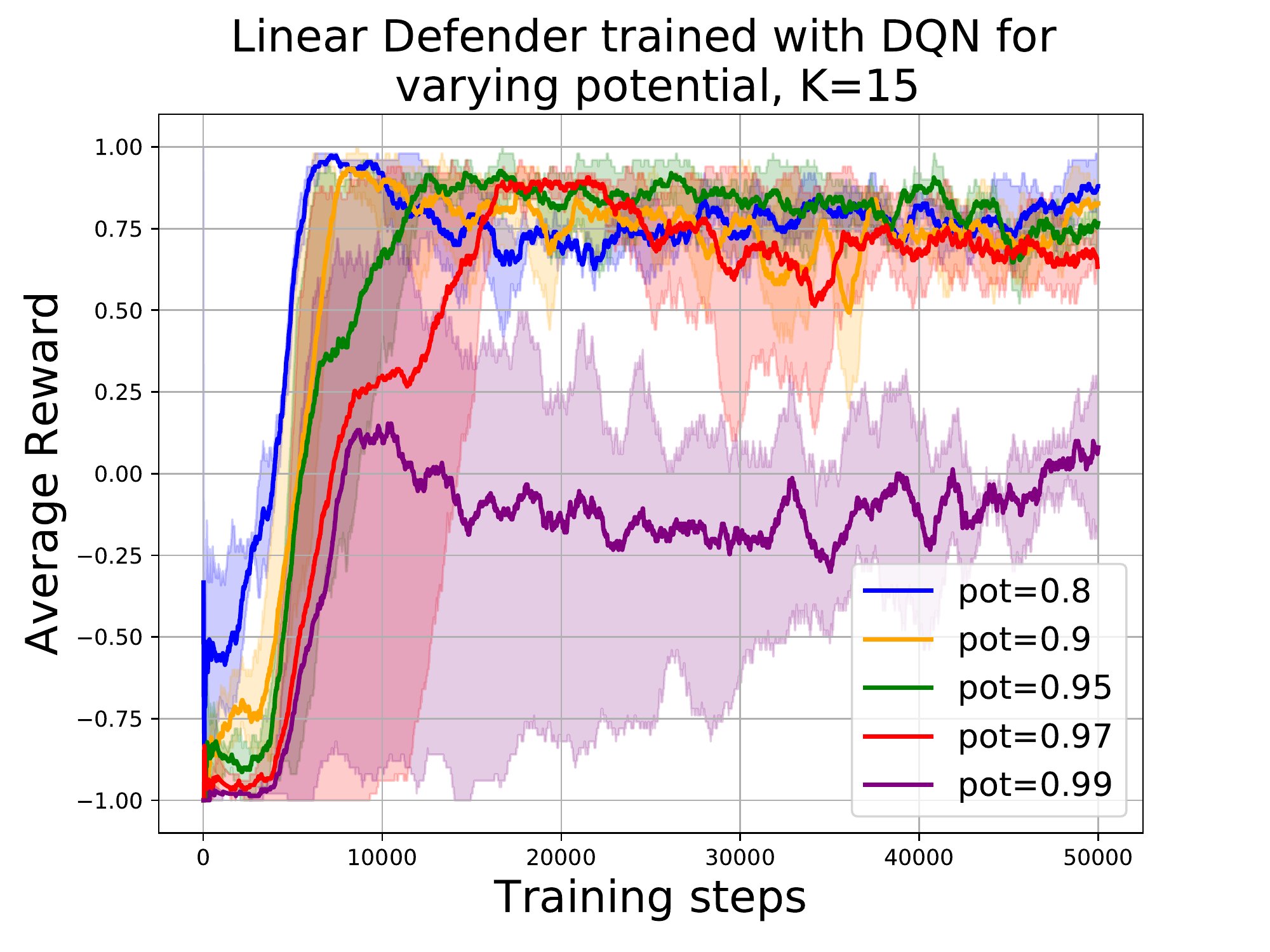}
  &
  % dqn-20
      \hspace*{-7mm}
  \includegraphics[width=0.5\columnwidth]{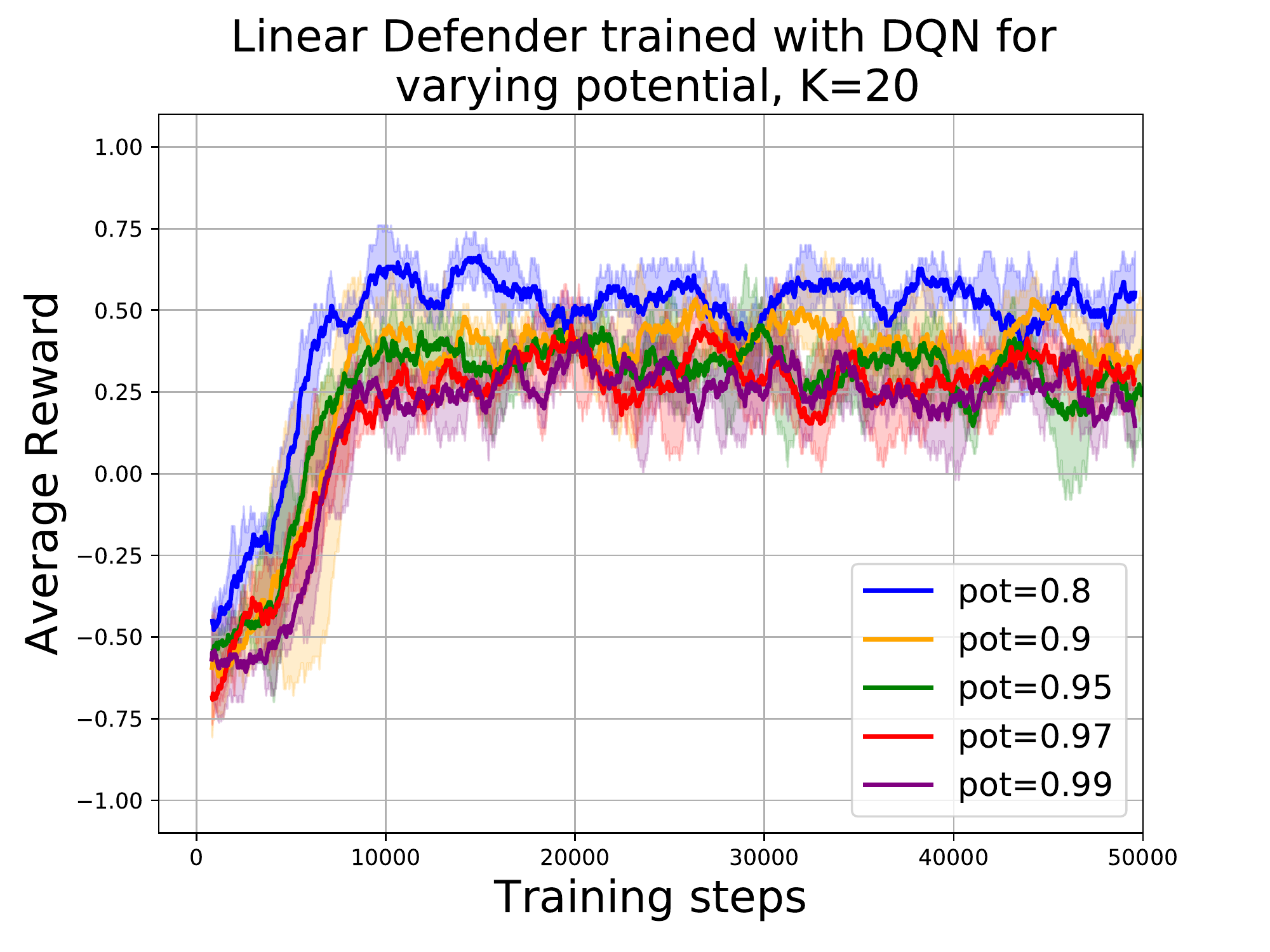}\\
  \end{tabular}
  \caption{\small Training a linear network to play as the defender agent with PPO, A2C and DQN. A linear model is theoretically expressive enough to learn the optimal policy for the defender agent. In practice, we see that for many difficulty settings and algorithms, RL struggles to learn the optimal policy and performs more poorly than when using deeper models (compare to Figure \ref{fig-defender-potential}). An exception to this is DQN which performs relatively well on all difficulty settings.}
  \label{fig-defender-linear}
\end{figure}

\subsection{Training a Defender Agent on Varying Environment Difficulties}
We first look at training a defender agent against an attacker that randomly chooses between (mostly) playing optimally, and (occasionally) playing suboptimally for exploration (with the \textit{Disjoint Support Strategy}, described in the Appendix.) Recall from the specification of the potential function, in Definition \ref{def-potfn} and Theorem \ref{thm-gamev2}, that the defender has a \textit{linear optimal policy}: given an input partition $A, B$, the defender simply computes $\phi(A) - \phi(B)$, with $\phi(A) = \sum_{i=0}^K a_i w_i$, where $a_i$ is the number of pieces in $A$ at level $i$;
$\phi(B) = \sum_{i=0}^K b_i w_i$, where $b_i$ is the number of pieces in $B$ at level $i$;
and $w_i = 2^{-(K-i)}$ is the weighting defining the potential function.

\begin{figure}
  \centering
  \begin{tabular}{cc}
  % ppo-15
  \vspace*{-5mm}
   \hspace*{-20mm}\includegraphics[width=0.5\columnwidth]{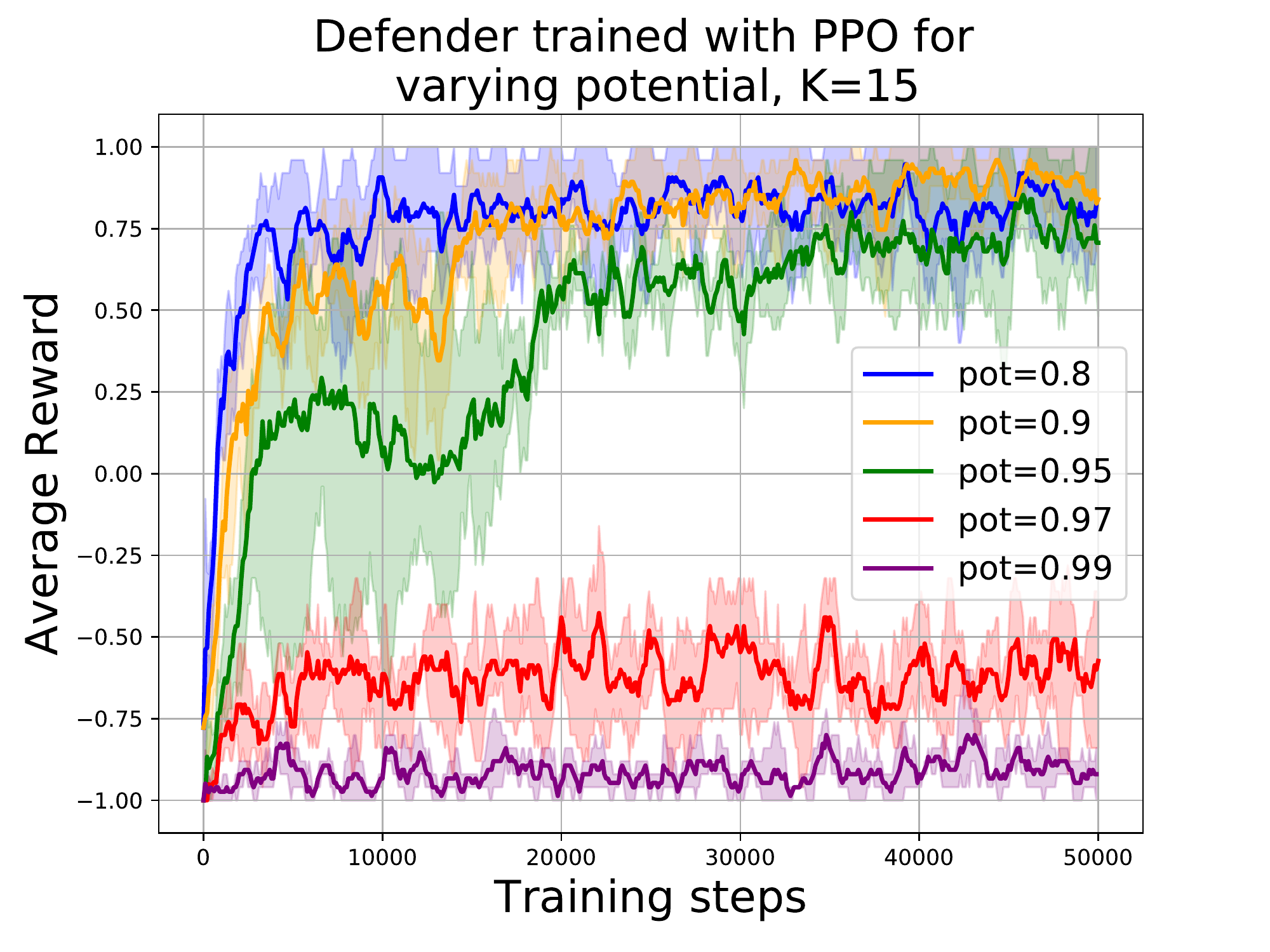}\hspace*{-20mm}
  &
  % ppo-20
  \hspace*{-15mm}\includegraphics[width=0.5\columnwidth]{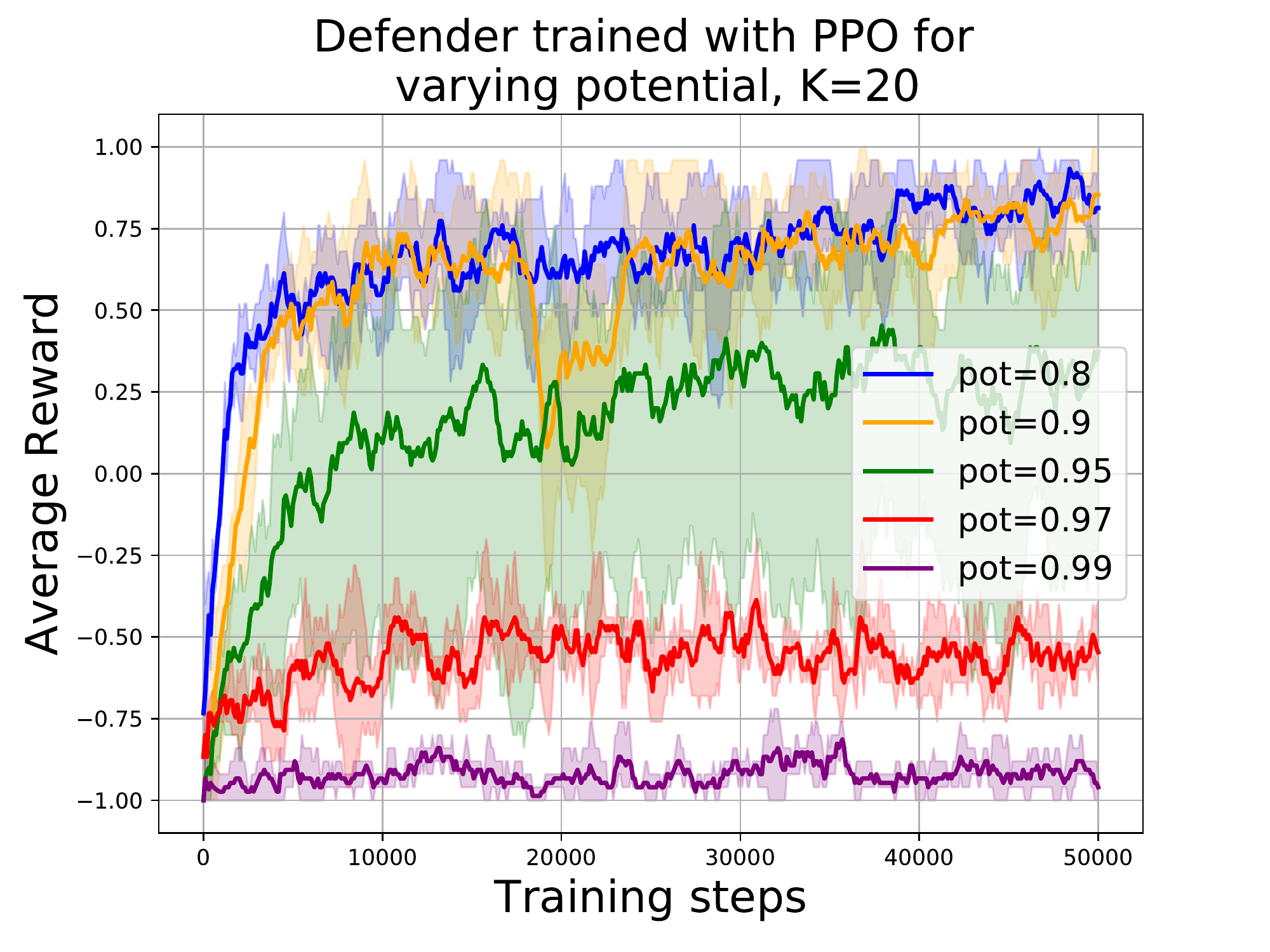}\hspace*{-9mm} 
  \\
  % a2c-15
  \hspace*{-20mm}
  \includegraphics[width=0.5\columnwidth]{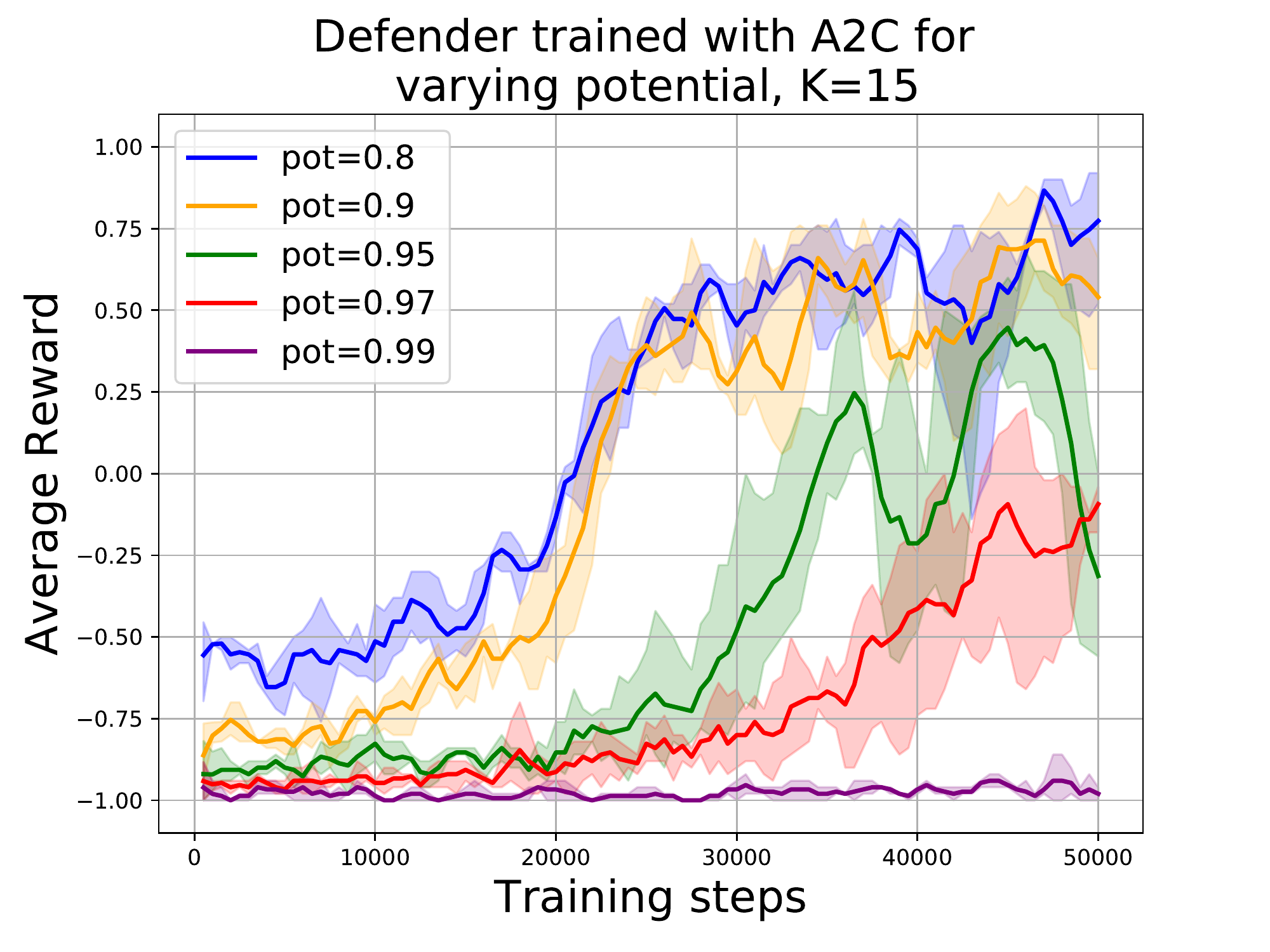}\hspace*{-20mm}
  &
  % a2c-20
    \hspace*{-15mm}
  \includegraphics[width=0.5\columnwidth]{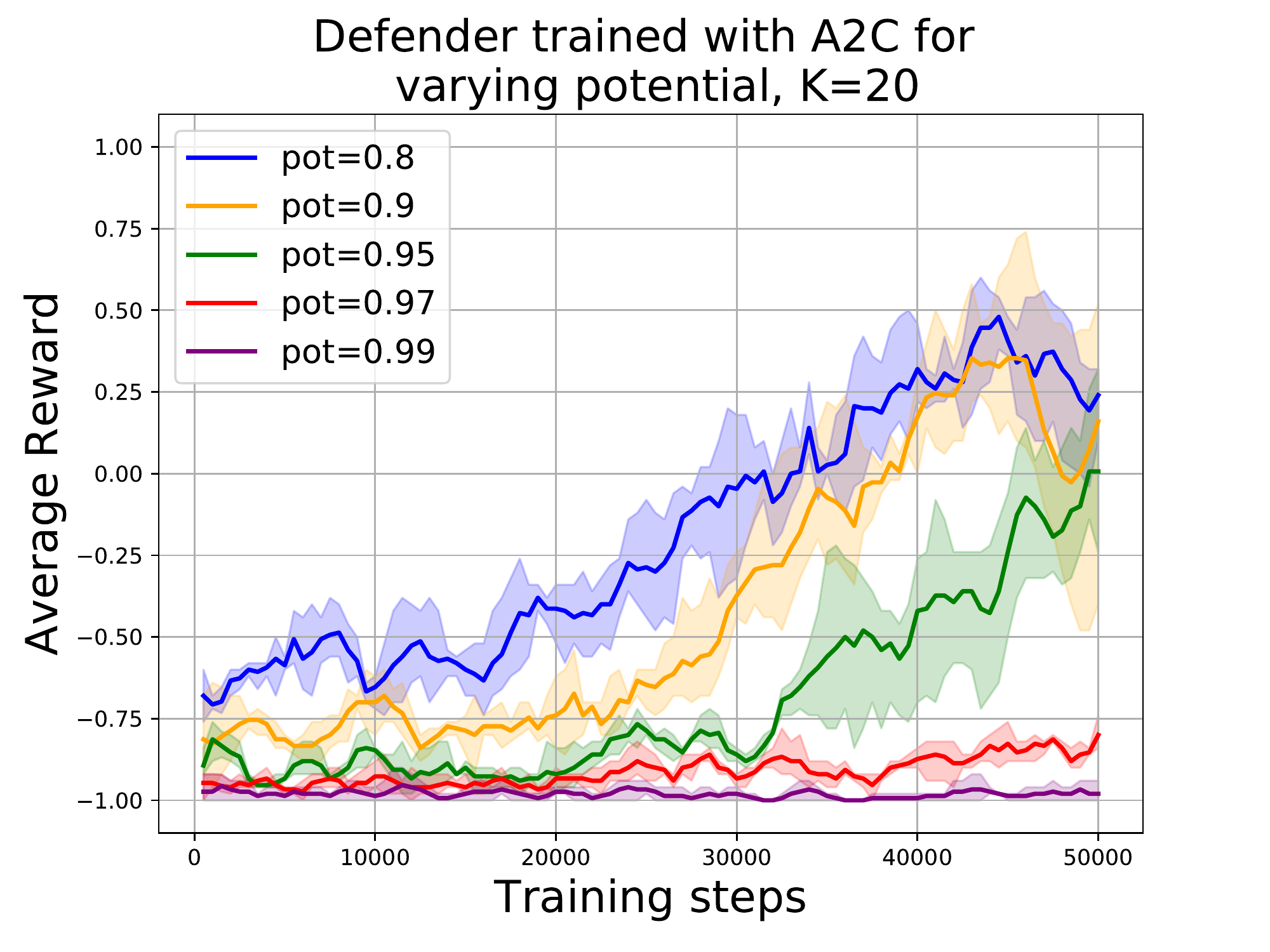}\hspace*{-9mm} 
  \\
  % dqn-15
  \includegraphics[width=0.5\columnwidth]{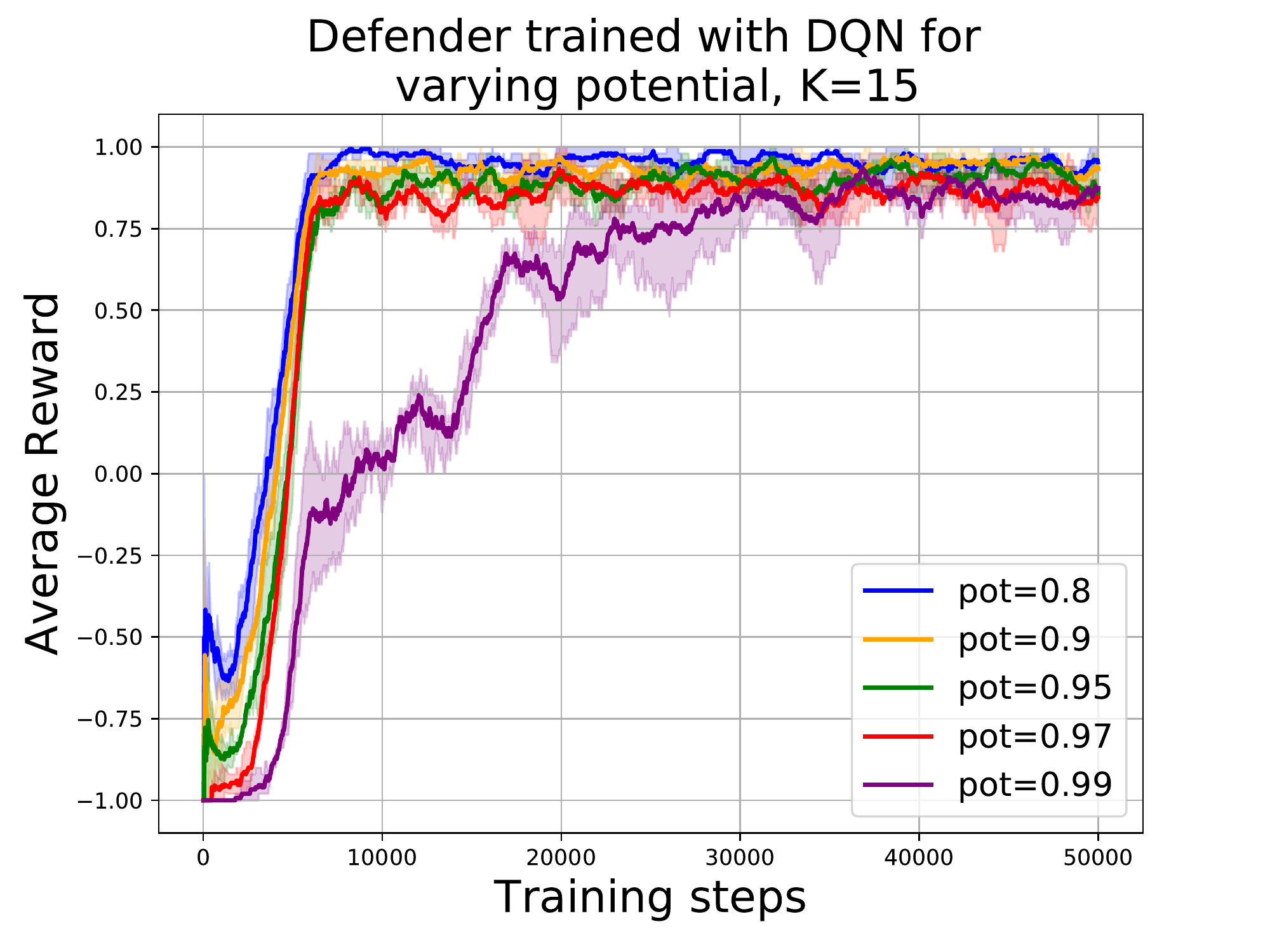}
  &
  % dqn-20
      \hspace*{-7mm}
  \includegraphics[width=0.5\columnwidth]{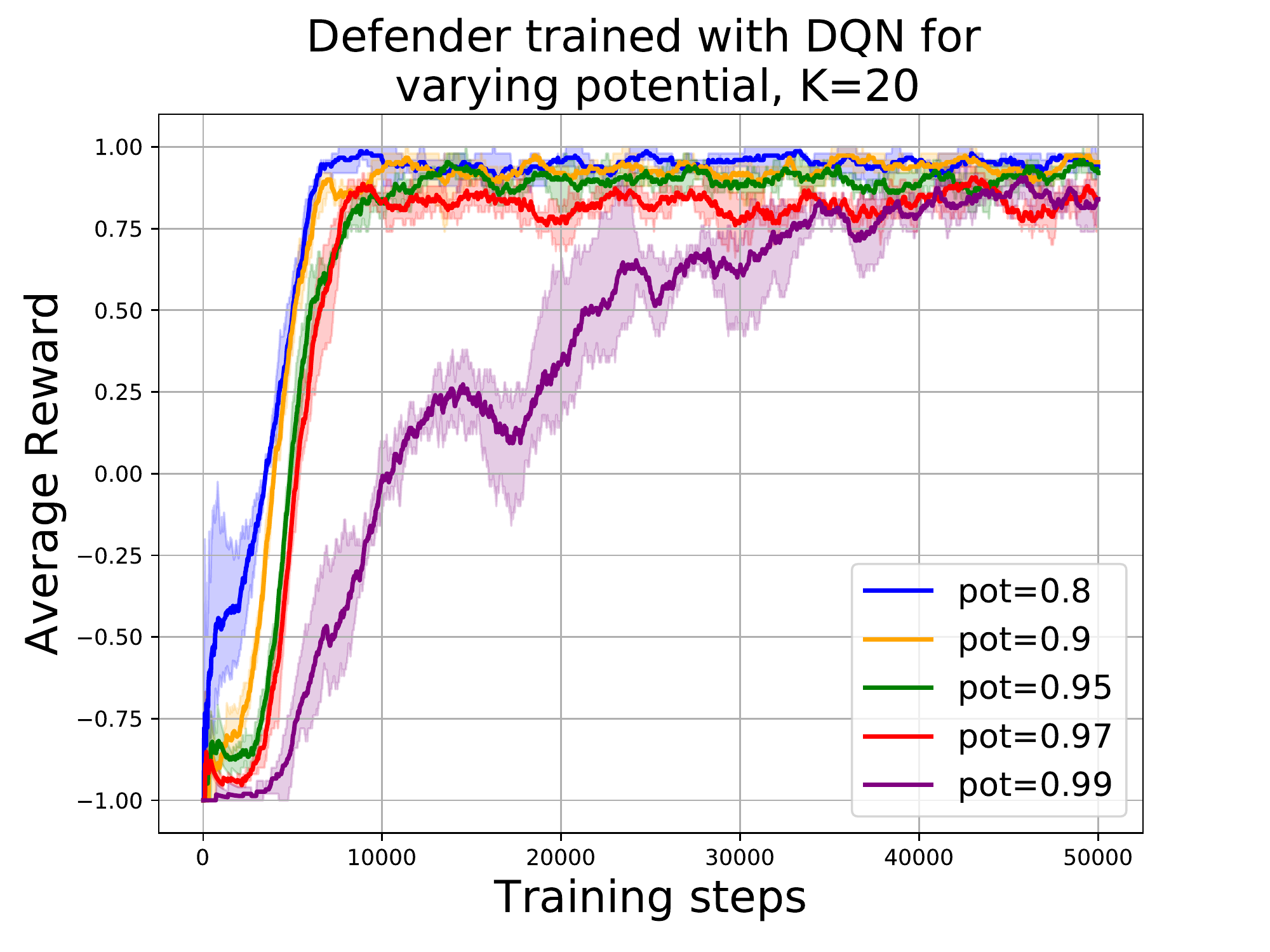}\\
  \end{tabular}
%   \begin{tabular}{ccc}
%   \vspace*{-5mm}
%   \hspace*{-6mm}\includegraphics[width=0.4\columnwidth]{figures/varyKphi/PPO_Defender_VaryPotentials_K15.pdf}\hspace*{-9mm}
%   &
%   \includegraphics[width=0.4\columnwidth]{figures/varyKphi/A2C_Defender_VaryPotentials_K15.pdf}\hspace*{-9mm}
%   &
%   \includegraphics[width=0.4\columnwidth]{figures/varyKphi/DQN_Defender_VaryPotentials_K15.pdf}\\
%   \hspace*{-6mm}\includegraphics[width=0.4\columnwidth]{figures/varyKphi/PPO_Defender_VaryPotentials_K20.pdf}\hspace*{-9mm}
%   &
%   \includegraphics[width=0.4\columnwidth]{figures/varyKphi/A2C_Defender_VaryPotentials_K20.pdf}\hspace*{-9mm}
%   &
%   \includegraphics[width=0.4\columnwidth]{figures/varyKphi/DQN_Defender_VaryPotentials_K20.pdf}\\
%   \end{tabular}
  \caption{\small Training defender agent with PPO, A2C and DQN for varying values of potentials and two different choices of $K$ with a deep network. Overall, we see significant improvements over using a linear model. DQN performs the most stably, while A2C tends to fare worse than both PPO and DQN.}
  \label{fig-defender-potential}
\end{figure}

When training the defender agent with RL, we have two choices of difficulty parameters. The potential of the start state, $\phi(S_0)$, changes how close to optimality the defender has to play, with values close to $1$ giving much less leeway for mistakes in valuing the two sets. Changing $K$, the number of levels, directly affects the sparsity of the reward, with higher $K$ resulting in longer games and less feedback. Additionally, $K$ also greatly increases the number of possible states and game trajectories (see Theorem \ref{thm-states-growth} in the Appendix).

\subsubsection{Evaluating Deep RL}
As the optimal policy can be expressed as a linear network, we first try training a linear model for the defender agent. We evaluate Proximal Policy Optimization (PPO) \citep{schulman2017ppo}, Advantage Actor Critic (A2C) \citep{mnih2016a2c}, and Deep Q-Networks (DQN) \citep{mnih2015dqn}, using the OpenAI Baselines implementations \citep{baselines}. Both PPO and A2C find it challenging to learn the harder difficulty settings of the game, and perform better with deeper networks  (Figure \ref{fig-defender-linear}). DQN performs surprisingly well, but we see some improvement in performance variance with a deeper model. In summary, while the policy can theoretically be expressed with a linear model, empirically we see gains in performance and a reduction in variance when using deeper networks (c.f. Figures \ref{fig-defender-potential}, \ref{fig-defender-K}.)

Having evaluated the performance of linear models, we turn to using deeper neural networks for our policy net.  (A discussion of the hyperparameters used is provided in the appendix.) Identically to above, we evaluate PPO, A2C and DQN on varying start state potentials and $K$. Each algorithm is run with 3 random seeds, and in all plots we show minimum, mean, and maximum performance. Results are shown in Figures \ref{fig-defender-potential}, \ref{fig-defender-K}. Note that all algorithms show variation in performance across different settings of potentials and $K$, and show noticeable drops in performance with harder difficulty settings. When varying potential in Figure \ref{fig-defender-potential}, both PPO and A2C show larger variation than DQN. A2C shows the greatest variation and worst performance out of all three methods.

\begin{figure}
\centering
  \begin{tabular}{cc}
  % ppo-15
  \vspace*{-5mm}
   \hspace*{-20mm}\includegraphics[width=0.5\columnwidth]{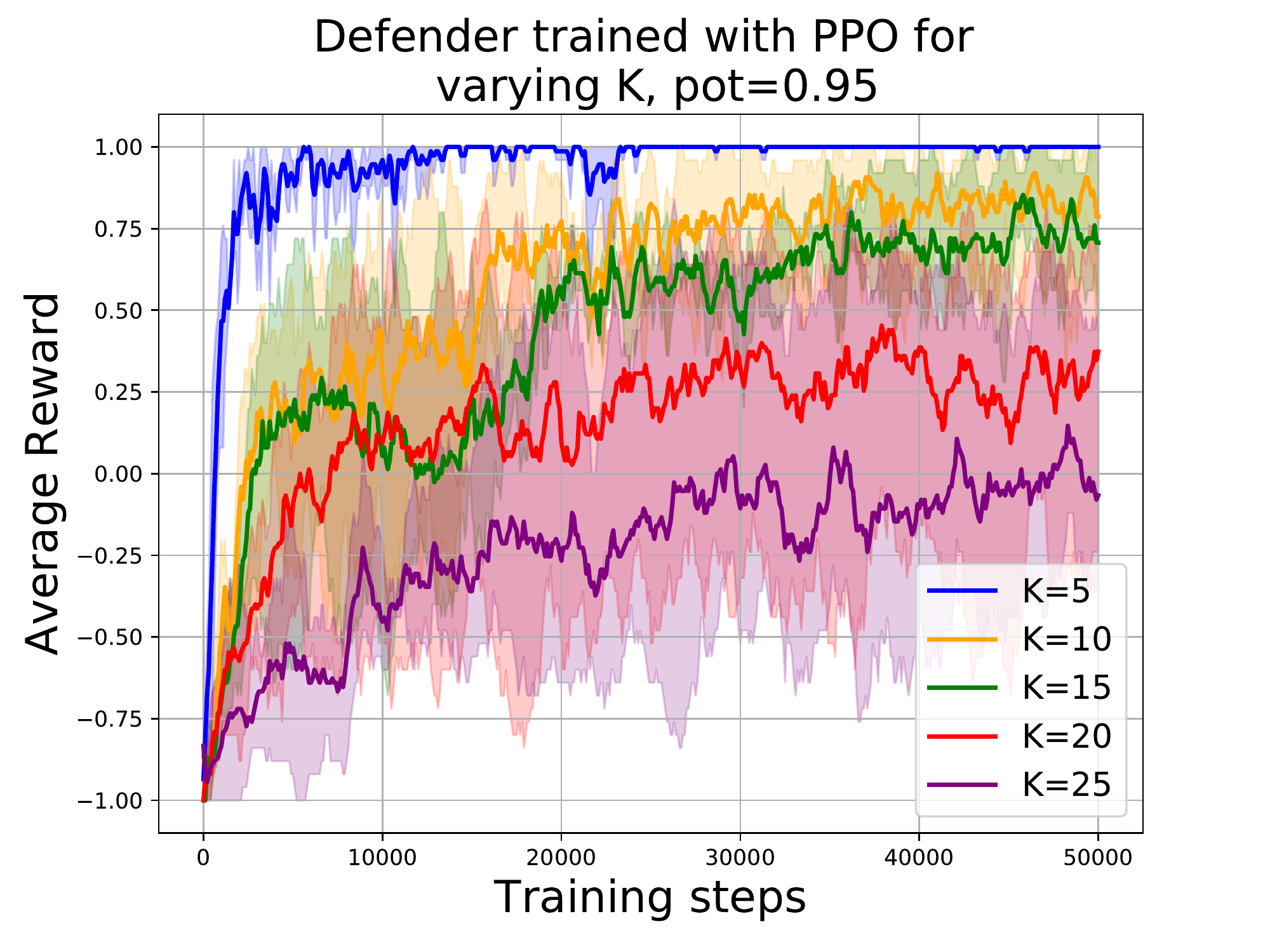}\hspace*{-20mm}
  &
  % ppo-20
  \hspace*{-15mm}\includegraphics[width=0.5\columnwidth]{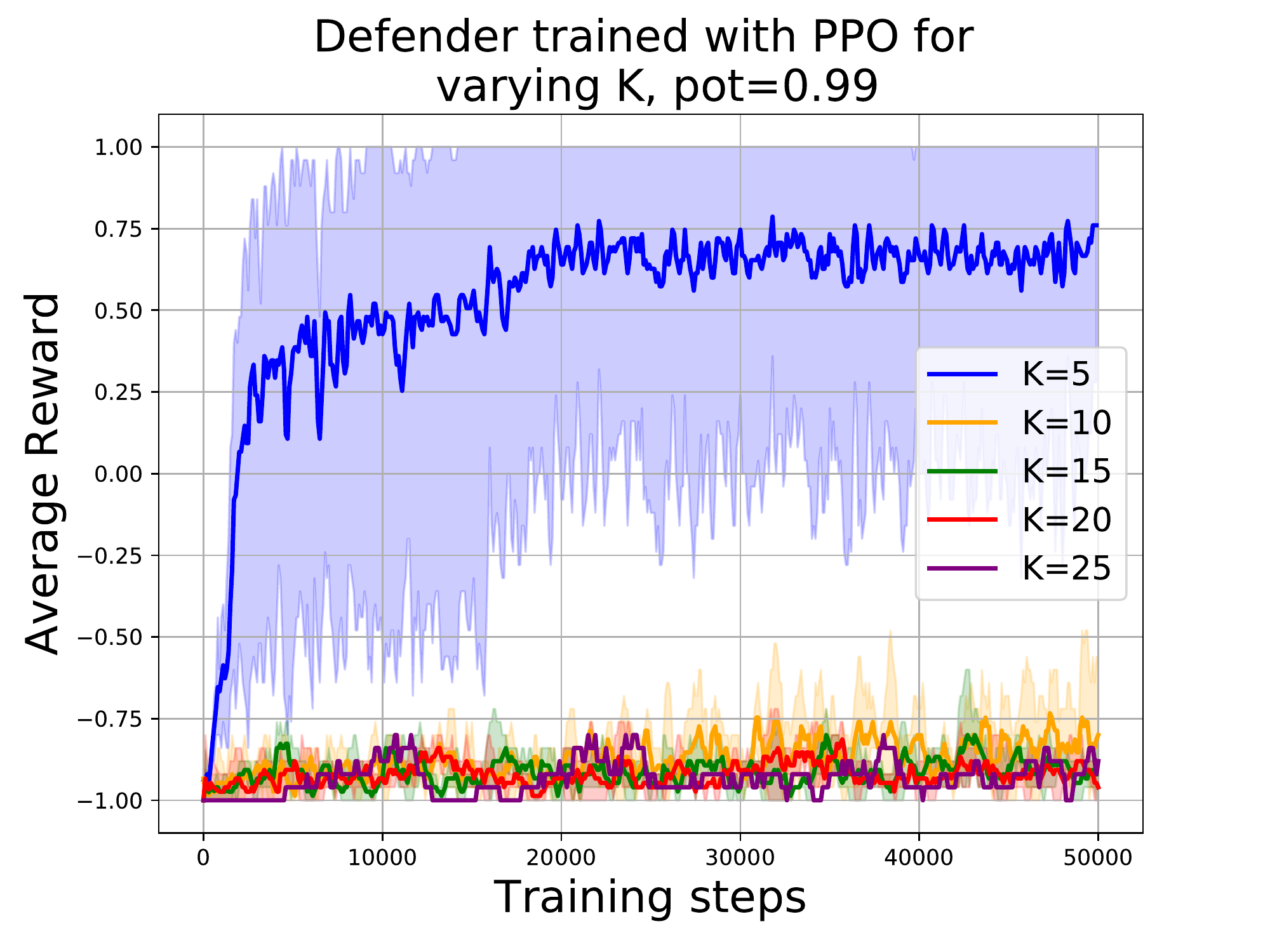}\hspace*{-9mm} 
  \\
  % a2c-15
  \hspace*{-20mm}
  \includegraphics[width=0.5\columnwidth]{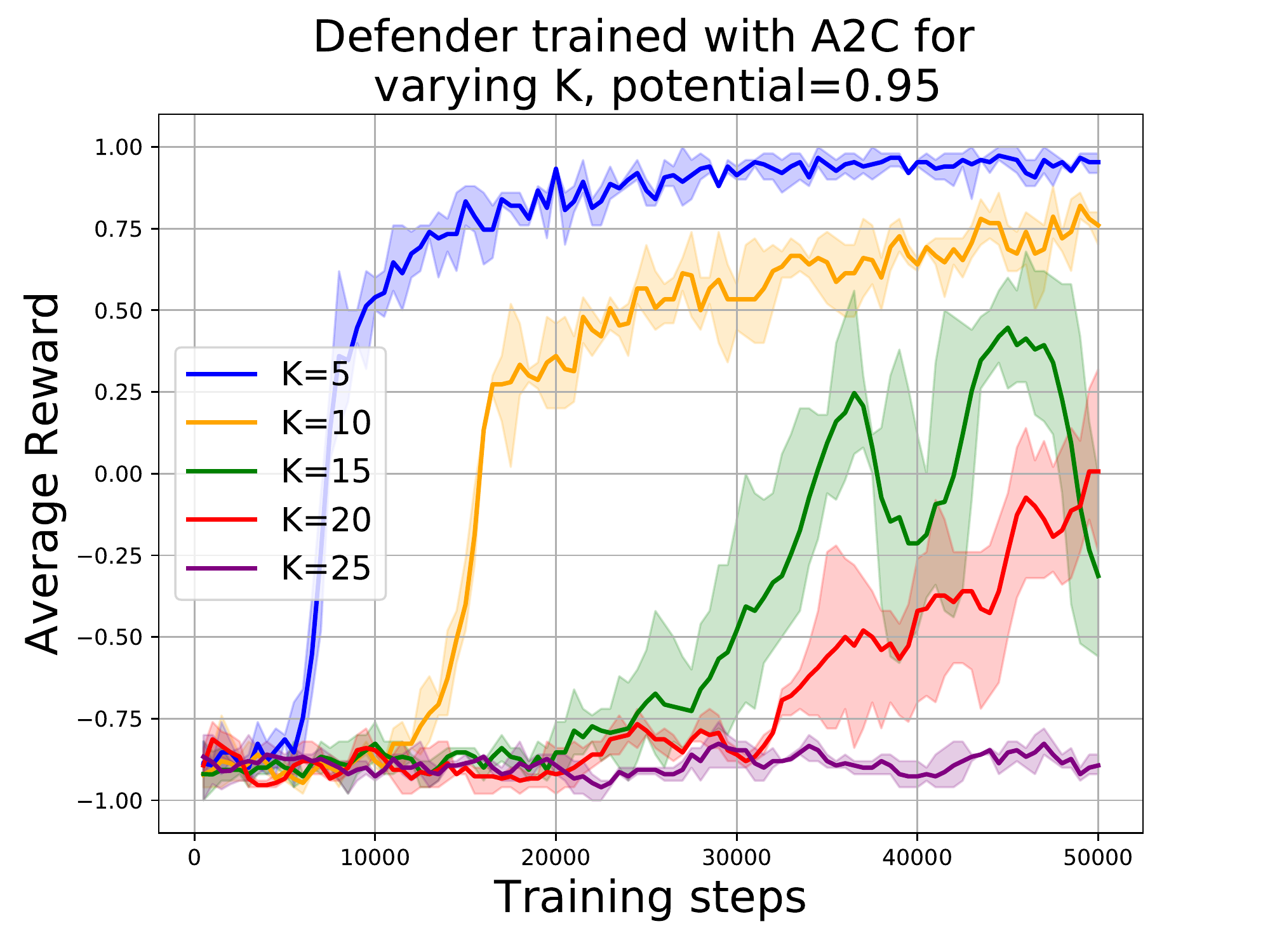}\hspace*{-20mm}
  &
  % a2c-20
    \hspace*{-15mm}
  \includegraphics[width=0.5\columnwidth]{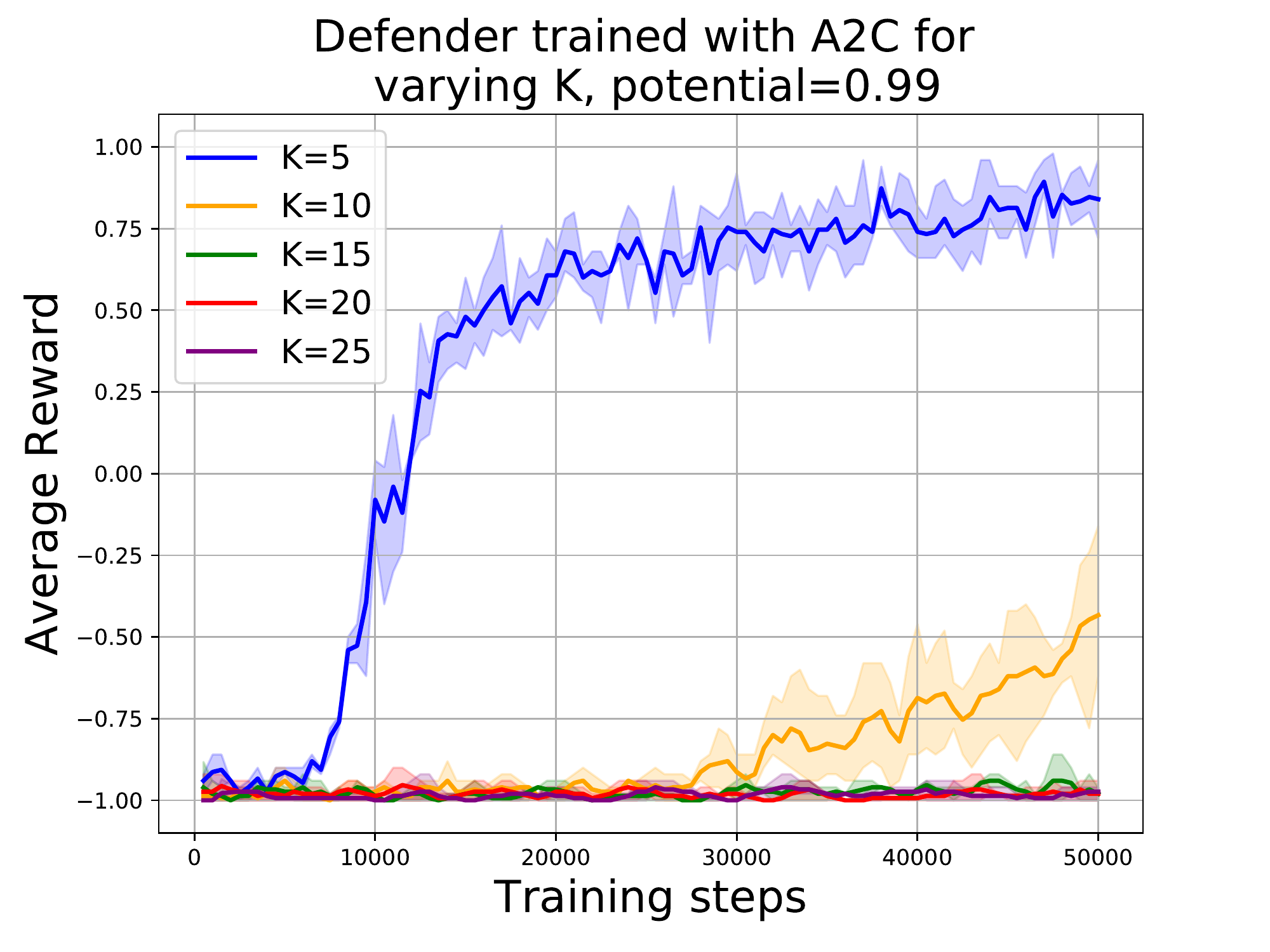}\hspace*{-9mm} 
  \\
  % dqn-15
  \includegraphics[width=0.5\columnwidth]{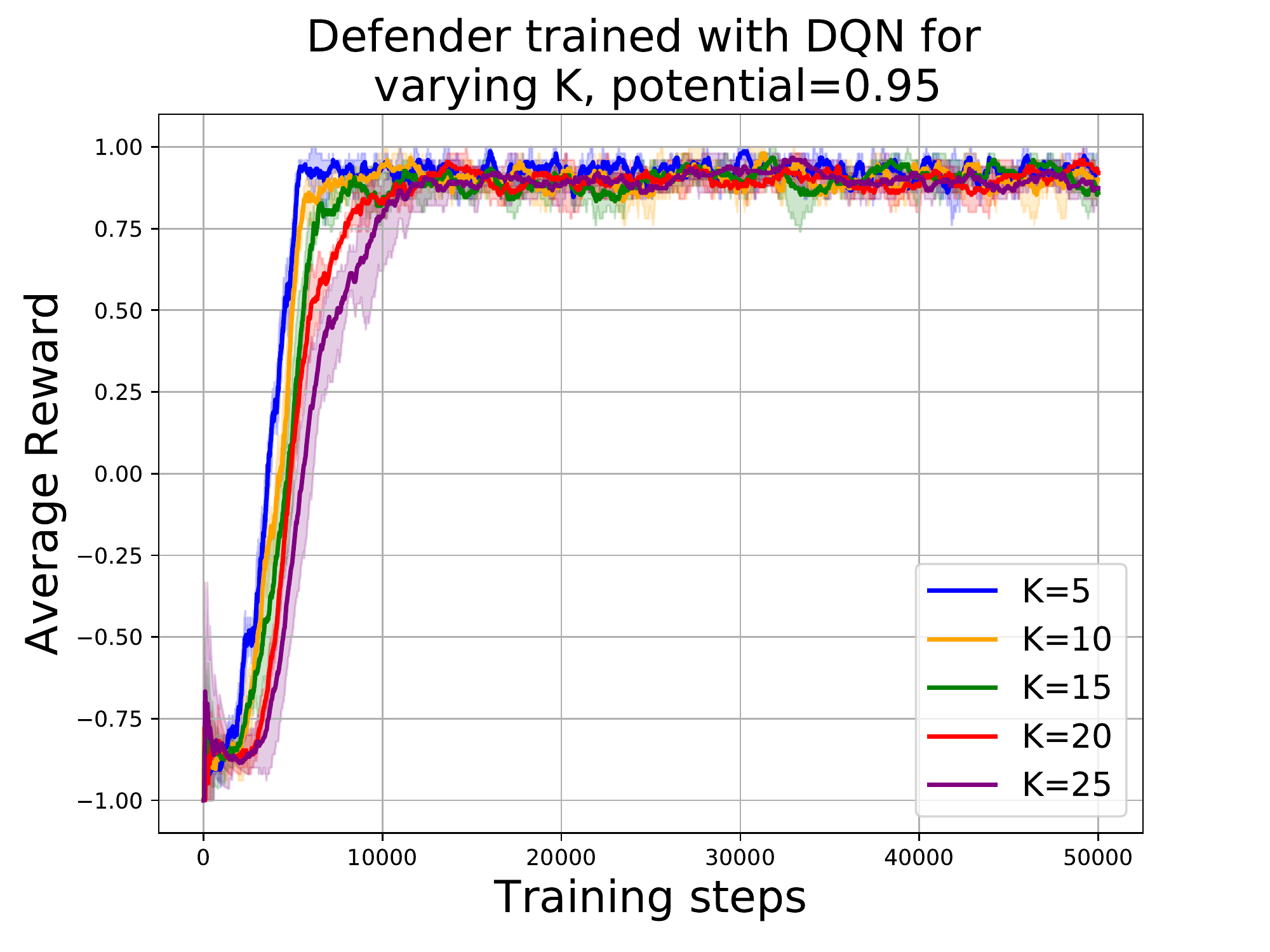}
  &
  % dqn-20
      \hspace*{-7mm}
  \includegraphics[width=0.5\columnwidth]{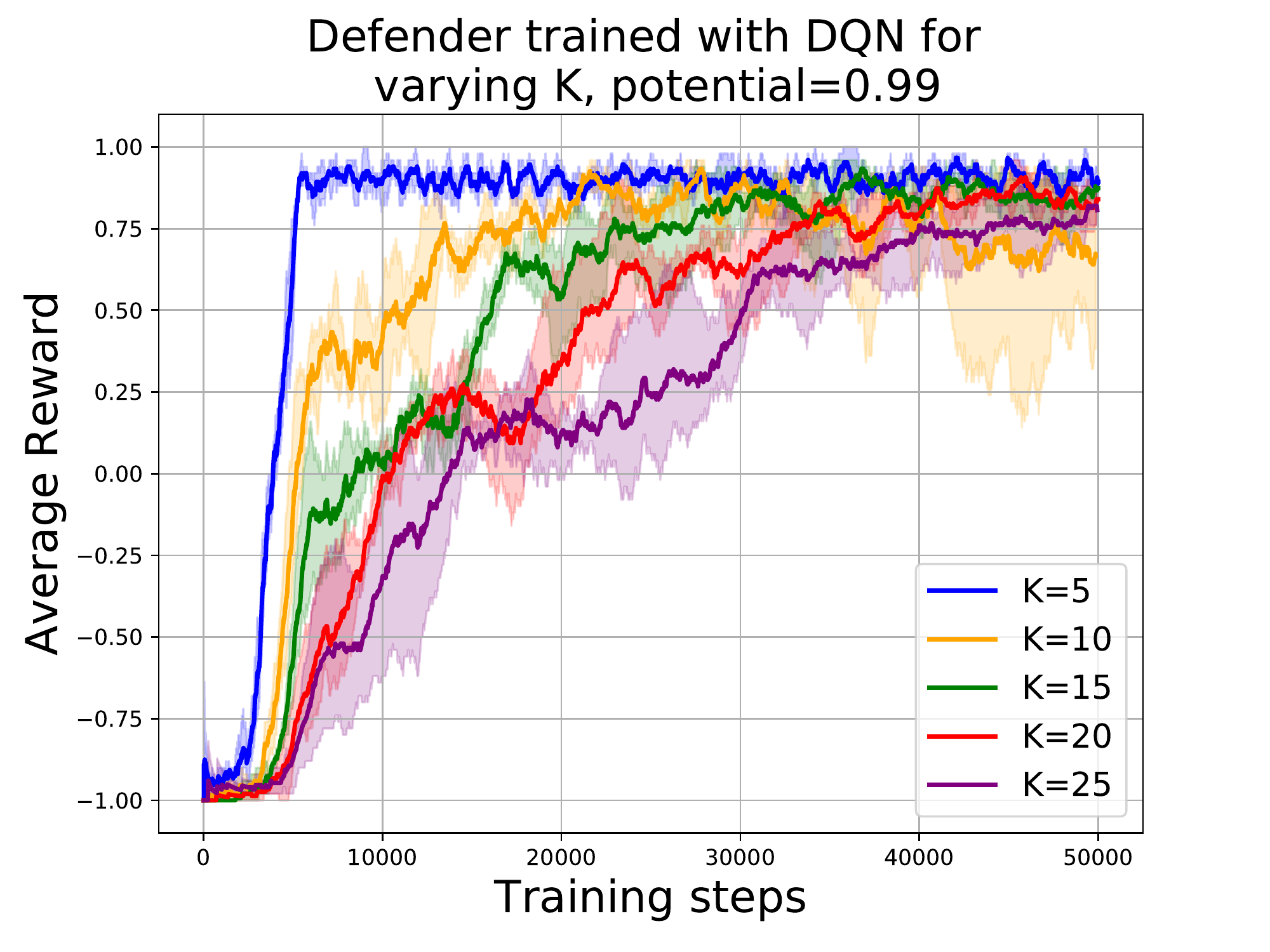}\\
  \end{tabular}
  \caption{\small Training defender agent with PPO, A2C and DQN for varying values of $K$ and two different choices of potential (left and right column) with a deep network. All three algorithms show a noticeable variation in performance over different difficulty settings. Again, we notice that DQN performs the best, with PPO doing reasonably for lower potential, but not for higher potentials. A2C tends to fare worse than both PPO and DQN.}
  \label{fig-defender-K}
\end{figure}

% \centering
%   \begin{tabular}{ccc}
%   \vspace*{-5mm}
%     \hspace*{-6mm}\includegraphics[width=0.4\columnwidth]{figures/varyKphi/PPO_Defender_VaryK_Potential095.pdf}\hspace*{-9mm}
%   &
%   \includegraphics[width=0.4\columnwidth]{figures/varyKphi/A2C_Defender_VaryK_Potential095.pdf}\hspace*{-9mm}
%   &
%   \includegraphics[width=0.4\columnwidth]{figures/varyKphi/DQN_Defender_VaryK_Potential095.pdf}\\
%   \hspace*{-6mm}\includegraphics[width=0.4\columnwidth]{figures/varyKphi/PPO_Defender_VaryK_Potential099.pdf}\hspace*{-9mm}
%   &
%   \includegraphics[width=0.4\columnwidth]{figures/varyKphi/A2C_Defender_VaryK_Potential099.pdf}\hspace*{-9mm}
%   &
%   \includegraphics[width=0.4\columnwidth]{figures/varyKphi/DQN_Defender_VaryK_Potential099.pdf}
%   \end{tabular}

\begin{figure}
\centering
\begin{tabular}{cc}
   \hspace*{-6mm} \includegraphics[width=0.55\columnwidth]{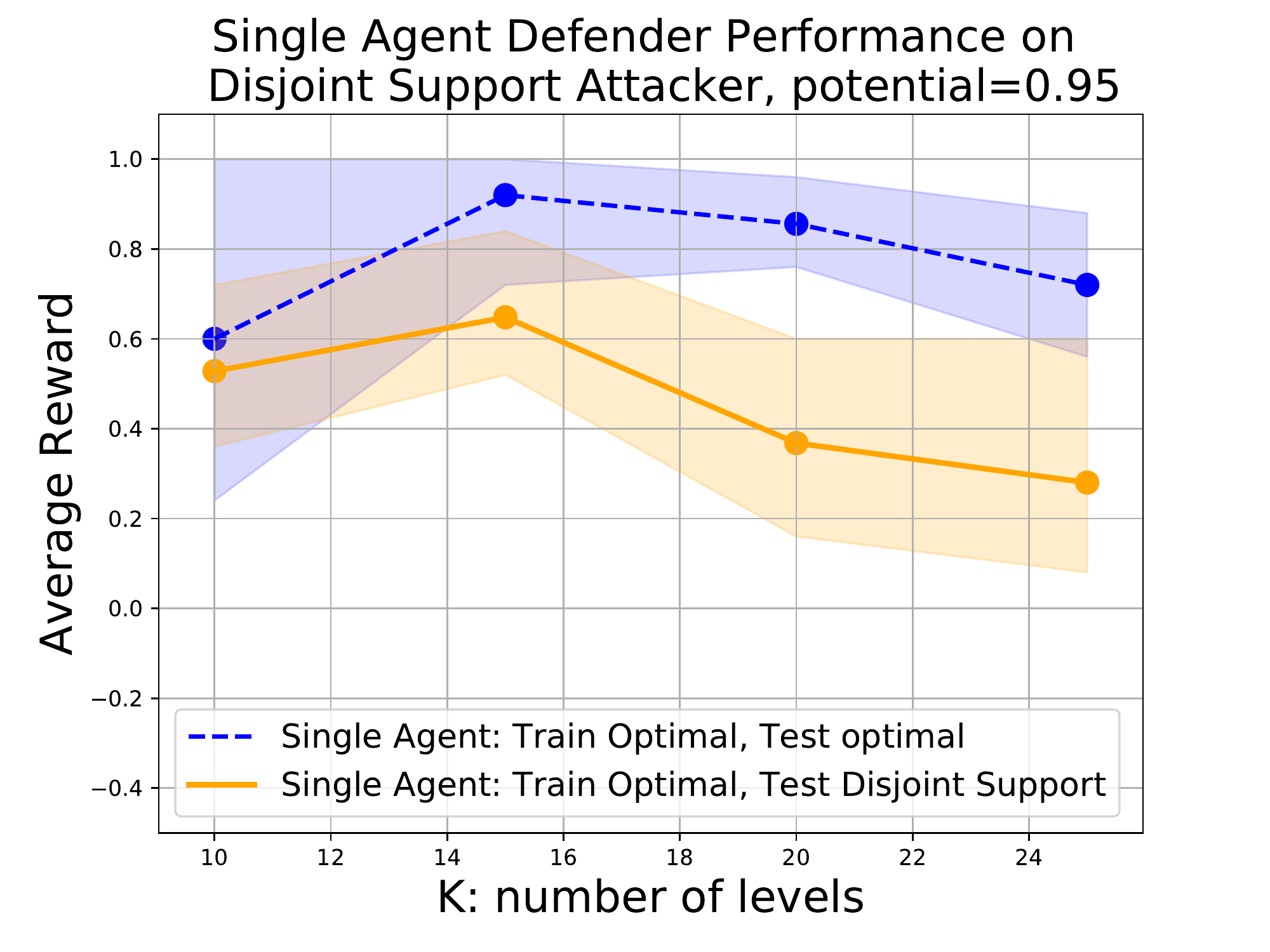}
   \hspace*{-6mm}
   &
   \includegraphics[width=0.55\columnwidth]{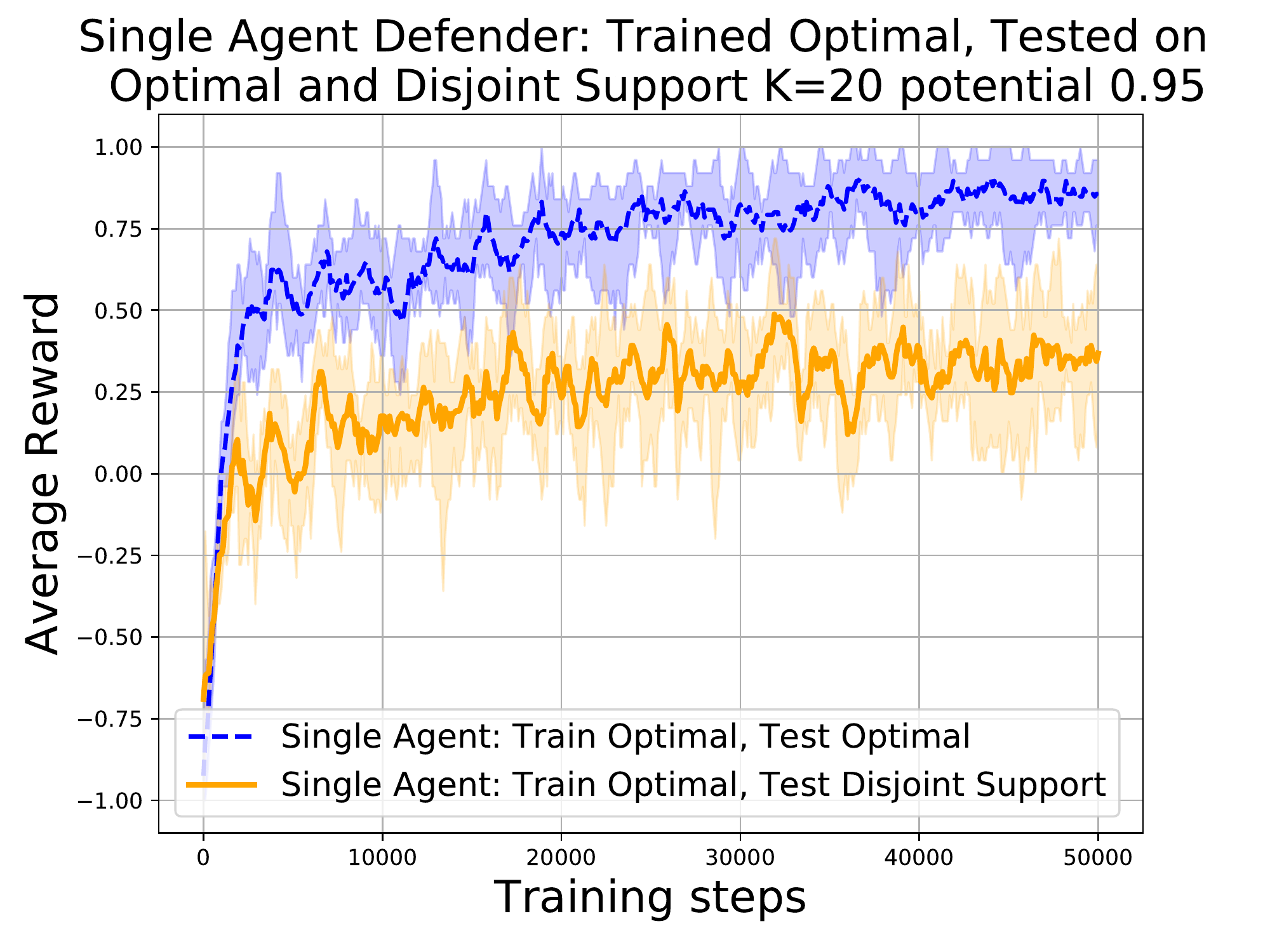}
  \end{tabular}
  \caption{\small Plot showing overfitting to opponent strategies. A defender agent is trained on the optimal attacker, and then tested on (a) another optimal attacker environment (b) the disjoint support attacker environment. The left pane shows the resulting performance drop when switching to testing on the same opponent strategy as in training to a different opponent strategy. The right pane shows the result of testing on an optimal attacker vs a disjoint support attacker during training. We see that performance on the disjoint support attacker converges to a significantly lower level than the optimal attacker.}
  \label{fig-single-agent-overfit}
\end{figure}

\section{Generalization in RL and Multiagent Learning}
\label{sec-generalization}
In the previous section we trained defender agents using popular RL algorithms, and then evaluated the performance of the trained agents on the environment. However, noting that the Attacker-Defender game has a known optimal policy that works perfectly in any game setting, we can evaluate our RL algorithms on \textit{generalization}, not just performance. 

We take a setting of parameters, start state potential $0.95$ and $K = 5, 10, 15, 20, 25$ where we have seen the RL agent perform well, and change the procedural attacker policy between train and test. In detail, we train RL agents for the defender against an attacker playing optimally, and test these agents for the defender on the disjoint support attacker. The results are shown in Figure \ref{fig-single-agent-overfit} where we can see that while performance is high, the RL agents fail to generalize against an opponent that is theoretically easier.  This leads to the natural question of how we might achieve stronger generalization in our environment.

\begin{figure}
  \centering
  \begin{tabular}{cc}
  \hspace*{-2mm} \includegraphics[width=0.5\columnwidth]{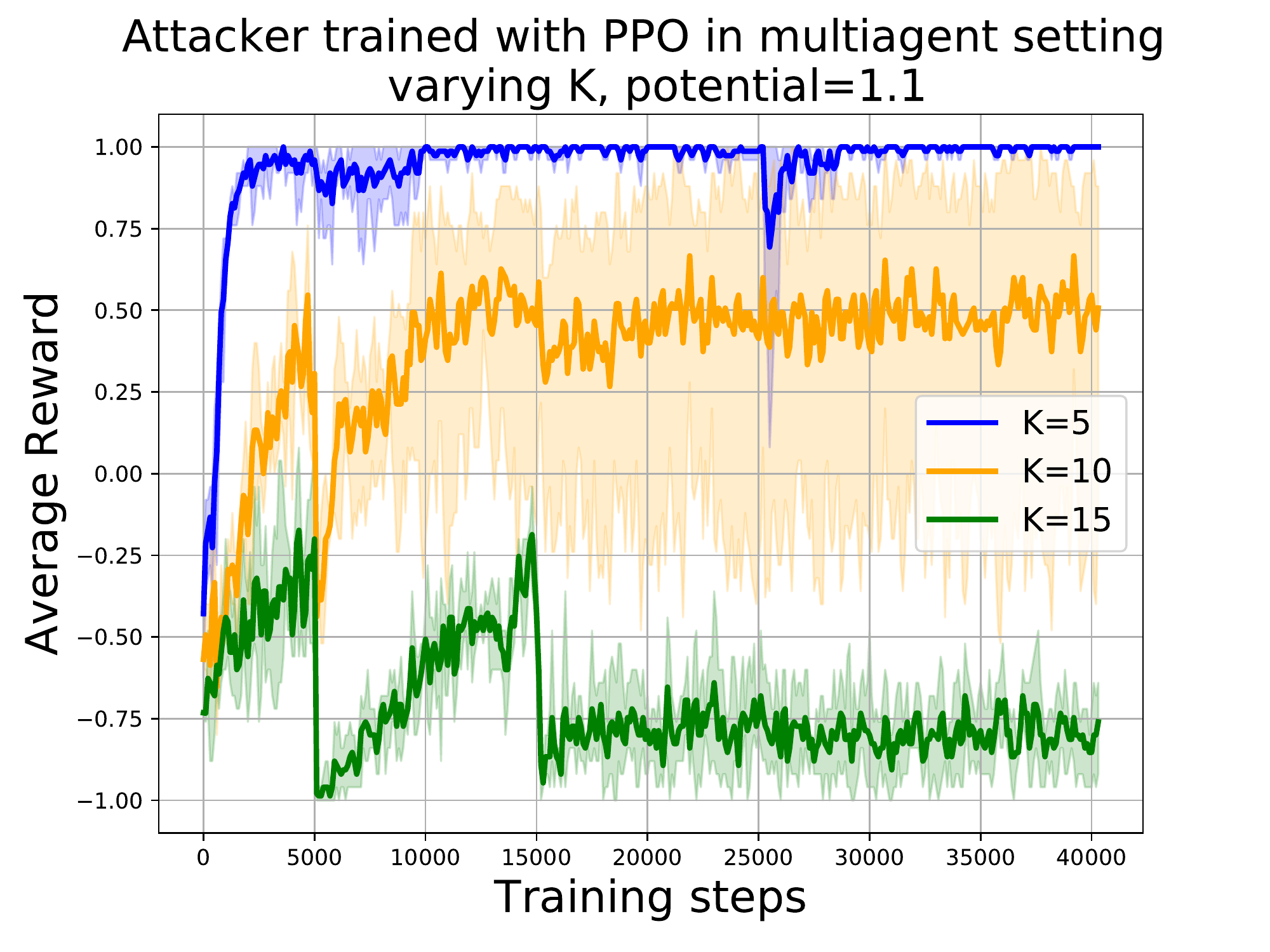} \hspace*{-5mm}
  &
  \includegraphics[width=0.5\columnwidth]{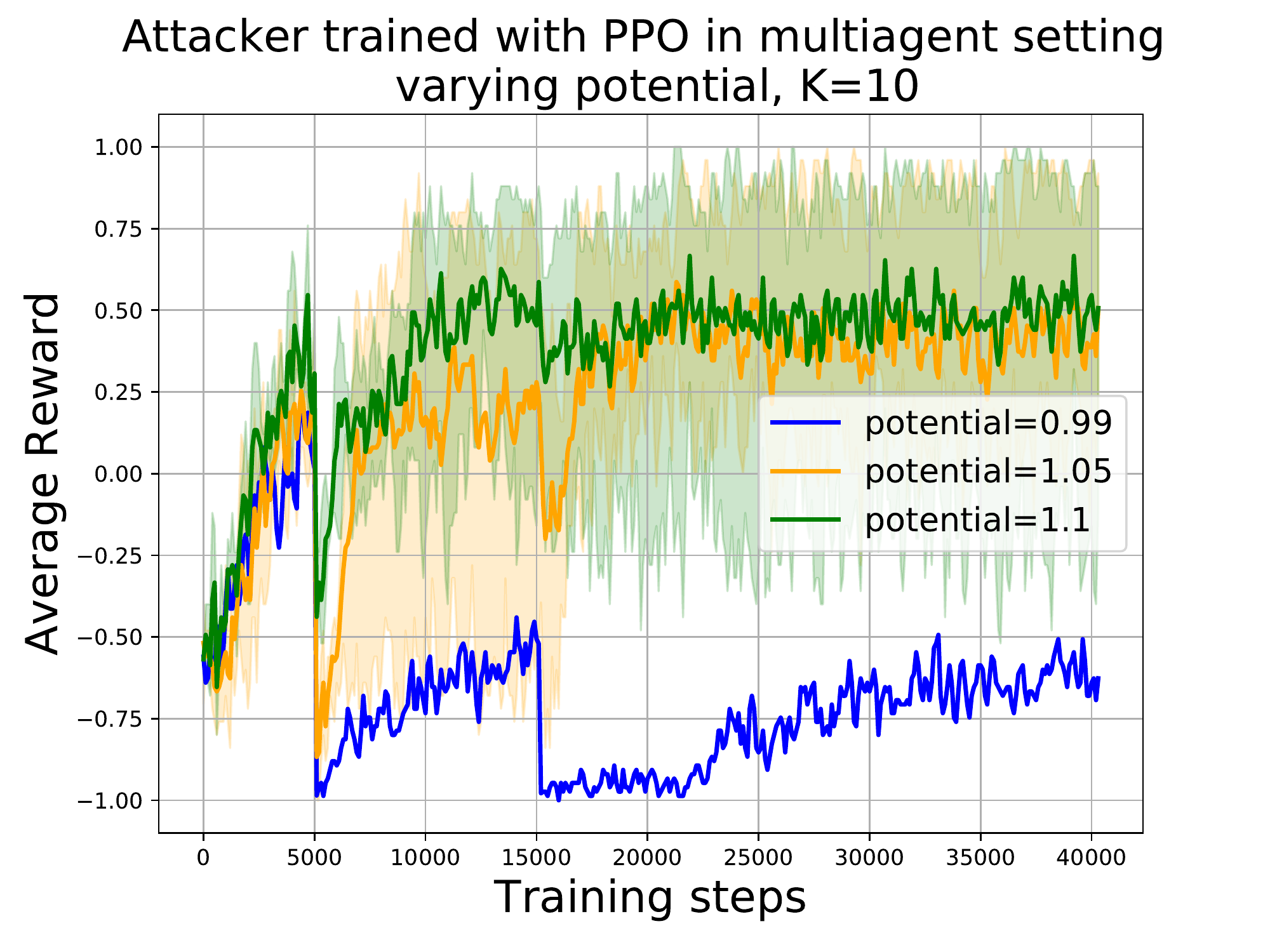}\\
  \hspace*{-2mm} \includegraphics[width=0.5\columnwidth]{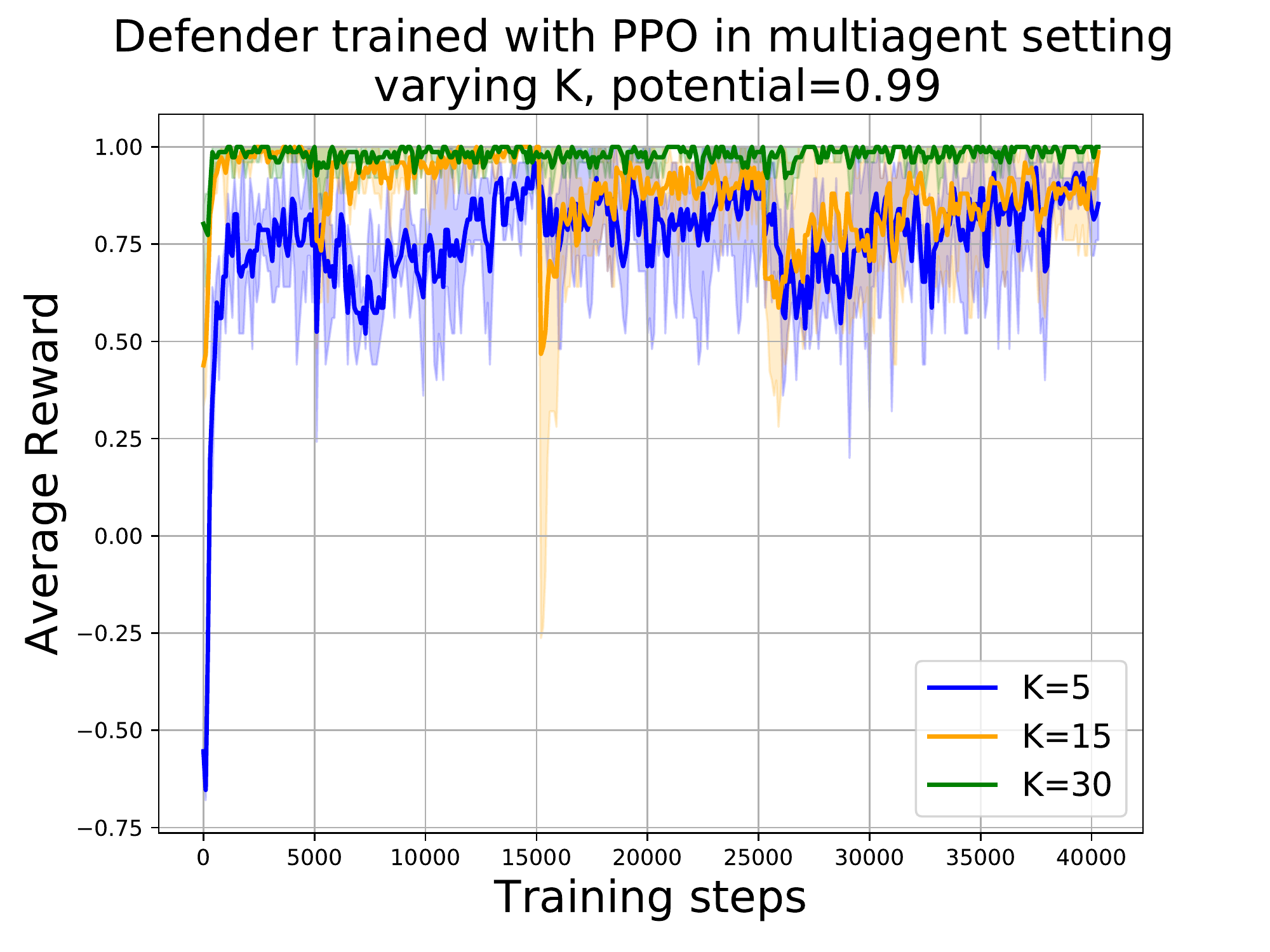} 
  \hspace*{-5mm}
  &
  \includegraphics[width=0.5\columnwidth]{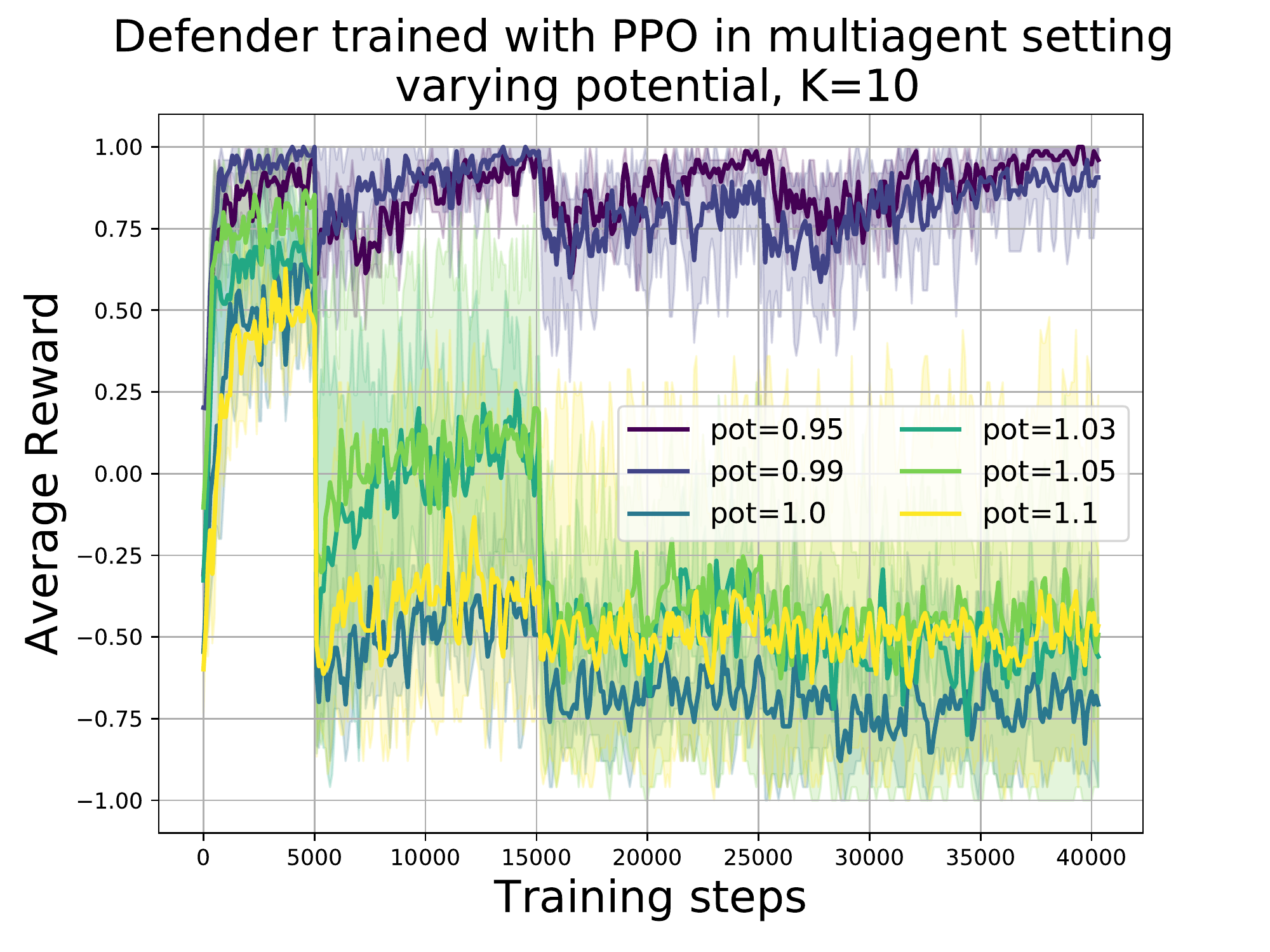} \\
  \end{tabular}
  \caption{\small Performance of attacker and defender agents when learning in a multiagent setting. In the top panes, solid lines denote attacker performance. In the bottom panes, solid lines are defender performance. The sharp changes in performance correspond
  to the times we switch which agent is training.
  %switching back to training the respective agent from training its opponent.
  We note that the defender performs much better in the multiagent setting.
  % comparing the top and bottom left panes, we see far more variance and lower performance of the attacker compared to the defender performance below. 
  Furthermore, the attacker loses to the defender for potential $1.1$ at $K=15$, despite winning against the optimal defender in Figure \ref{fig-attacker-K} in the Appendix. We also see (right panes) that the attacker has higher variance and sharper changes in its performance even under conditions when it is guaranteed to win.
  }
  \vspace*{-1.4em}
  \label{fig-ppo-multiagent}
\end{figure}

\subsection{Training an Attacker Agent}

One way to mitigate this overfitting issue is to set up a method of also training the attacker, so that we can train the defender against a learned attacker, or in a multiagent setting. However, determining the correct setup to train the attacker agent first requires devising a tractable parametrization of the action space.  A naive implementation of the attacker would be to have the policy output how many pieces should be allocated to $A$ for each of the $K+1$ levels (as in the construction from \citet{spencer1994game}). This can grow very rapidly in $K$, which is clearly impractical. 
To address this, we first prove a new theorem that enables us to parametrize an optimal attacker with a much smaller action space.
\begin{theorem}
\label{thm-prefix-attacker}
For any Attacker-Defender game with $K$ levels, start state $S_0$ and $\phi(S_0) \geq 1$, there exists a partition $A, B$ such that $\phi(A) \ge 0.5$, $\phi(B) \ge 0.5$, and for some $l$, $A$ contains pieces of level $i > l$, and $B$ contains all pieces of level $i < l$.
\end{theorem}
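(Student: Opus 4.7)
The plan is to build the partition by a greedy top-down sweep: I would add pieces to $A$ starting from the highest occupied level and moving downward, and show that with the right choice of break-level $l$ I can force $\phi(A)$ to equal exactly $1/2$. This will automatically give the prefix structure (all pieces above level $l$ in $A$, all pieces below level $l$ in $B$, with the level-$l$ pieces split) and yield $\phi(B) = \phi(S_0) - 1/2 \ge 1/2$.

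First I would introduce the cumulative top potentials $Q_l = \sum_{i \ge l} n_i \, 2^{-(K-i)}$ for $l = 0, 1, \dots, K$. Since a piece at level $K$ would already end the game in the attacker's favor, I assume $n_K = 0$, so $Q_K = 0$, while $Q_0 = \phi(S_0) \ge 1$. Because $Q_l$ is non-increasing in $l$, there is a unique $l \in \{0,\dots,K-1\}$ with $Q_{l+1} < 1/2 \le Q_l$. Writing $p_l = 2^{-(K-l)}$, I would set $m = (1/2 - Q_{l+1})/p_l$, define $A$ to consist of all pieces at levels strictly greater than $l$ together with $m$ pieces at level $l$, and put everything else in $B$. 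The prefix structure is immediate from this recipe; what remains is to verify that $m$ is an integer in $[1, n_l]$ and that $\phi(A) = 1/2$ exactly.

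The main obstacle, and the step I would handle most carefully, is the integrality of $m$: a naive greedy cutoff would only give $\phi(A) < 1/2 + p_l$, which can exceed $\phi(S_0) - 1/2$ in the delicate regime where $\phi(S_0)$ is only slightly above $1$. This is where the dyadic structure of the potential function is essential. Every piece at a level $i > l$ contributes a multiple of $p_{l+1} = 2p_l$ to $Q_{l+1}$, so $Q_{l+1}$ itself is a multiple of $p_l$; and because $l \le K-1$ we have $1/2 = 2^{K-l-1}\, p_l$, so $1/2$ is a multiple of $p_l$ as well. Hence $m$ is an integer. Positivity of $m$ follows from $Q_{l+1} < 1/2$, and $m \le n_l$ from $Q_l = Q_{l+1} + n_l p_l \ge 1/2$. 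Together these give $\phi(A) = Q_{l+1} + m p_l = 1/2$ on the nose, so $\phi(B) = \phi(S_0) - 1/2 \ge 1/2$, which completes the construction.
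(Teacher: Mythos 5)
Your proposal is correct and follows essentially the same route as the paper's proof: both locate the break level $l$ by a monotonicity argument on the cumulative top potentials (your $Q_l$ is the paper's $\phi(A_{l-1})$) and then use the dyadic structure to split level $l$ so that $\phi(A)$ is exactly $1/2$. If anything, your write-up is slightly more careful than the paper's, since you explicitly verify the integrality of $m$ (including why $1/2$ is a multiple of $2^{-(K-l)}$, which needs $n_K=0$ so that $l\le K-1$) where the paper only asserts that a suitable number of level-$l$ pieces can be moved.
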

\begin{proof}
For each $l \in \{0, 1, \ldots, K \}$, let $A_l$ be the set of all pieces from levels $K$ down to and excluding level $l$, with $A_K = \emptyset$. We have $\phi(A_{i+1}) \le \phi(A_i)$, $\phi(A_K) = 0$ and $\phi(A_0) = \phi(S_0) \geq 1$. Thus, there exists an $l$ such that $\phi(A_{l}) < 0.5$ and $\phi(A_{l-1}) \geq 0.5$. If $\phi(A_{l-1}) = 0.5$, we set $A_{l-1} = A$ and $B$ the complement, and are done. So assume $\phi(A_{l}) < 0.5$ and $\phi(A_{l-1}) > 0.5$

Since $A_{l-1}$ only contains pieces from levels $K$ to $l$, potentials $\phi(A_{l})$ and $\phi(A_{l-1})$ are both integer multiples of $2^{-(K-l)}$, the value of a piece in level $l$. Letting $\phi(A_{l}) = n \cdot 2^{-(K-l)}$ and $\phi(A_{l-1}) = m \cdot 2^{-(K-l)}$, we are guaranteed that level $l$ has $m - n$ pieces, and that we can move $k < m -n$ pieces from $A_{l-1}$ to $A_{l}$ such that the potential of the new set equals $0.5$.
\end{proof}

This theorem gives a different way of constructing and parametrizing an optimal attacker. The attacker outputs a level $l$. The environment assigns all pieces before level $l$ to $A$, all pieces after level $l$ to $B$, and splits level $l$ among $A$ and $B$ to keep the potentials of $A$ and $B$ as close as possible. Theorem \ref{thm-prefix-attacker} guarantees the optimal policy is representable, and the action space is linear instead of exponential in $K$.

With this setup, we train an attacker agent against the optimal defender with PPO, A2C, and DQN. The DQN results were very poor, and so we show results for just PPO and A2C. In both algorithms we found there was a large variation in performance when changing $K$, which now affects both reward sparsity and action space size. We observe less outright performance variability with changes in potential for small $K$ but see an increase in the variance (\autoref{fig-attacker-K} in Appendix). 

\subsection{Learning through Multiagent Play}

With this attacker training, we can now look at learning in a multiagent setting. We first explore the effects of varying the potential and $K$ as shown in Figure \ref{fig-ppo-multiagent}. Overall, we find that the attacker fares worse in multiagent play than in the single agent setting. In particular, note that in the top left pane of Figure \ref{fig-ppo-multiagent}, we see that the attacker loses to the defender even with $\phi(S_0) = 1.1$ for $K=15$. We can compare this to Figure \ref{fig-attacker-K} in the Appendix where with PPO, we see that with $K=15$, and potential $1.1$, the single agent attacker succeeds in winning against the optimal defender.

\subsection{Single Agent and Multiagent Generalization Across Opponent Strategies}
\begin{figure}
  \centering
  \includegraphics[scale=0.4]{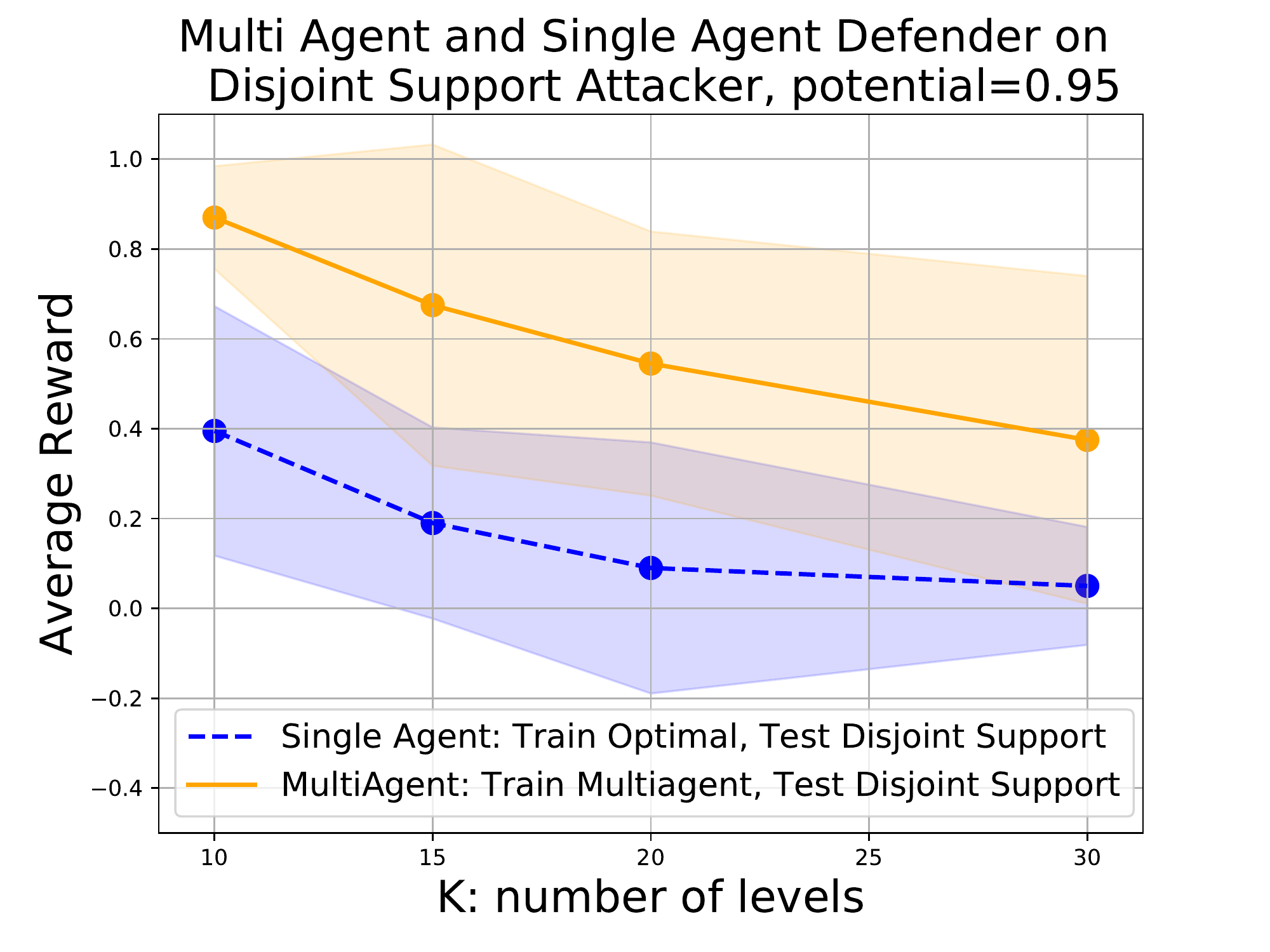}
  \caption{\small Results for generalizing to different attacker strategies with single agent defender and multiagent defender. The figure single agent defender trained on the optimal attacker and then tested on the disjoint support attacker and a multiagent defender also tested on the disjoint support attacker for different values of $K$. We see that multiagent defender generalizes better to this unseen strategy than the single agent defender.}
  %\vspace*{-1.4em}
  \label{fig-multi-agent-generalize}
\end{figure}

Finally, we return again to our defender agent, and test generalization between the single and multiagent settings. We train a defender agent in the single agent setting against the optimal attacker, and test on a an attacker that only uses the Disjoint Support strategy. We also test a defender trained in the multiagent setting (which has never seen any hardcoded strategy of this form) on the Disjoint Support attacker. The results are shown in Figure \ref{fig-multi-agent-generalize}. We find that the defender trained as part of a multiagent setting generalizes noticeably better than the single agent defender. We show the results over $8$ random seeds and plot the mean (solid line) and shade in the standard deviation.

\section{Training with Self Play}
In the previous section, we showed that with a new theoretical insight into a more efficient attacker action space paramterization (Theorem \ref{thm-prefix-attacker}), it is possible to train an attacker agent. The attacker agent was parametrized by a neural network different from the one implementing the defender, and it was trained in a multiagent fashion. In this section, we present additional theoretical insights that enables training by \textit{self-play}: using a single neural network to parametrize both the attacker and defender. 

The key insight is the following: both the defender's optimal strategy and the construction of the optimal attacker in Theorem \ref{thm-prefix-attacker} depend on a primitive operation that takes a partition of the pieces into sets $A, B$ and determines which of $A$ or $B$ is ``larger'' (in the sense that it has higher potential).  For the defender, this leads directly to a strategy that destroys the set of higher potential.  For the attacker, this primitive can be used in a binary search procedure to find the desired partition in Theorem \ref{thm-prefix-attacker}: given an initial partition $A, B$ into a prefix and a suffix of the pieces sorted by level, we determine which set has higher potential, and then recursively find a more balanced split point inside the larger of the two sets. This process is summarized in Algorithm \ref{alg_self_play}.

% Previous version:
% Note that the neural network used to parameterize the defender can be thought of as saying which one of two input sets, $A, B$ is ``larger'' (in the sense that it has higher potential). From Theorem \ref{thm-prefix-attacker}, we see that an optimal attacker must find a splitting of the pieces in play so that the proposed sets $A, B$ have potentials that are as evenly balanced as possible. To achieve this approximately even splitting algorithmically, we can use the output of the neural network parameterizing the defender, and apply binary search to find the ``split point'' in the set of pieces that achieves the best balance.  In other words, to determine an action for the attacker, we first split the pieces in play in the middle, and suggest this partition $A, B$ to the neural network to evaluate. Based on which set it thinks is larger, we suggest a new partition according to binary search, and continue this until the search converges. We summarize this in Algorithm \ref{alg_self_play}.

\begin{algorithm}[tb]
   \caption{Self Play with Binary Search}
   \label{alg_self_play}
\begin{algorithmic}
   \STATE initialize game
   \REPEAT
   \STATE Partition game pieces at center into $A, B$
   \REPEAT
    \STATE Input partition $A, B$ into neural network
    \STATE Output of neural network determines next binary search split
    \STATE Create new partition $A, B$ from this split
   \UNTIL{\textit{Binary Search Converges}}
   \STATE Input final partition $A, B$ to network
   \STATE Destroy larger set according to network output
   \UNTIL{\textit{Game Over:} use reward to update network parameters with RL algorithm}
\end{algorithmic}
\end{algorithm}

Thus, by training a single neural network designed to implement this primitive operation --- determining which of two sets has higher potential --- we can simultaneously train both an attacker and a defender that invoke this neural network. We use DQN for this purpose because we found empirically that it is the quickest among our alternatives at converging to consistent estimates of relative potentials on sets. We train both agents in this way, and test the defender agent on the same attacker used in Figures \ref{fig-defender-linear}, \ref{fig-defender-potential} and \ref{fig-defender-K}. The results in Figure \ref{fig-self-play} show that a defender trained through self play significantly outperforms defenders trained against a procedural attacker. 

\begin{figure}
\centering
\begin{tabular}{cc}
   \hspace*{-5mm} \includegraphics[width=0.55\columnwidth]{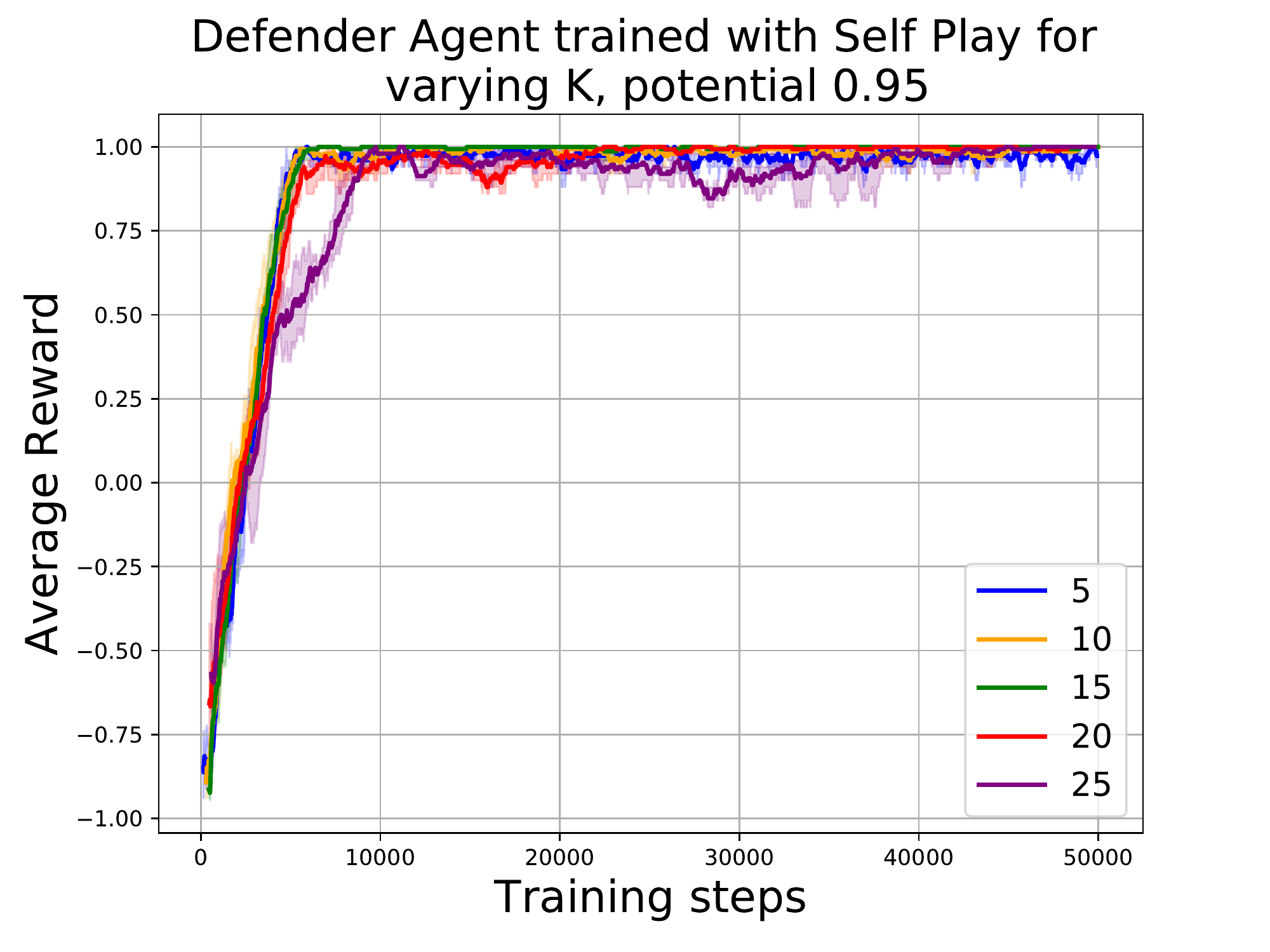}
   \hspace*{-6mm}
   &
   \includegraphics[width=0.55\columnwidth]{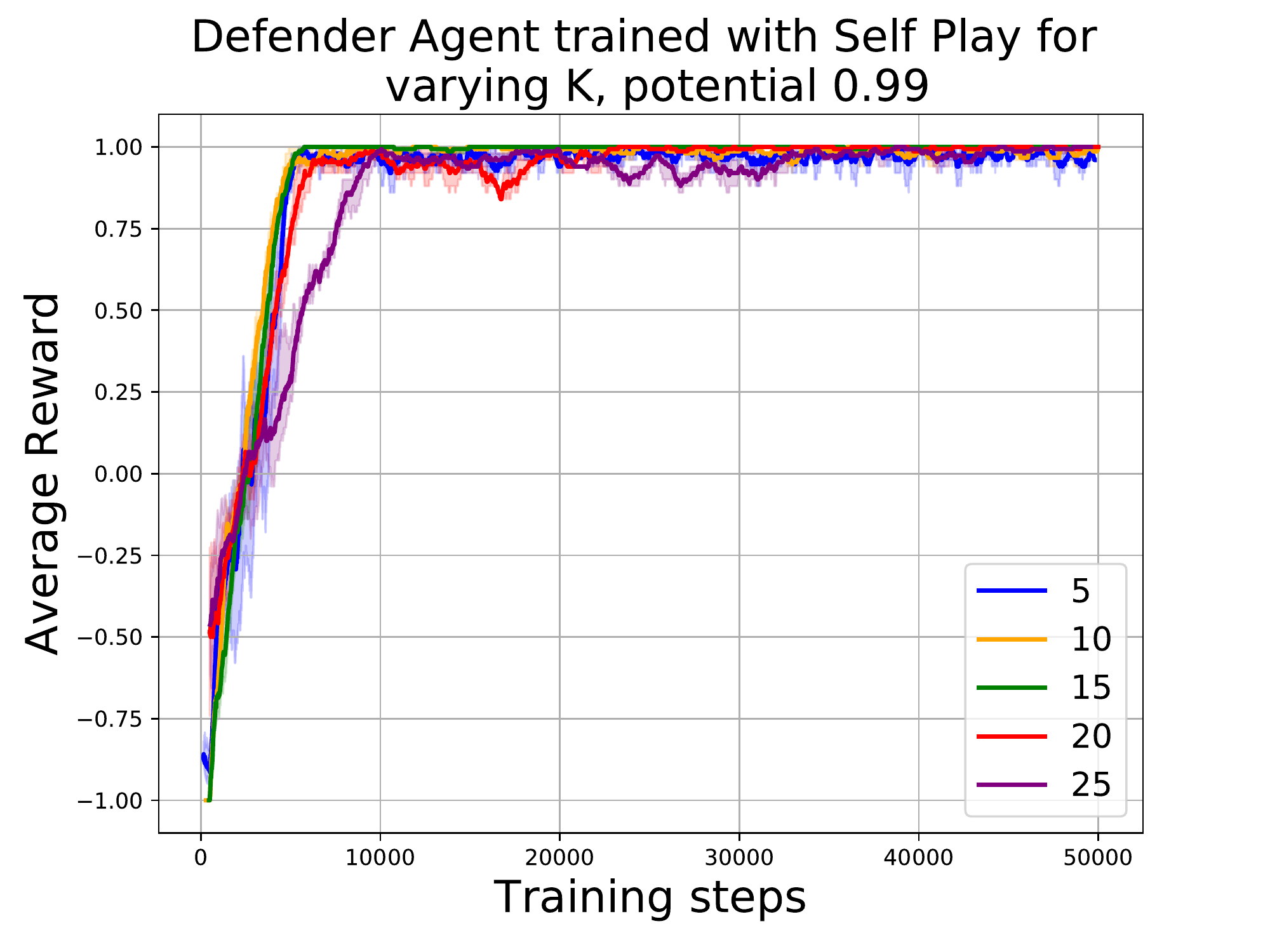}
  \end{tabular}
  \caption{\small We train an attacker and defender via self play using a DQN. The defender is implemented as in Figures \ref{fig-defender-K}, \ref{fig-defender-potential}, and the same neural network is used to train an attacker agent, performing binary search according to the Q values on partitions of the input space $A, B$. We then test the defender agent on the same procedural attacker used in Figures \ref{fig-defender-K}, \ref{fig-defender-potential}, and find that self play shows markedly improved performance.}
  \label{fig-self-play}
\end{figure}

\begin{figure}[t]
  \centering
  \begin{tabular}{cc}
   \hspace*{-5mm} \includegraphics[width=0.55\columnwidth]{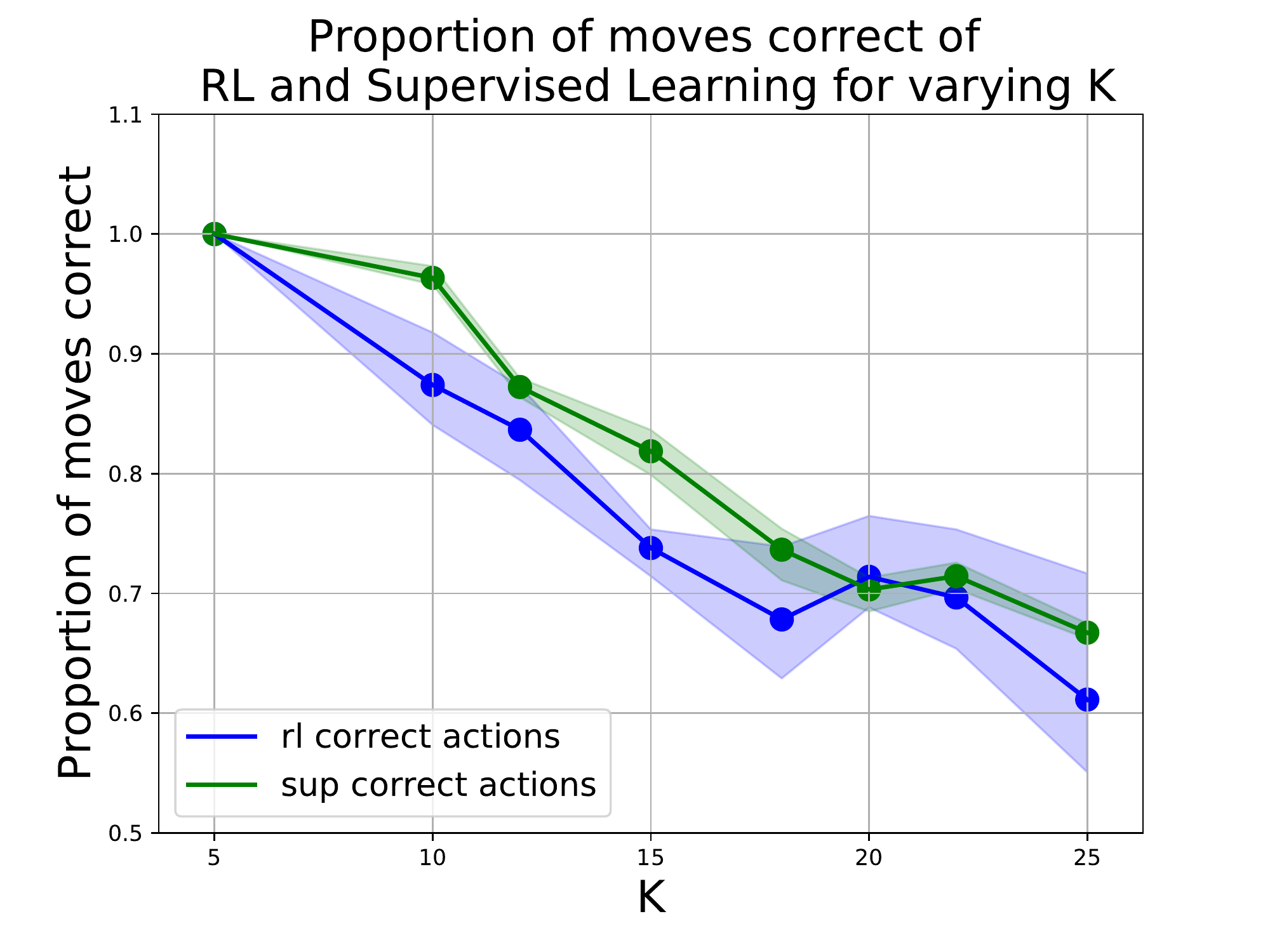}
   \hspace*{-6mm}
   &
   \includegraphics[width=0.55\columnwidth]{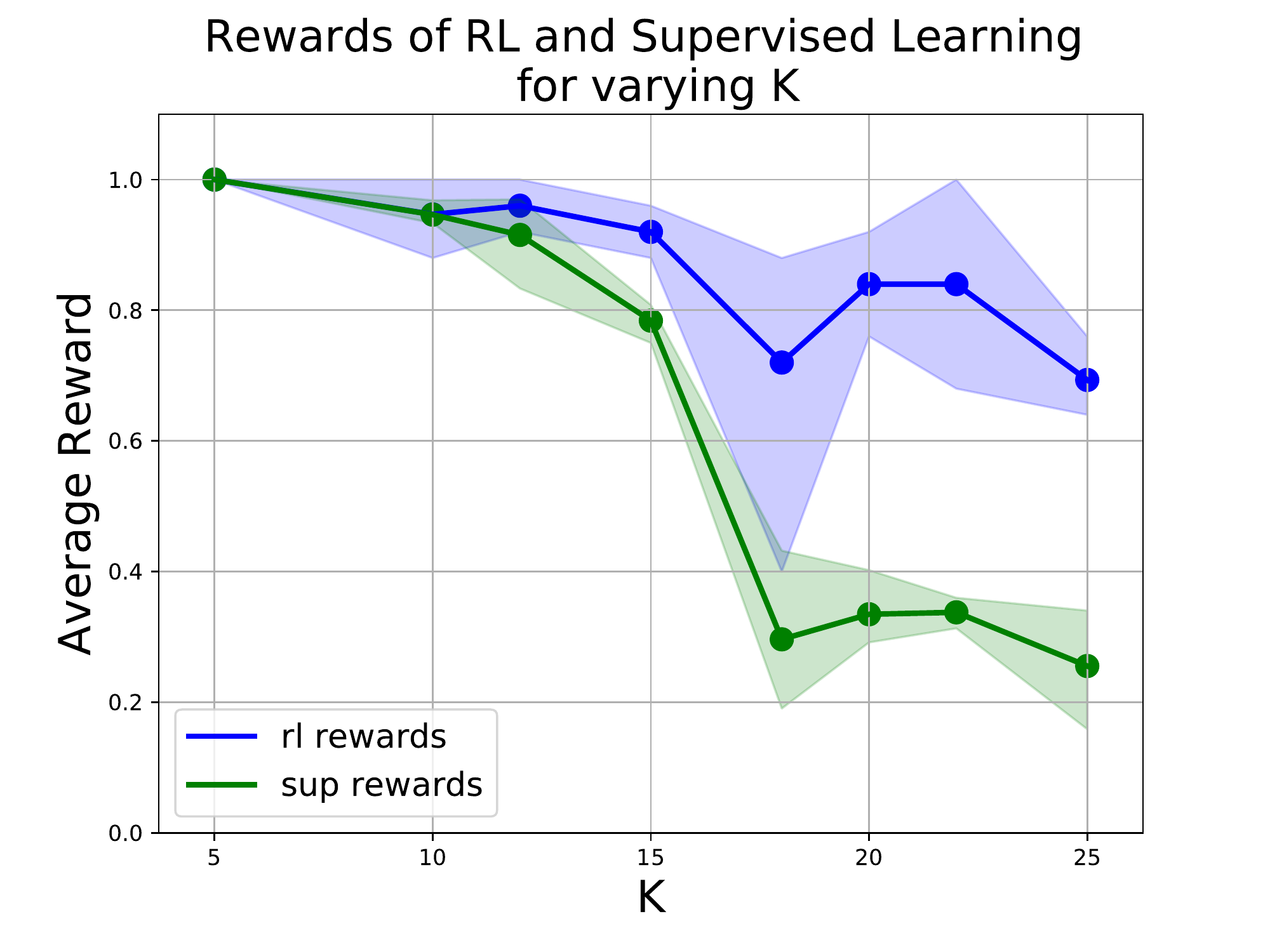}
  \end{tabular}
  \caption{\small Plots comparing the performance of Supervised Learning and RL on the Attacker Defender Game for different choices of K. The left pane shows the proportion of moves correct for supervised learning and RL (according to the ground truth). Unsurprisingly, we see that supervised learning is better on average at getting the ground truth correct move. However, RL is better at playing the game: a policy trained through RL significantly outperforms a policy trained through supervised learning (right pane), with the difference growing for larger $K$.}
  \vspace*{-1.4em}
  \label{fig-supervised}
\end{figure}

\section{Supervised Learning vs RL}
Aside from testing the generalization of RL, the Attacker-Defender game also enables us to make a comparison with Supervised Learning. The closed-form optimal policy enables an evaluation of the ground truth on a \textit{per move} basis. We can thus compare RL to a Supervised Learning setup, where we classify the correct action on a large set of sampled states. To carry out this test in practice, we first train a defender policy with reinforcement learning, saving all observations seen to a dataset. We then train a supervised network (with the same architecture as the defender policy) on this dataset to classify the optimal action. We then test the supervised network to determine how well it can play. 

We find an interesting dichotomy between the proportion of correct moves and the average reward. Unsurprisingly, supervised learning boasts a higher proportion of correct moves: if we keep count of the ground truth correct move for each turn in the game, RL has a lower proportion of correct moves compared to supervised learning (Figure \ref{fig-supervised} left pane). However, in the right pane of Figure \ref{fig-supervised}, we see that RL is better at playing the game, achieving higher average reward for all difficulty settings, and significantly beating supervised learning as $K$ grows. 

This contrast forms an interesting counterpart to recent findings of \cite{silver2017alphagozero}, who in the context of Go also compared reinforcement learning to supervised approaches.  A key distinction is that their supervised work was relative to a heuristic objective, whereas in our domain we are able to compare to provably optimal play.  This both makes it possible to rigorously define the notion of a mistake, and also to perform more fine-grained analysis as we do in the remainder of this section. 

Specifically, how is it that the RL agent is achieving higher reward in the game, if it is making more mistakes at a per-move level?  To gain further insight into this, we categorize the per-move mistakes into different types, and study them separately.  In particular, suppose the defender is presented with a partition of the pieces into two sets, where as before we assume without loss of generality that $\phi(A) \geq \phi(B)$.  We say that the defender makes a {\em terminal mistake} if $\phi(A) \geq 0.5$ while $\phi(B) < 0.5$, but the defender chooses to destroy $B$.  Note that this means the defender now faces a forced loss against optimal play, whereas it could have forced a win with optimal play had it destroyed $A$.  We also define a subset of the family of terminal mistakes as follows: we say that the defender makes a {\em fatal mistake} if $\phi(A) + \phi(B) < 1$, but $\phi(A) \geq 0.5$ and the defender chooses to destroy $B$.  Note that a fatal mistake is one that converts a position where the defender had a forced win to one where it has a forced loss.

\begin{figure}[t]
\centering
\begin{tabular}{cc}
   \hspace*{-5mm} \includegraphics[width=0.55\columnwidth]{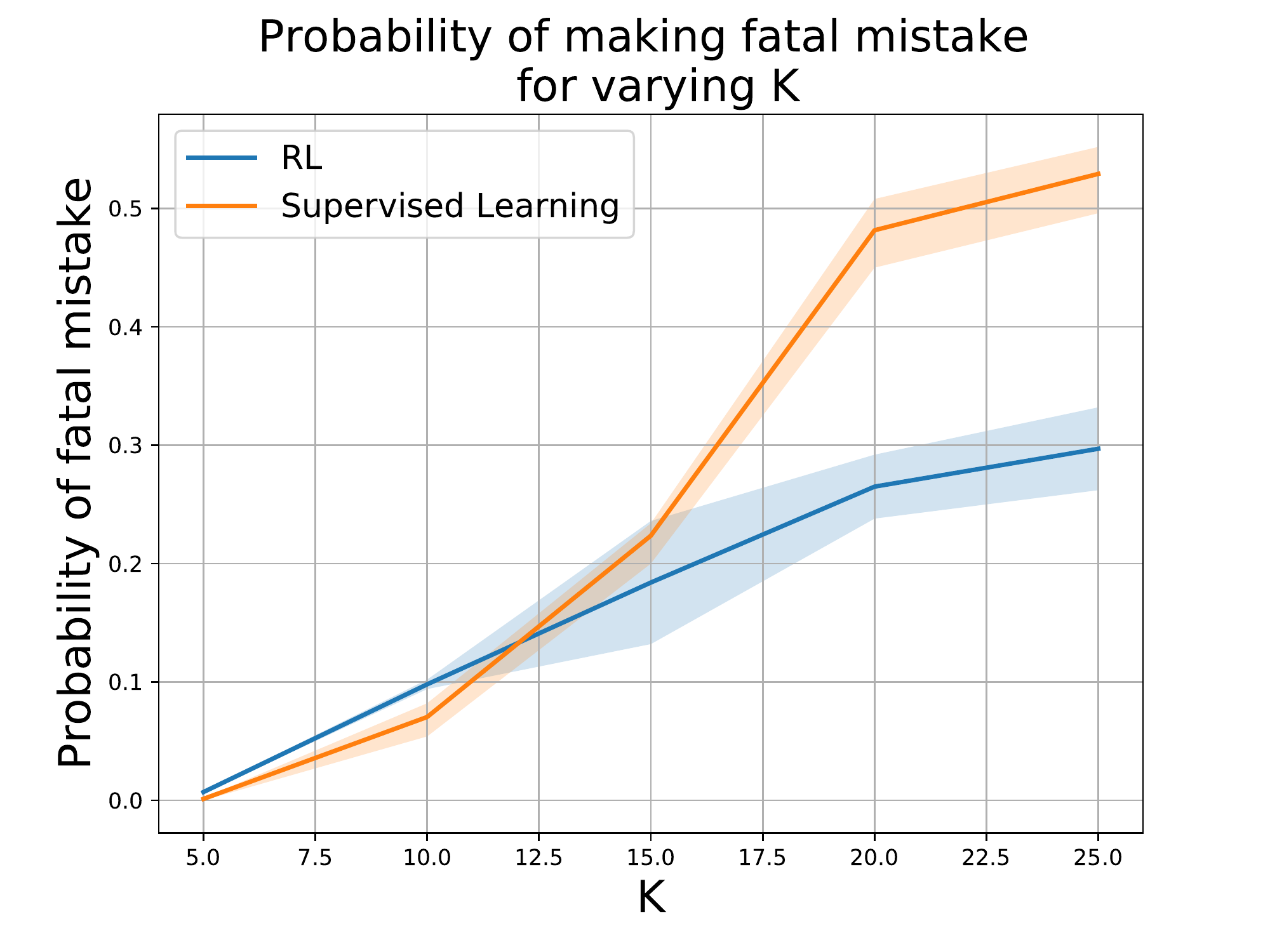}
   \hspace*{-6mm}
   &
   \includegraphics[width=0.55\columnwidth]{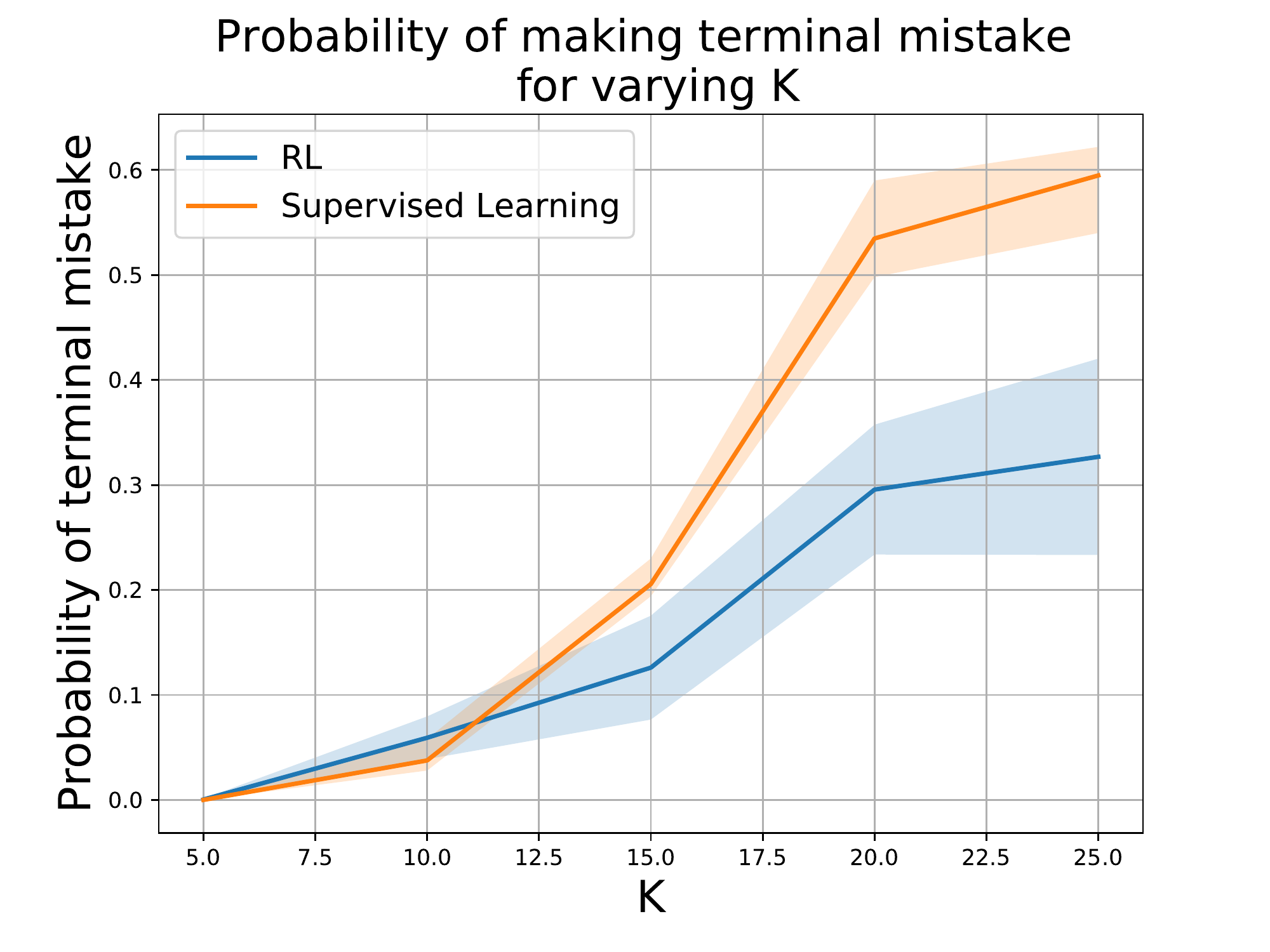}
   \end{tabular}
  \caption{\small Figure showing the frequencies of different kinds of mistakes made by supervised learning and RL that would cost the game. The left pane shows the frequencies of fatal mistakes: where the agent goes from a winning state (potential $<1$) to a losing state (potential $\geq 1$). A superset of this kind of mistake, terminal mistakes, look at where the agent makes the wrong choice (irrespective of state potential), destroying (wlog) set $A$ with $\phi(A) < 0.5$, instead of $B$, with $\phi(B) \geq 0.5$. In both cases we see that RL makes significantly fewer mistakes than supervised learning, particularly as difficulty increases.}
  \label{fig-end-game}
\end{figure}

In Figure \ref{fig-end-game}, we see that especially as $K$ gets larger, reinforcement learning makes terminal and fatal mistakes at a much lower rate than supervised learning does. This suggests a basis for the different in performance: even if supervised learning is making fewer mistakes overall, it is making more mistakes in certain well-defined consequential situations.

\section{Conclusion}
In this paper, we have proposed Erdos-Selfridge-Spencer games as rich environments for investigating reinforcement learning, exhibiting continuously tunable difficulty and an exact combinatorial characterization of optimal behavior.  We have demonstrated that algorithms can exhibit wide variation in performance as we tune the game's difficulty, and we use the characterization of optimal behavior to evaluate generalization over raw performance. We provide theoretical insights that enable multiagent play and, through binary search, self play. Finally, we compare RL and Supervised Learning, highlighting interesting tradeoffs between per move optimality, average reward and fatal mistakes. We also develop further results in the Appendix, including an analysis of catastrophic forgetting,  generalization across different values of the game's parameters, and a method for investigating measures of the model's confidence.  We believe that this family of combinatorial games can be used as a rich environment for gaining further insights into deep reinforcement learning.

\clearpage

\bibliography{iclr2018_conference}
\bibliographystyle{iclr2018_conference}

\clearpage

\part*{Appendix}

\section*{Additional Definition and Results from Section 4}
Note that for evaluating the defender RL agent, we initially use a slightly suboptimal attacker, which randomizes over playing optimally and with a \textit{disjoint support strategy}. The disjoint support strategy is a suboptimal verison (for better exploration) of the prefix attacker described in Theorem \ref{thm-prefix-attacker}. Instead of finding the partition that results in sets $A, B$ as equal as possible, the disjoint support attacker greedily picks $A, B$ so that there is a potential difference between the two sets, with the fraction of total potential for the smaller set being uniformly sampled from $[0.1, 0.2, 0.3, 0.4]$ at each turn. This exposes the defender agent to sets of uneven potential, and helps it develop a more generalizable strategy.

To train our defender agent, we use a fully connected deep neural network with $2$ hidden layers of width $300$ to represent our policy. We decided on these hyperparameters after some experimentation with different depths and widths, where we found that network width did not have a significant effect on performance, but, as seen in Section 4, slightly deeper models ($1$ or $2$ hidden layers) performed
noticeably better than shallow networks.

\section*{Additional Results from Section \ref{sec-generalization}}
Here in Figure \ref{fig-attacker-K} we show results of training the attacker agent using the alternative parametrization given with Theorem \ref{thm-prefix-attacker} with PPO and A2C (DQN results were very poor and have been omitted.)  

\begin{figure}
  \centering
  \begin{tabular}{cc}
  % ppo-15
  \vspace*{-5mm}
   \hspace*{-20mm}\includegraphics[width=0.5\columnwidth]{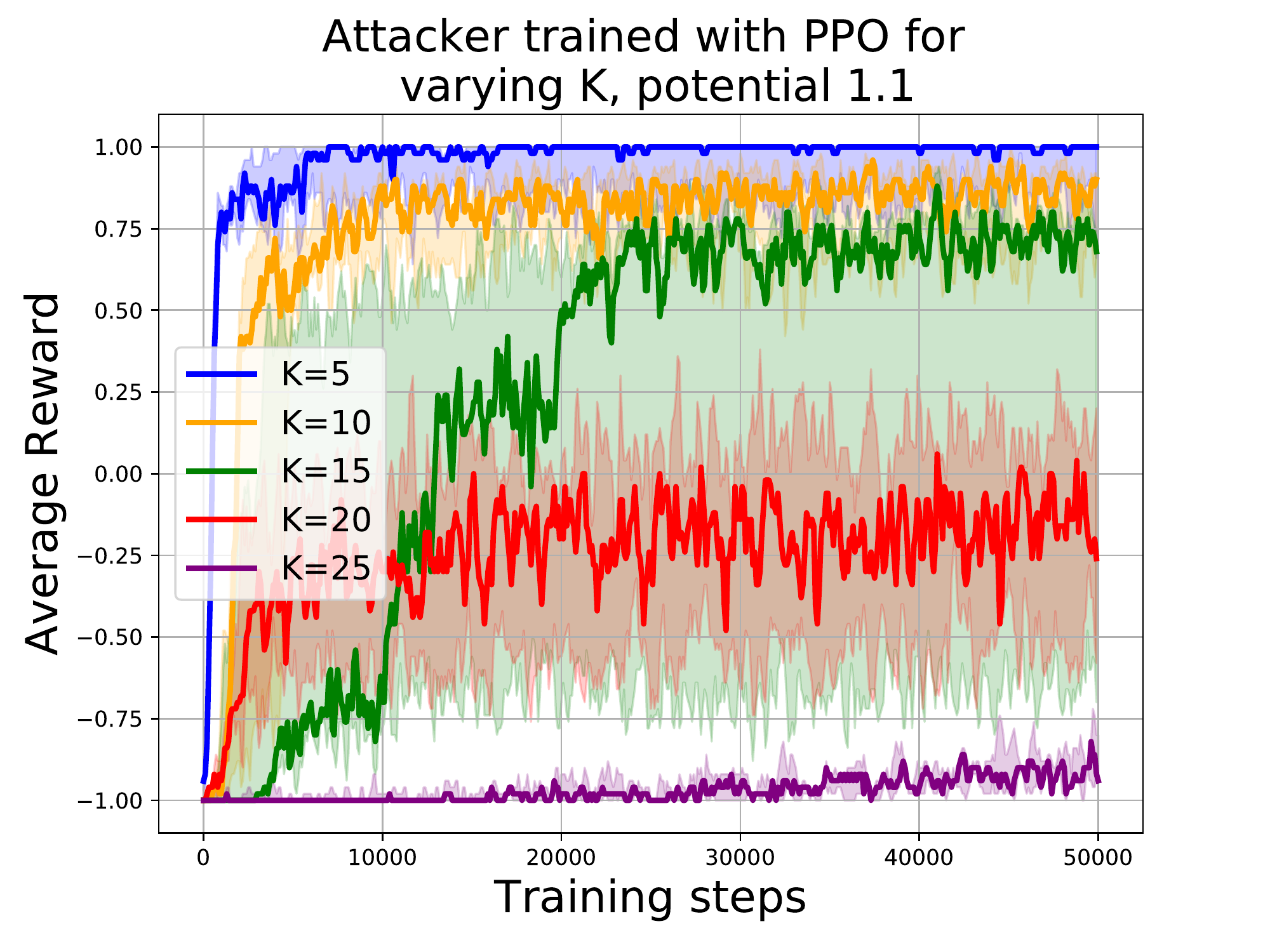}\hspace*{-20mm}
  &
  % ppo-20
  \hspace*{-15mm}\includegraphics[width=0.5\columnwidth]{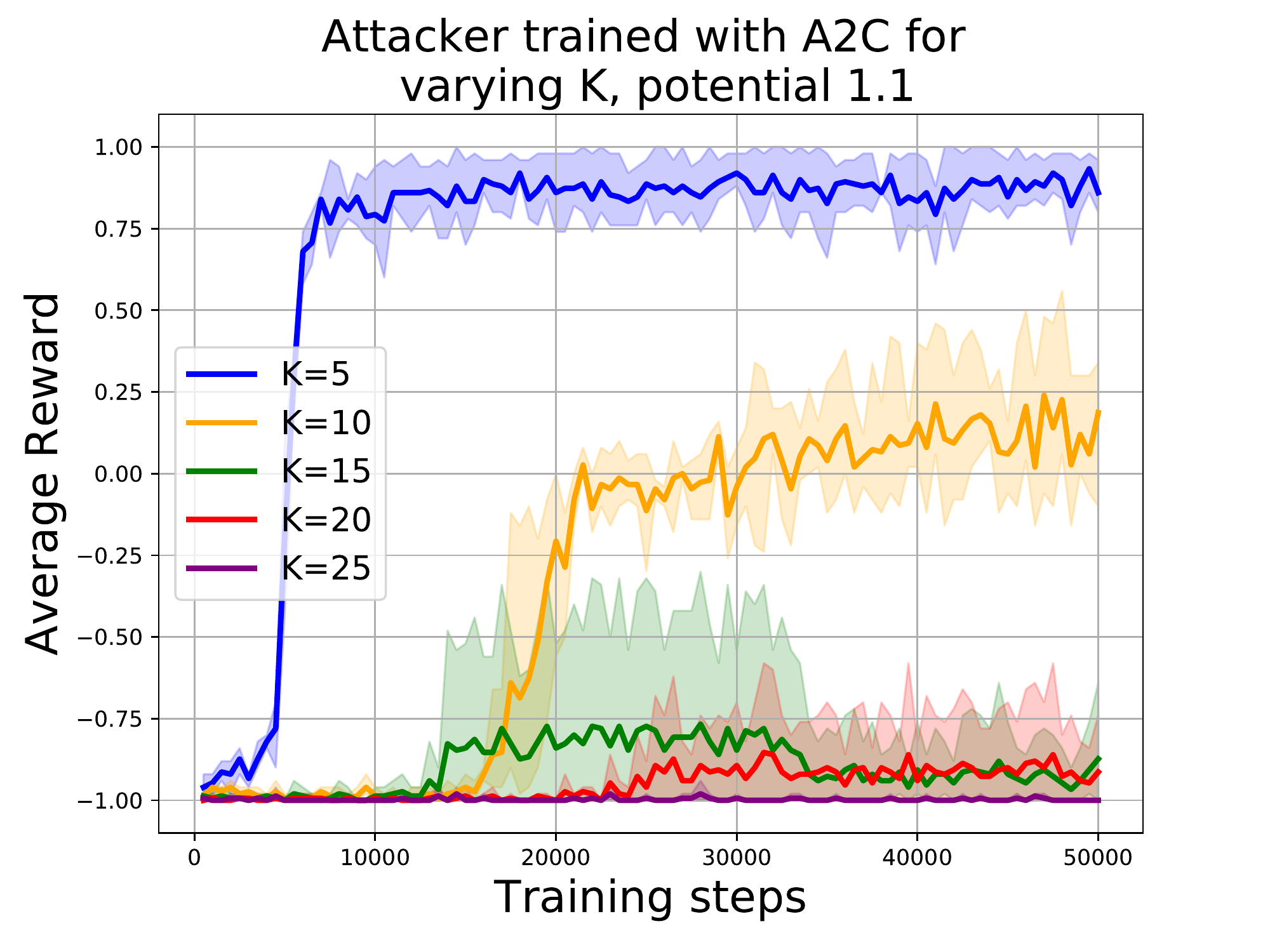}\hspace*{-9mm} 
  \\
  % a2c-15
  \hspace*{-20mm}
  \includegraphics[width=0.5\columnwidth]{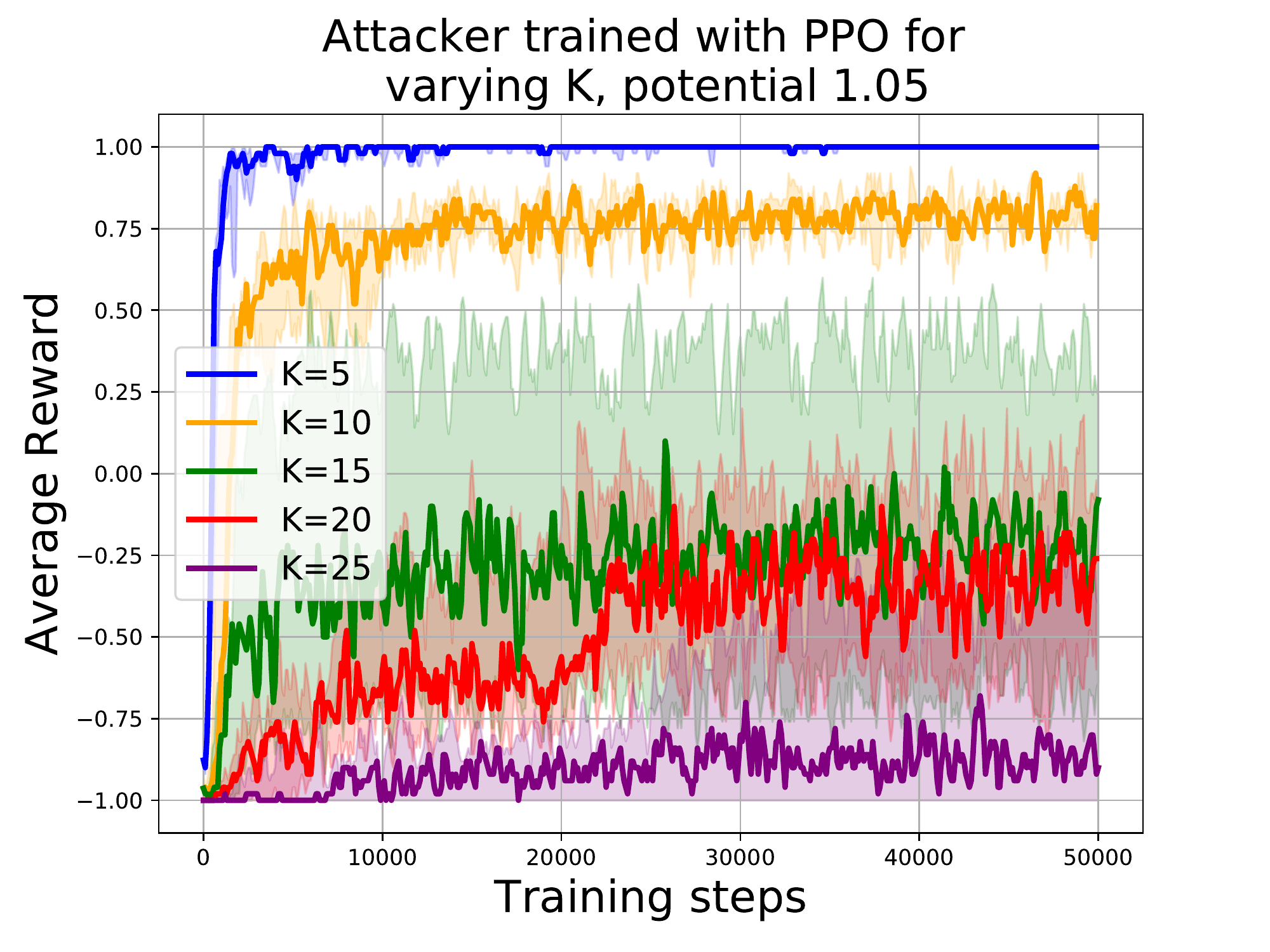}\hspace*{-20mm}
  &
  % a2c-20
    \hspace*{-15mm}
  \includegraphics[width=0.5\columnwidth]{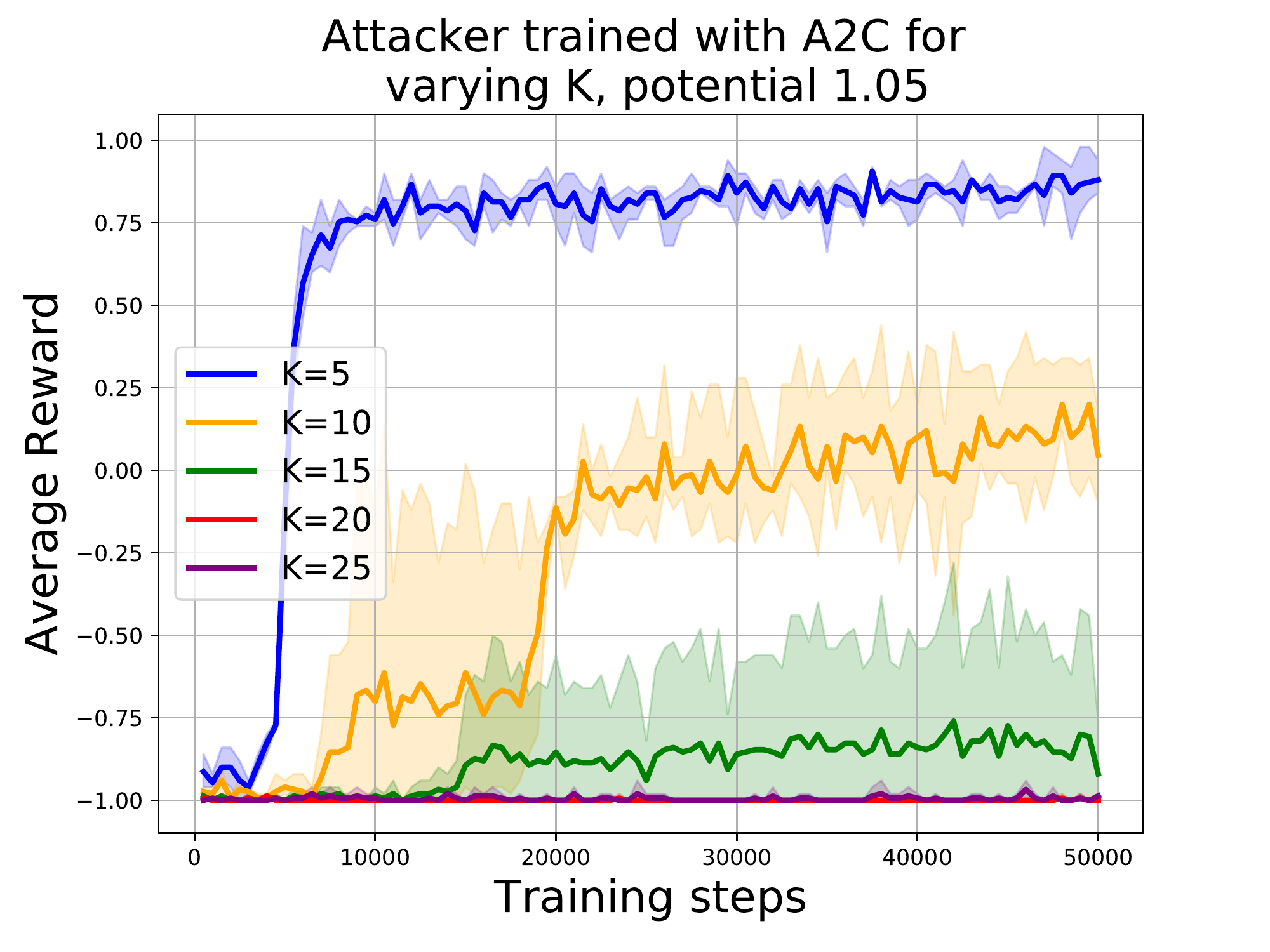}\hspace*{-9mm} 
  \\
  % dqn-15
  \includegraphics[width=0.5\columnwidth]{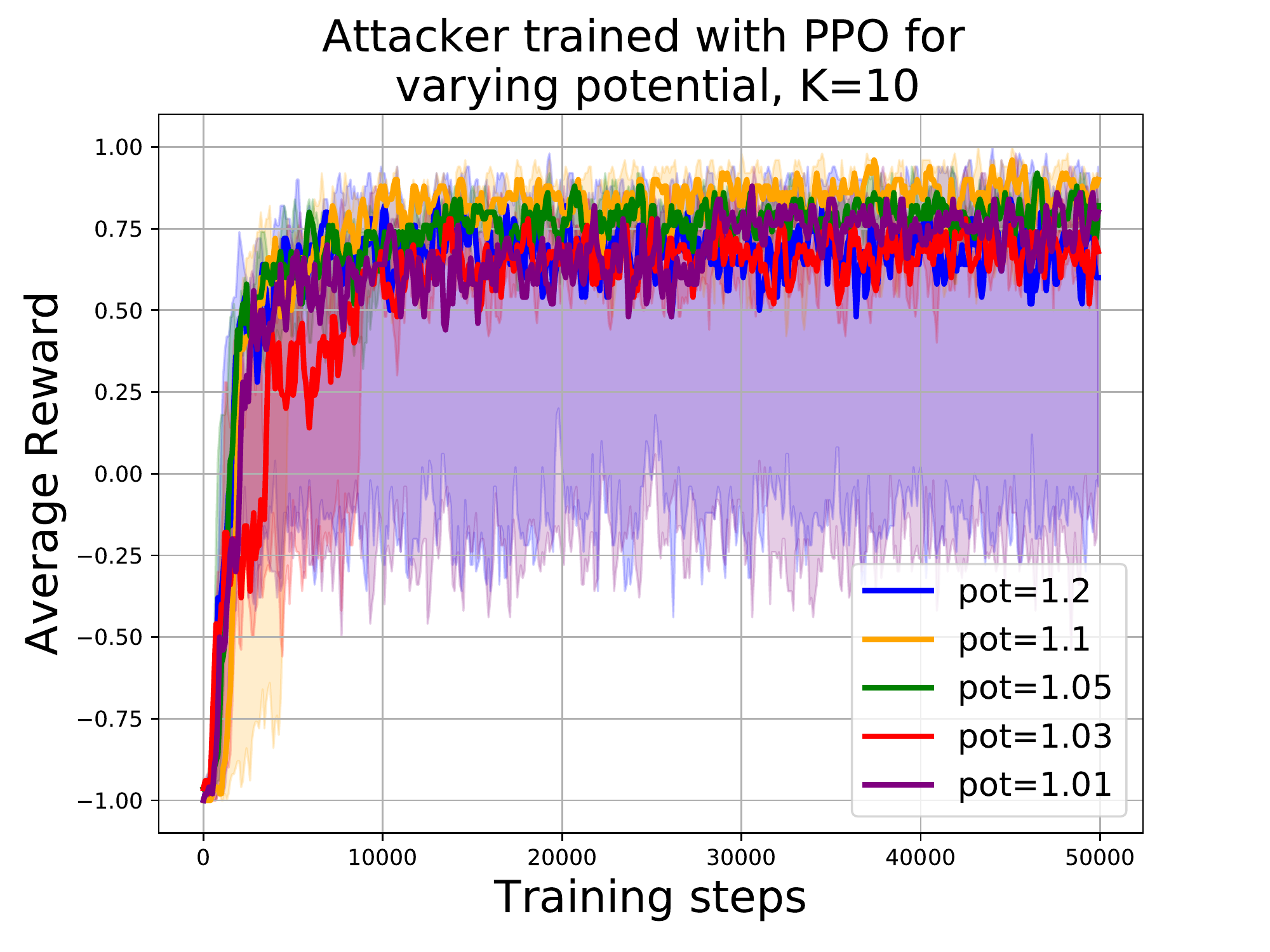}
  &
  % dqn-20
      \hspace*{-7mm}
  \includegraphics[width=0.5\columnwidth]{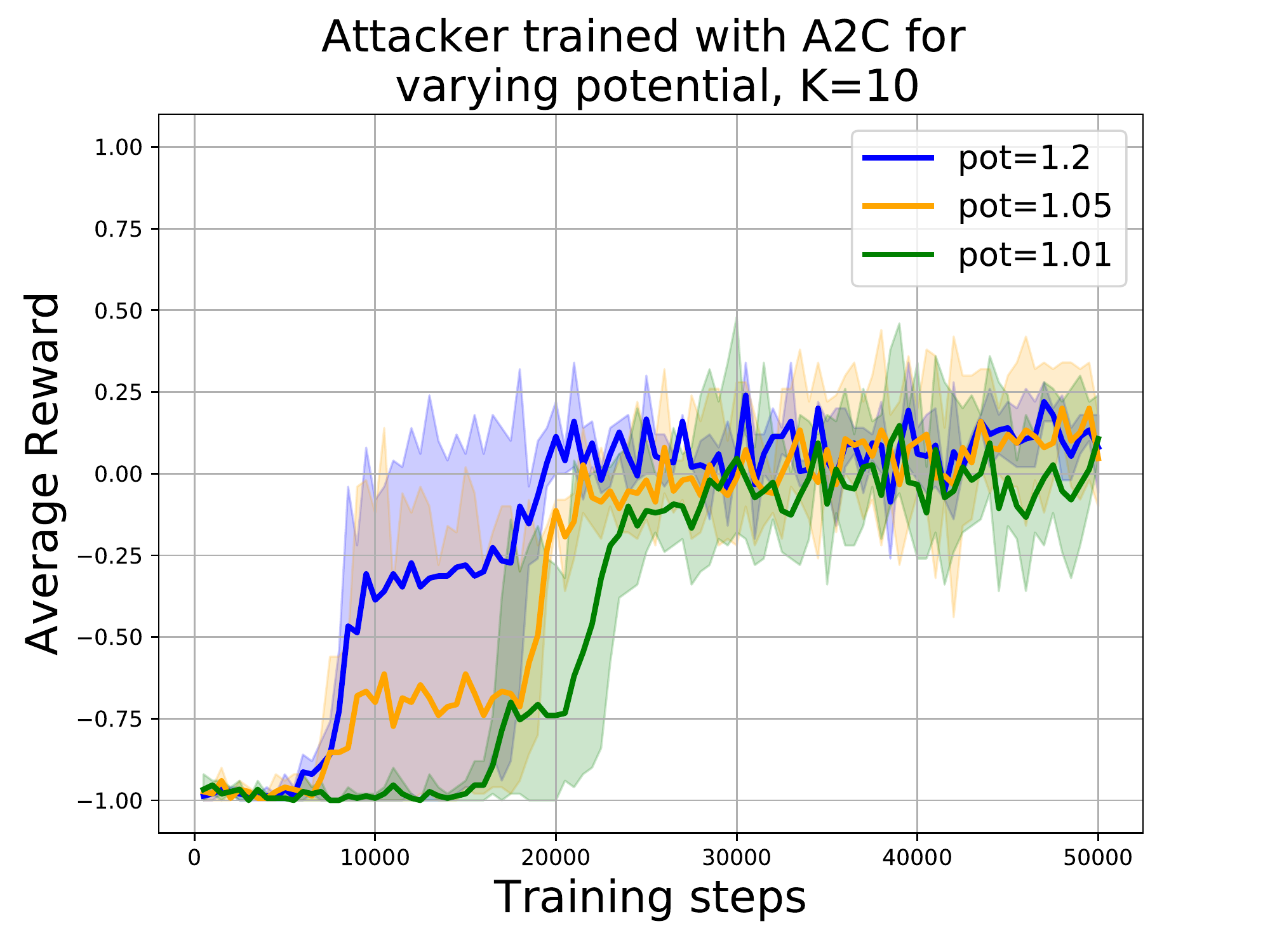}\\
  \end{tabular}
  
%   \begin{tabular}{ccc}
%   \hspace*{-5mm}\includegraphics[width=0.42\columnwidth]{figures/varyKphi/PPO_Attacker_VaryK_Potential11.pdf}\hspace*{-9mm}
%   &
%   \includegraphics[width=0.42\columnwidth]{figures/varyKphi/PPO_Attacker_VaryK_Potential105.pdf}\hspace*{-9mm}
%   &
%   \includegraphics[width=0.42\columnwidth]{figures/varyKphi/PPO_Attacker_VaryPotential_K10.pdf}\\
%   \hspace*{-5mm}\includegraphics[width=0.42\columnwidth]{figures/varyKphi/A2C_Attacker_VaryK_Potential11.pdf}\hspace*{-9mm}
%   &
%   \includegraphics[width=0.42\columnwidth]{figures/varyKphi/A2C_Attacker_VaryK_Potential105.pdf}\hspace*{-9mm}
%   &
%   \includegraphics[width=0.42\columnwidth]{figures/varyKphi/A2C_Attacker_VaryPotential_K10.pdf}
%   \end{tabular}
  \caption{\small Performance of PPO and A2C on training the attacker agent for different difficulty settings. DQN performance was very poor (reward < $-0.8$ at $K=5$ with best hyperparams). We see much greater variation of performance with changing $K$, which now affects the sparseness of the reward as well as the size of the action space. There is less variation with potential, but we see a very high performance variance with lower (harder) potentials.}
  \vspace*{-1.4em}
  \label{fig-attacker-K}
\end{figure}

\section*{Other Generalization Phenomena}

\subsection*{Generalizing Over Different Potentials and $K$}
\label{sec-vary-K}

In the main text we examined how our RL defender agent performance varies as we change the difficulty settings of the game, either the potential or $K$. Returning again to the fact that the Attacker-Defender game has an expressible optimal that generalizes across all difficulty settings, we might wonder how training on one difficulty setting and testing on a different setting perform. Testing on different potentials in this way is straightforwards, but testing on different $K$ requires a slight reformulation. our input size to the defender neural network policy is $2(K+1)$, and so naively changing to a different number of levels will not work. Furthermore, training on a smaller $K$ and testing on larger $K$ is not a fair test -- the model cannot be expected to learn how to weight the lower levels. However, testing the converse (training on larger $K$ and testing on smaller $K$) is both easily implementable and offers a legitimate test of generalization. We find (a subset of plots shown in Figure \ref{fig-vary-pot-K}) that when varying potential, training on harder games results in better generalization. When testing on a smaller $K$ than the one used in training, performance is inverse to the difference between train $K$ and test $K$. 

\begin{figure}
  \centering
  \begin{tabular}{cc}
   \hspace*{-5mm} \includegraphics[width=0.6\columnwidth]{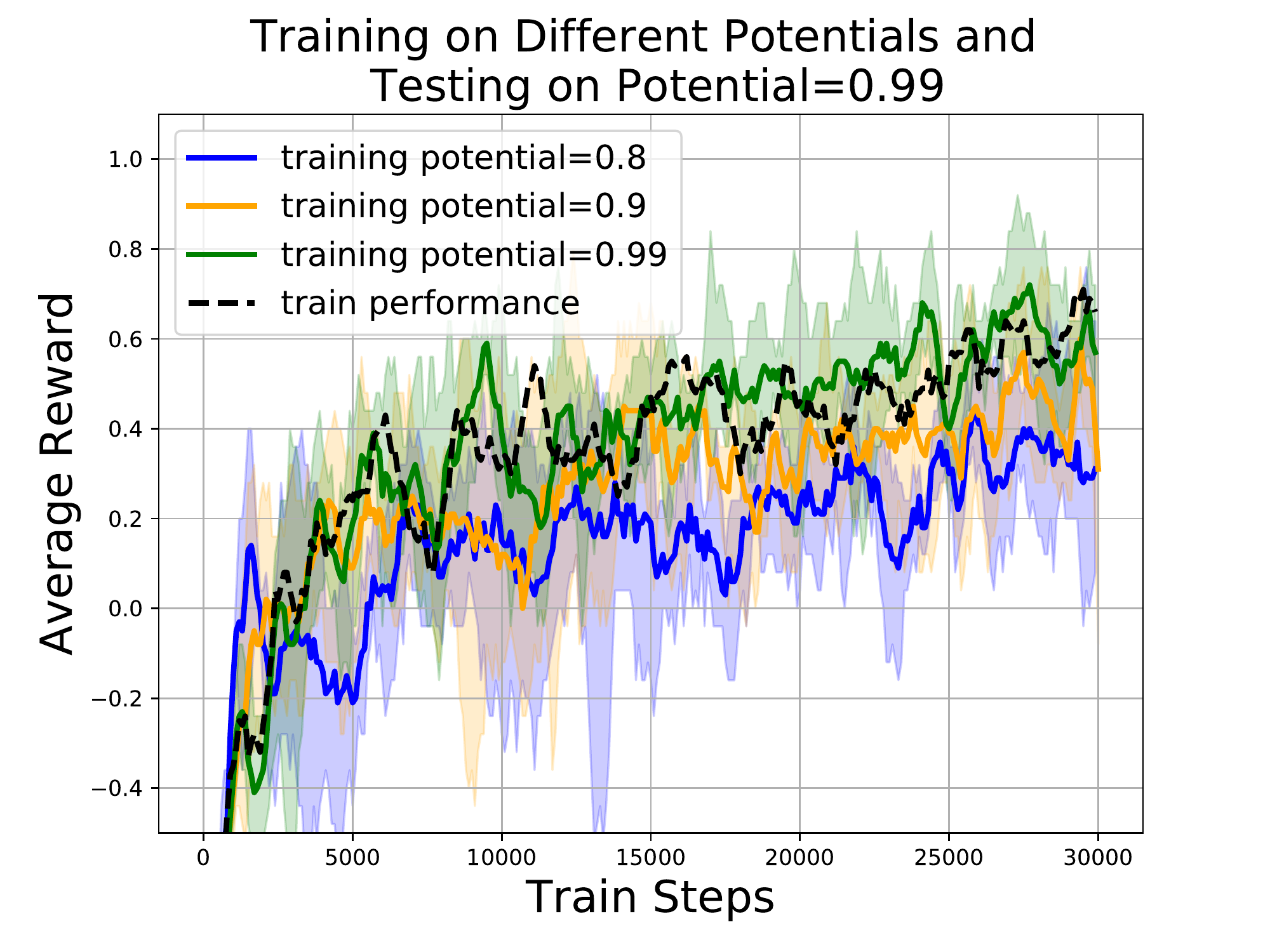}
   \hspace*{-9mm}
   &
   \includegraphics[width=0.6\columnwidth]{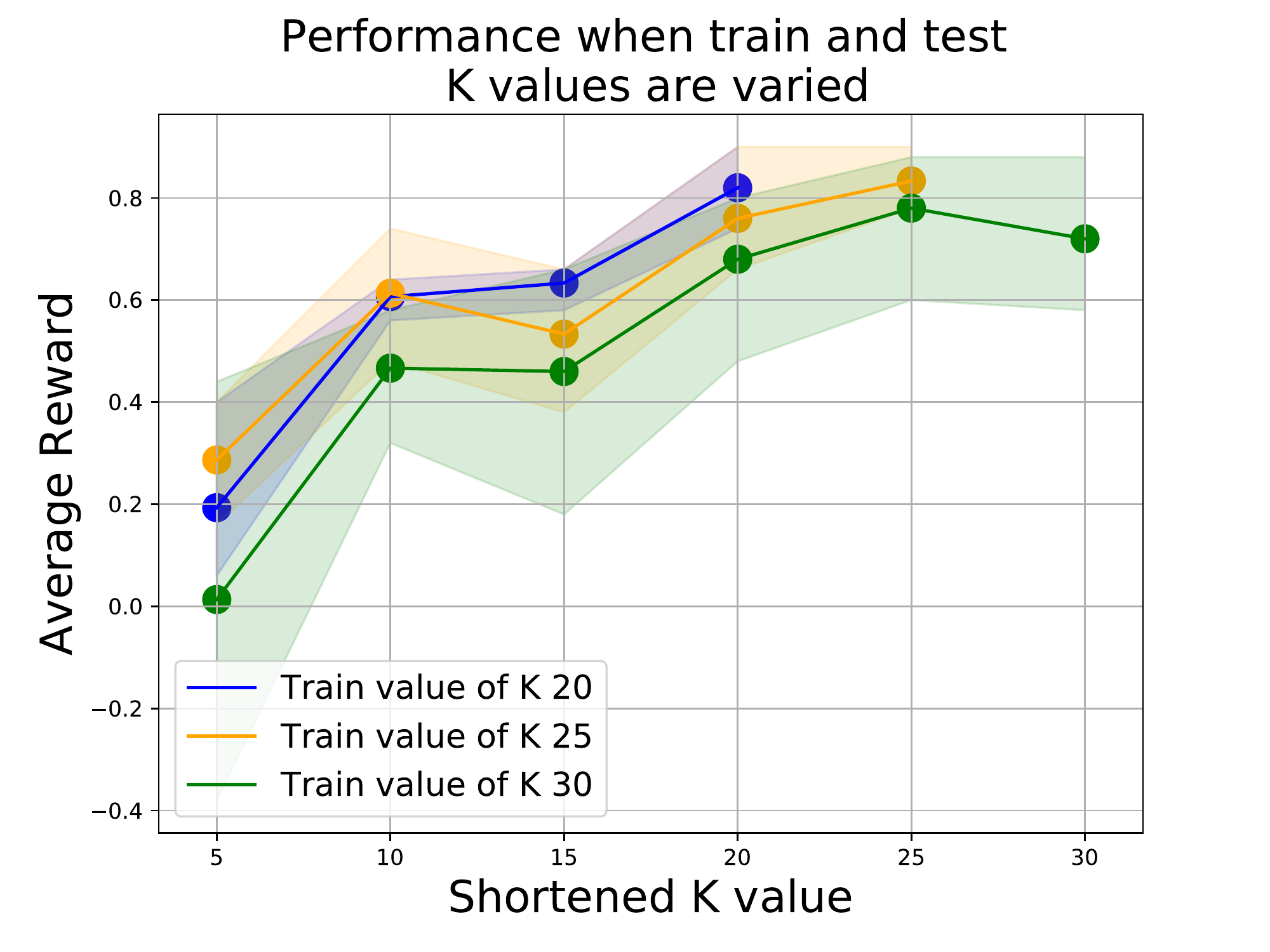}
  \end{tabular}
  \caption{On the left we train on different potentials and test on potential $0.99$. We find that training on harder games leads to better performance, with the agent trained on the easiest potential generalizing worst and the agent trained on a harder potential generalizing best. This result is consistent across different choices of test potentials. The right pane shows the effect of training on a larger $K$ and testing on smaller $K$. We see that performance appears to be inversely proportional to the difference between the train $K$ and test $K$.}
  \label{fig-vary-pot-K}
\end{figure}

\subsection*{Catastrophic Forgetting and Curriculum Learning}
Recently, several papers have identified the issue of catastrophic forgetting in Deep Reinforcement Learning, where switching between different tasks results in destructive interference and lower performance instead of positive transfer. We witness effects of this form in the Attacker-Defender games. As in Section \ref{sec-vary-K}, our two environments differ in the $K$ that we use -- we first try training on a small $K$, and then train on larger $K$. For lower difficulty (potential) settings, we see that this curriculum learning improves play, but for higher potential settings, the learning interferes catastrophically, Figure \ref{fig-catastrophic-forgetting-defender}

\begin{figure*}
  \centering
   \hspace*{-5mm} \includegraphics[scale=0.2]{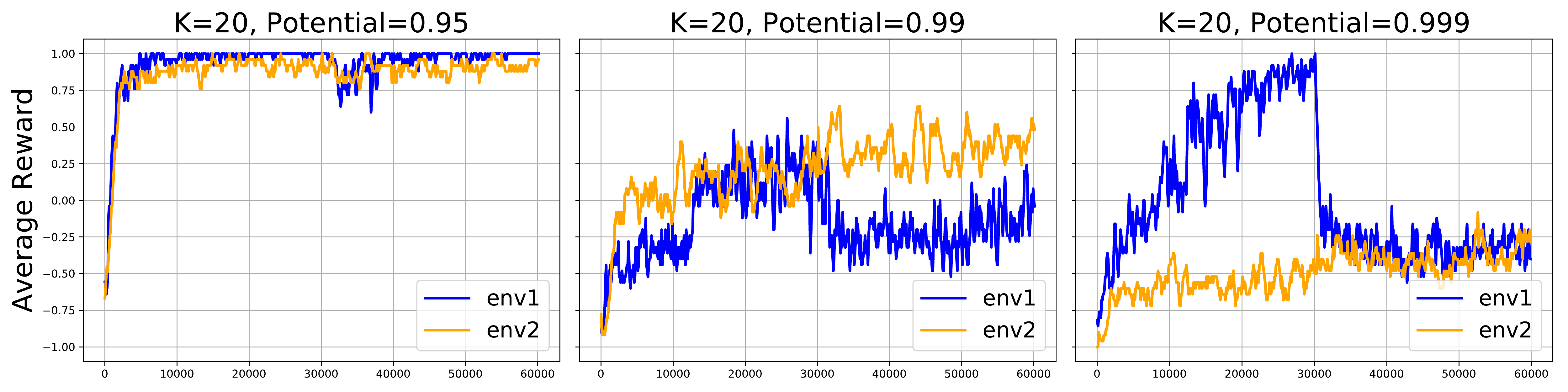}
  \caption{Defender agent demonstrating catastrophic forgetting when trained on environment $1$ with smaller $K$ and environment $2$ with larger $K$.}
  \vspace*{-1.4em}
  \label{fig-catastrophic-forgetting-defender}
\end{figure*}

\section*{Understanding Model Failures}

\subsection*{Value of the Null Set}
The significant performance drop we see in Figure \ref{fig-single-agent-overfit} motivates investigating whether there are simple rules of thumb that the model has successfully learned. Perhaps the simplest rule of thumb is learning the value of the null set: if one of $A, B$ (say $A$) consists of only zeros and the other ($B$) has some pieces, the defender agent should reliably choose to destroy $B$. Surprisingly, even this simple rule of thumb is violated, and even more frequently for larger $K$, Figure \ref{fig-zerostate-single}.

\begin{figure}
\centering
   \includegraphics[scale=0.4]{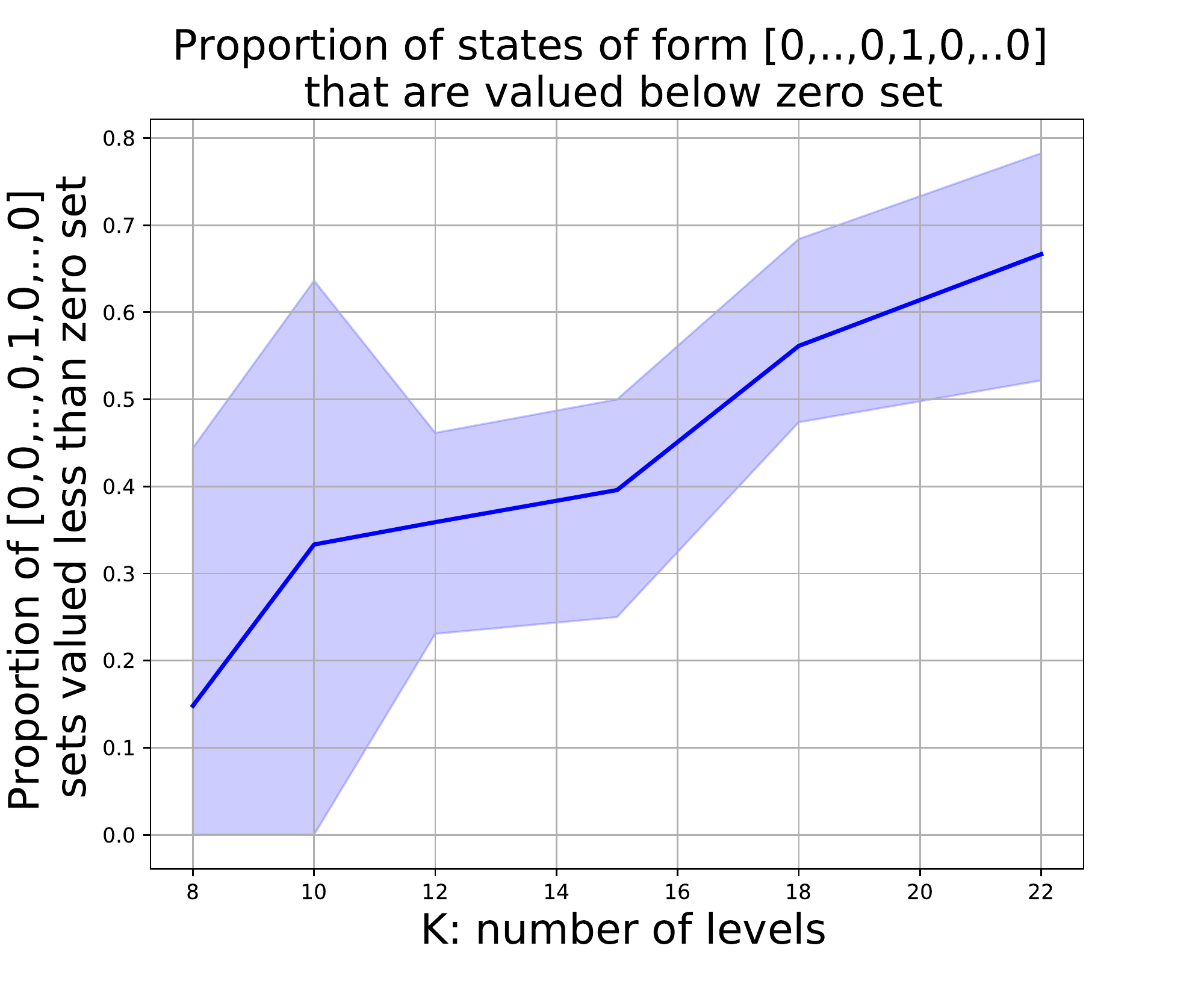}
  \caption{Figure showing proportion of sets of form $[0,...,0,1,0,...,0]$ that are valued less than the null set. Out of the $K+1$ possible one hot sets, we determine the proportion that are not picked when paired with the null (zero) set, and plot this value for different $K$.}
  \label{fig-zerostate-single}
\end{figure}

\subsection*{Model Confidence}
We can also test to see if the model outputs are well \textit{calibrated} to the potential values: is the model more confident in cases where there is a large discrepancy between potential values, and fifty-fifty where the potential is evenly split? The results are shown in Figure \ref{fig:-model-confidence}.

\begin{figure}
    \centering
    \includegraphics[width=0.45\textwidth]{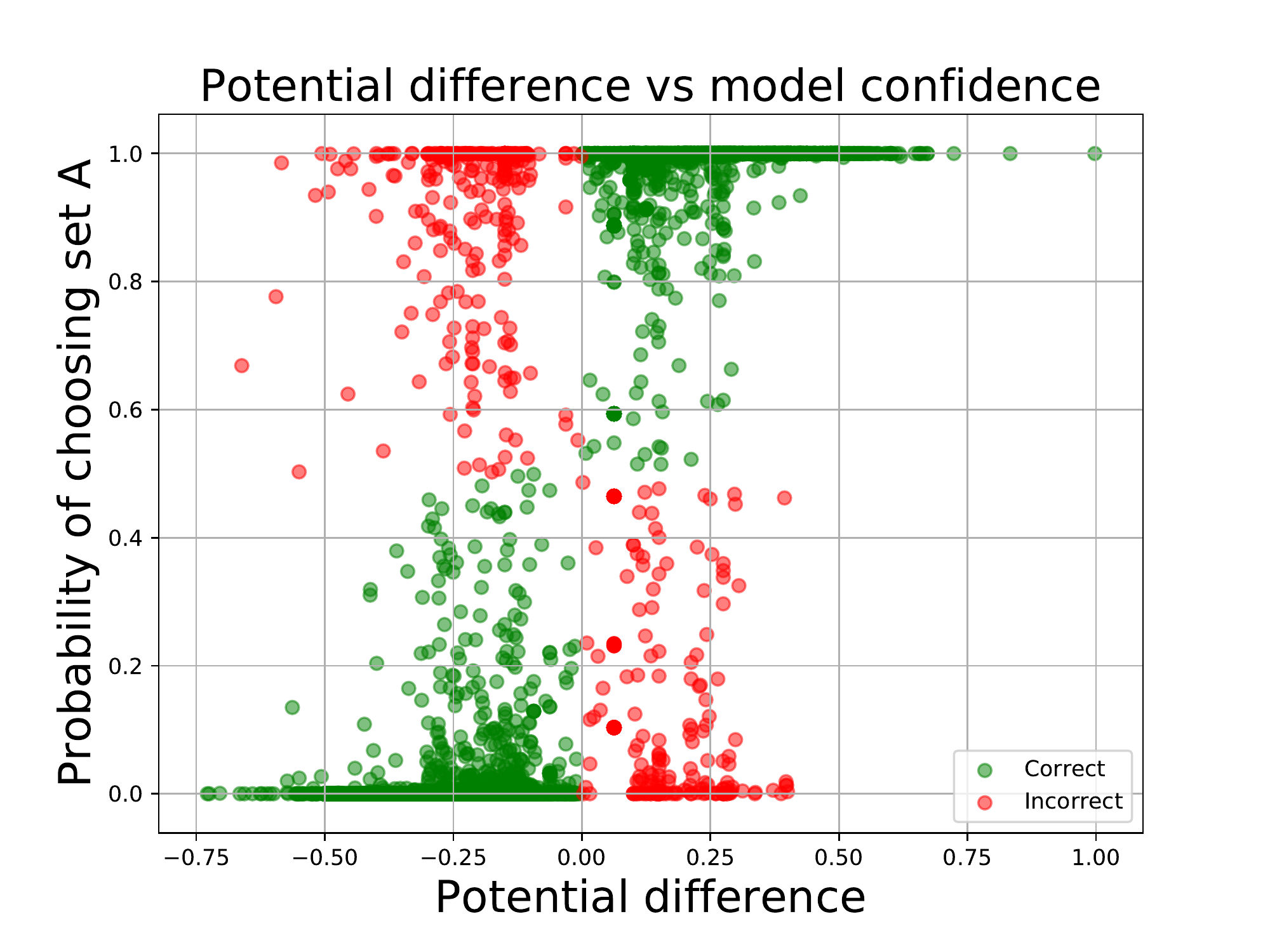}
    \includegraphics[width=0.45\textwidth]{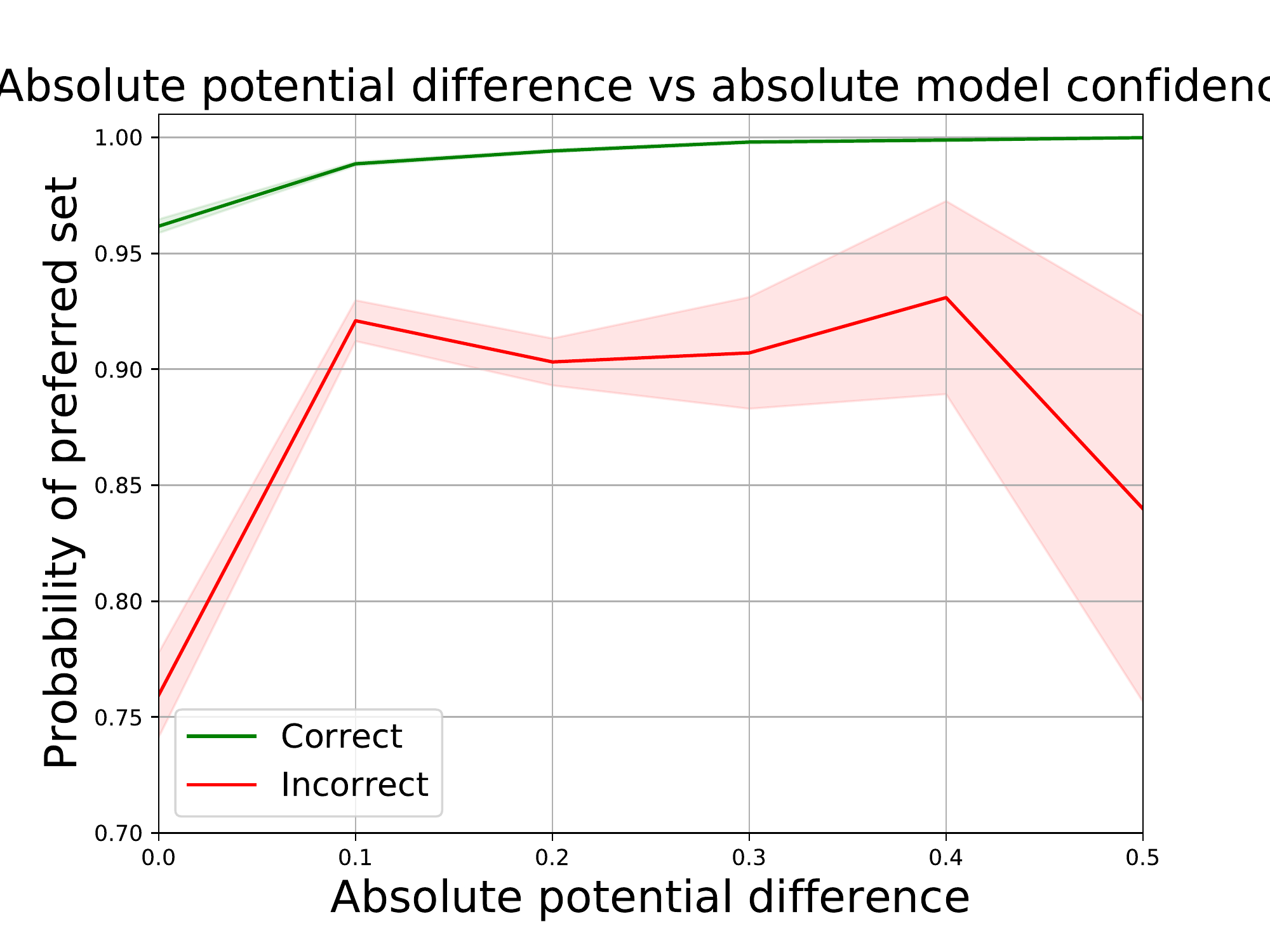}
    \caption{Confidence as a function of potential difference between states. The top figure shows true potential differences and model confidences; green dots are moves where the model prefers to make the right prediction, while red moves are moves where it prefers to make the wrong prediction. The right shows the same data, plotting the \emph{absolute} potential difference and
    absolute model confidence in its preferred move. Remarkably, an increase in the potential difference associated with an increase in model confidence over a wide range, even when the model is wrong.}
    \label{fig:-model-confidence}
\end{figure}

\subsection{Generalizing across Start States and Opponent Strategies}
In the main paper, we mixed between different start state distributions to ensure a wide variety of states seen. This begets the natural question of how well we can generalize across start state distribution if we train on purely one distribution. The results in Figure \ref{fig-start-state} show that training naively on an `easy' start state distribution (one where most of the states seen are very similar to one another) results in a significant performance drop when switching distribution.

\begin{figure}
  \centering
  \begin{tabular}{cc}
   \hspace*{-15mm} \includegraphics[width=0.6\columnwidth]{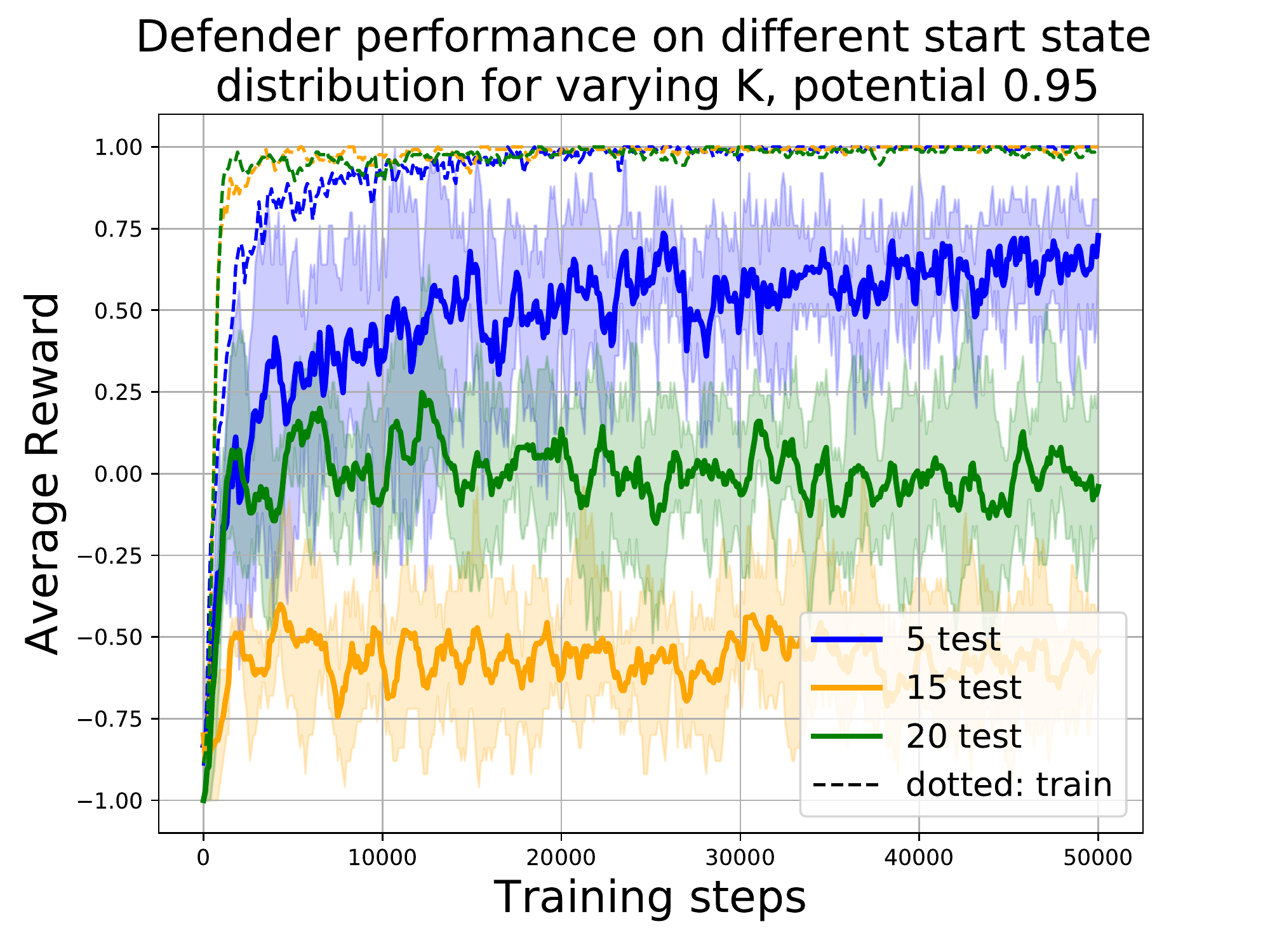}
   \hspace*{-9mm}
   &
   \includegraphics[width=0.6\columnwidth]{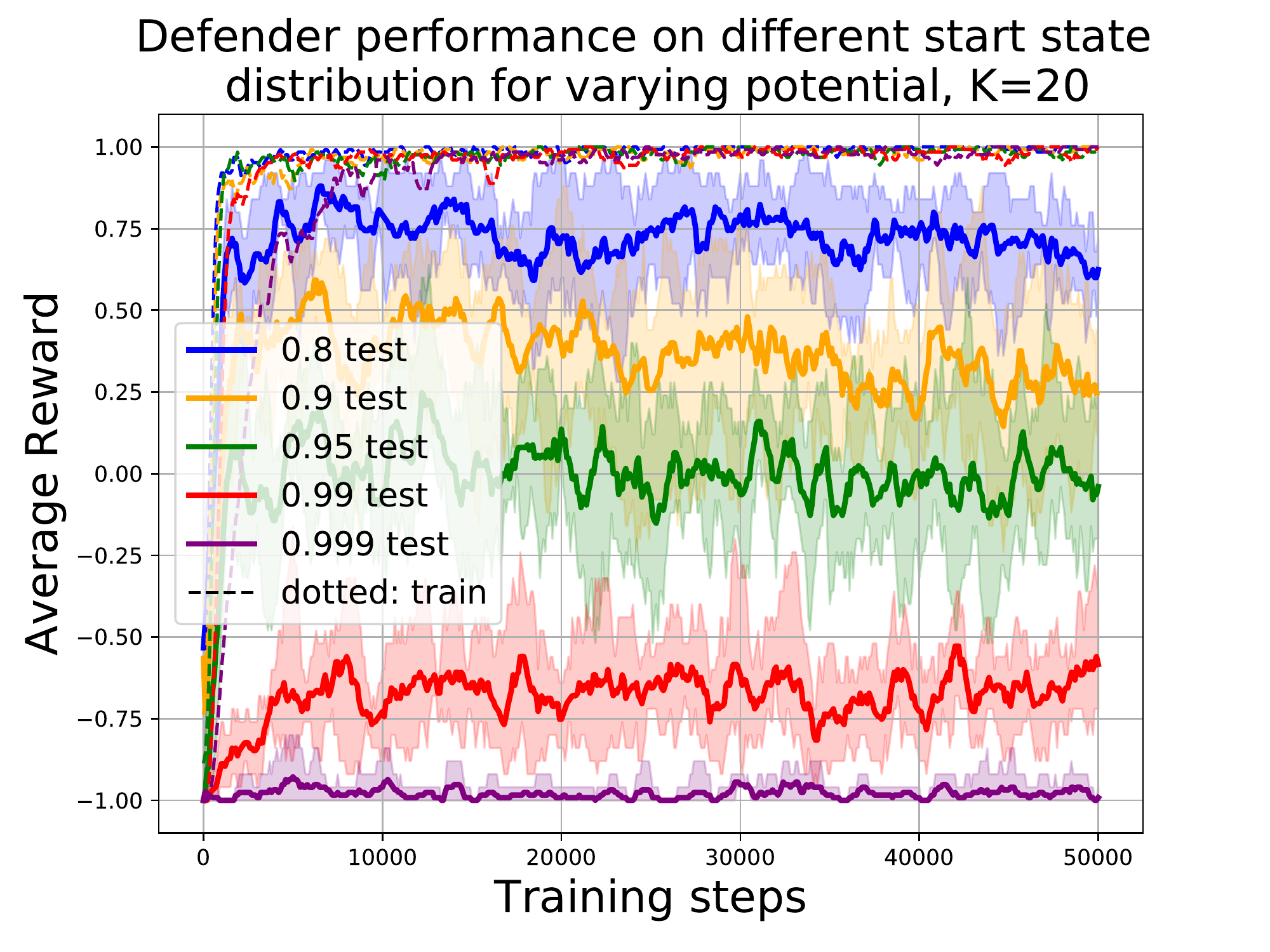}
  \end{tabular}
  \caption{Change in performance when testing on different state distributions}
  \vspace*{-1.4em}
  \label{fig-start-state}
\end{figure}

In fact, the amount of possible starting states for a given $K$ and potential $\phi(S_0) = 1$ grows super exponentially in the number of levels $K$. We can state the following theorem:

\begin{theorem}
\label{thm-states-growth}
The number of states with potential $1$ for a game with $K$ levels grows like $2^{\Theta(K^2)}$  (where $0.25 K^2 \leq \Theta(K^2) \leq 0.5 K^2$ )
\end{theorem}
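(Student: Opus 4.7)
The plan is to phrase a state of potential $1$, via Definition~\ref{def-potfn}, as a nonnegative integer tuple $(n_0,\ldots,n_K)$ satisfying $\sum_{i=0}^K n_i\,2^i = 2^K$, and to let $f(K)$ denote the number of such tuples. I will then prove $K^2/4 - O(K) \leq \log_2 f(K) \leq K^2/2 + O(K)$, which places the exponent in the stated interval.

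\textbf{Upper bound.} For the easy direction, the per-coordinate constraint $n_i\,2^i \leq 2^K$ forces $n_i \in \{0,1,\ldots,2^{K-i}\}$. Multiplying these ranges,
\[
f(K) \leq \prod_{i=0}^K \bigl(2^{K-i}+1\bigr) \leq 2^{(K+1)(K+2)/2} = 2^{K^2/2 + O(K)}.
\]

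\textbf{Lower bound via a counting recursion.} For the lower bound I let $N(K,m)$ denote the number of nonnegative integer solutions of $\sum_{i=0}^K n_i 2^i = m$, so $f(K)=N(K,2^K)$. Parity of $2^K$ forces $n_0$ to be even; writing $n_0=2k$ and halving the remaining equation yields the recursion
\[
f(K) = \sum_{k=0}^{2^{K-1}} N\bigl(K-1,\,2^{K-1}-k\bigr) = \sum_{m=0}^{2^{K-1}} N(K-1,m).
\]
Two simple facts drive the bound. First, $N(K-1,m)$ is nondecreasing in $m$, since appending a single $2^0$-piece injects representations of $m$ into representations of $m+1$. Second, whenever $m \leq 2^{K-2}$ the part $2^{K-1}$ is too large to appear, so $N(K-1, 2^{K-2}) = N(K-2, 2^{K-2}) = f(K-2)$. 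Restricting the sum to $m \geq 2^{K-2}$ and bounding each surviving term below by $f(K-2)$ then gives
\[
f(K) \geq \sum_{m=2^{K-2}}^{2^{K-1}} N(K-1,m) \geq \bigl(2^{K-2}+1\bigr)\,f(K-2).
\]

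\textbf{Iteration and main obstacle.} Taking $\log_2$ yields $\log_2 f(K) \geq (K-2) + \log_2 f(K-2)$, and telescoping down to the base case $f(0)=1$ (with a parallel argument for odd $K$) gives $\log_2 f(K) \geq K^2/4 - K/2$, which is the $0.25\,K^2$ lower bound up to lower-order corrections. The main obstacle I expect is the step upgrading the naive additive recursion $f(K) \geq f(K-2)$ to the multiplicative form $f(K) \geq 2^{K-2} f(K-2)$: this is where the monotonicity of $N(K-1,\cdot)$ and the truncation identity $N(K-1,2^{K-2}) = f(K-2)$ do the real work, and it is precisely what turns a linear exponent into a quadratic one.
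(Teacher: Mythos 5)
Your argument is correct. The upper bound is essentially the paper's: bound each coordinate by its maximum admissible value and multiply the ranges to get $2^{K^2/2+O(K)}$. The lower bound, however, takes a genuinely different route. The paper constructs an explicit product family of states: it assumes $K$ is a power of $2$, partitions the levels into disjoint adjacent pairs, fixes each pair's contribution to the potential at $1/K$, and multiplies the per-pair solution counts $\prod_j (2^{j-1}/K) \approx 2^{K^2/4}$. You instead work with the counting function $N(K,m)$ for $\sum_i n_i 2^i = m$, peel off the lowest level via the parity constraint to get $f(K)=\sum_{m=0}^{2^{K-1}}N(K-1,m)$, and then combine monotonicity of $N(K-1,\cdot)$ in $m$ with the truncation identity $N(K-1,2^{K-2})=f(K-2)$ to obtain the multiplicative recursion $f(K)\ge (2^{K-2}+1)f(K-2)$, which telescopes to $\log_2 f(K)\ge K^2/4-K/2$. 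Your version buys a cleaner, fully rigorous argument with no divisibility assumption on $K$ and explicit lower-order error terms; the paper's version buys a concrete combinatorial picture of a large explicit subfamily of potential-$1$ states (a product set over level pairs), at the cost of some hand-waving about rounding and the normalization of the pair sums. Both land on the same $2^{K^2/4}$ versus $2^{K^2/2}$ bracket claimed in the theorem.
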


We give a sketch proof.
\begin{proof}
Let such a state be denoted $S$. Then a trivial upper bound can be computed by noting that each $s_i$ can take a value up to $2^{(K-i)}$, and producting all of these together gives roughly $2^{K/2}$. 

For the lower bound, we assume for convenience that $K$ is a power of $2$ (this assumption can be avoided). Then look at the set of non-negative integer solutions of the system of simultaneous equations 
\[    a_{j-1} 2^{1-j} + a_{j} 2^{-j} = 1/K \]
where j ranges over all even numbers between $\log(K)+1$ and $K$. The equations don't share any variables, so the solution set is just a product set, and the number of solutions is just the product
 $ \prod_{j} (2^{j-1}/K)$
where, again,$ j$ ranges over even numbers between $\log(K)+1$ and $K$. This product is roughly $2^{K^2 / 4}$.
\end{proof}

\subsection{Comparison to Random Search}
Inspired by the work of \cite{mania2018randomsearch}, we also include the performance of random search in Figure \ref{}

\begin{figure}
\centering
  \begin{tabular}{cc}
  % ppo-15
  \vspace*{-5mm}
   \hspace*{-20mm}\includegraphics[width=0.5\columnwidth]{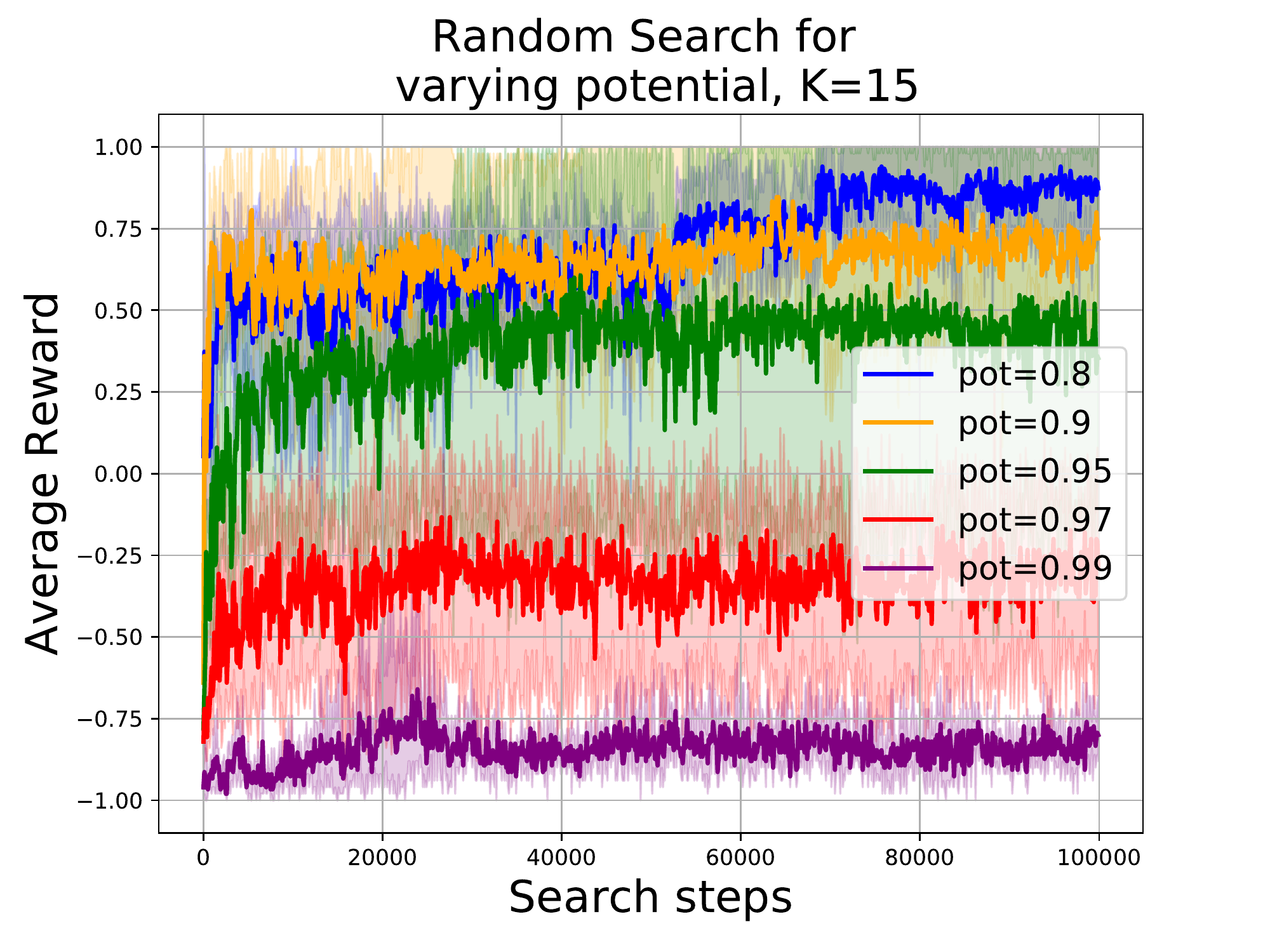}
  &
  % ppo-20
  \hspace*{-5mm}\includegraphics[width=0.5\columnwidth]{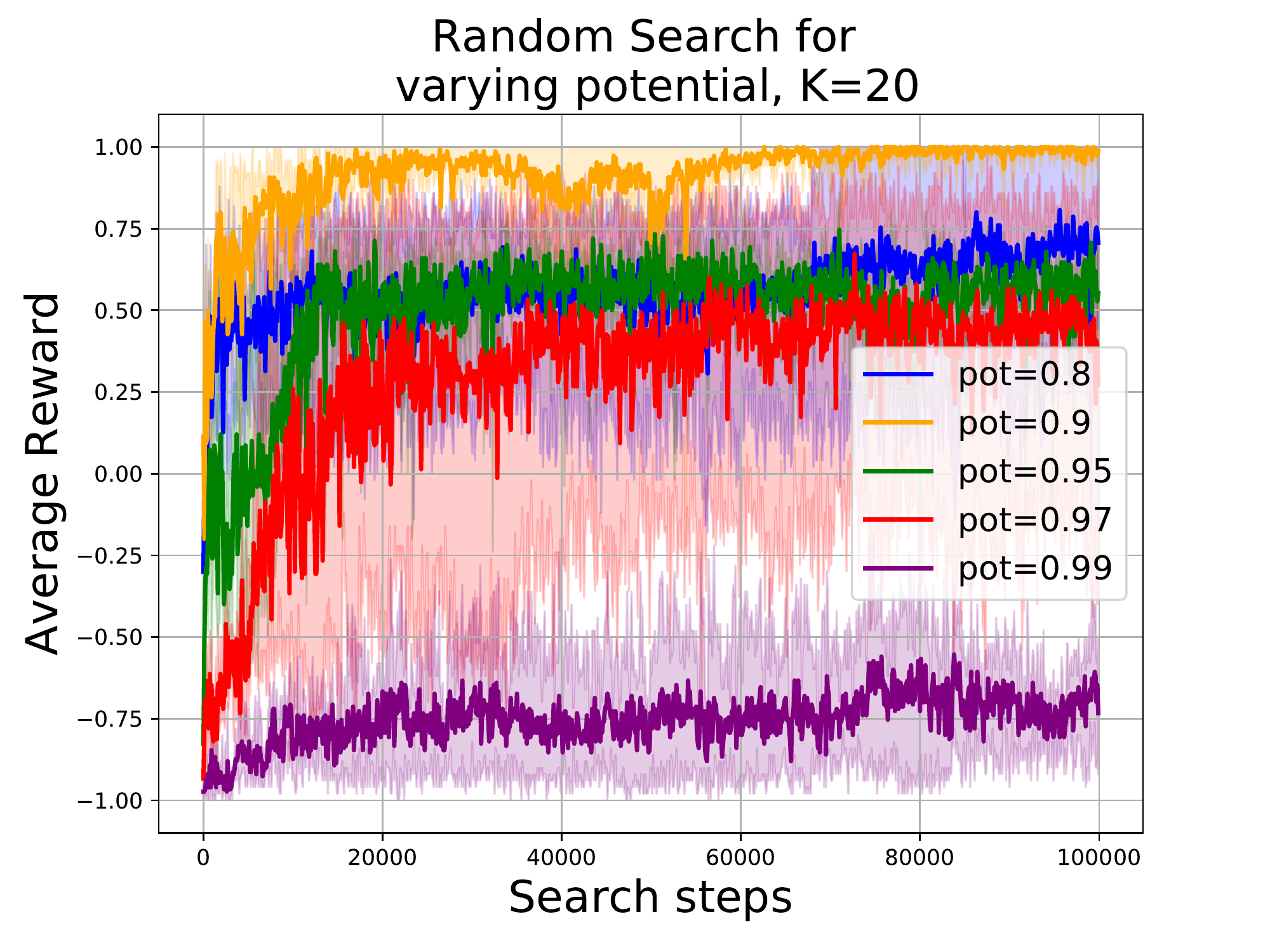}\hspace*{-9mm} 
  \\
  % a2c-15
  \hspace*{-20mm}
  \includegraphics[width=0.5\columnwidth]{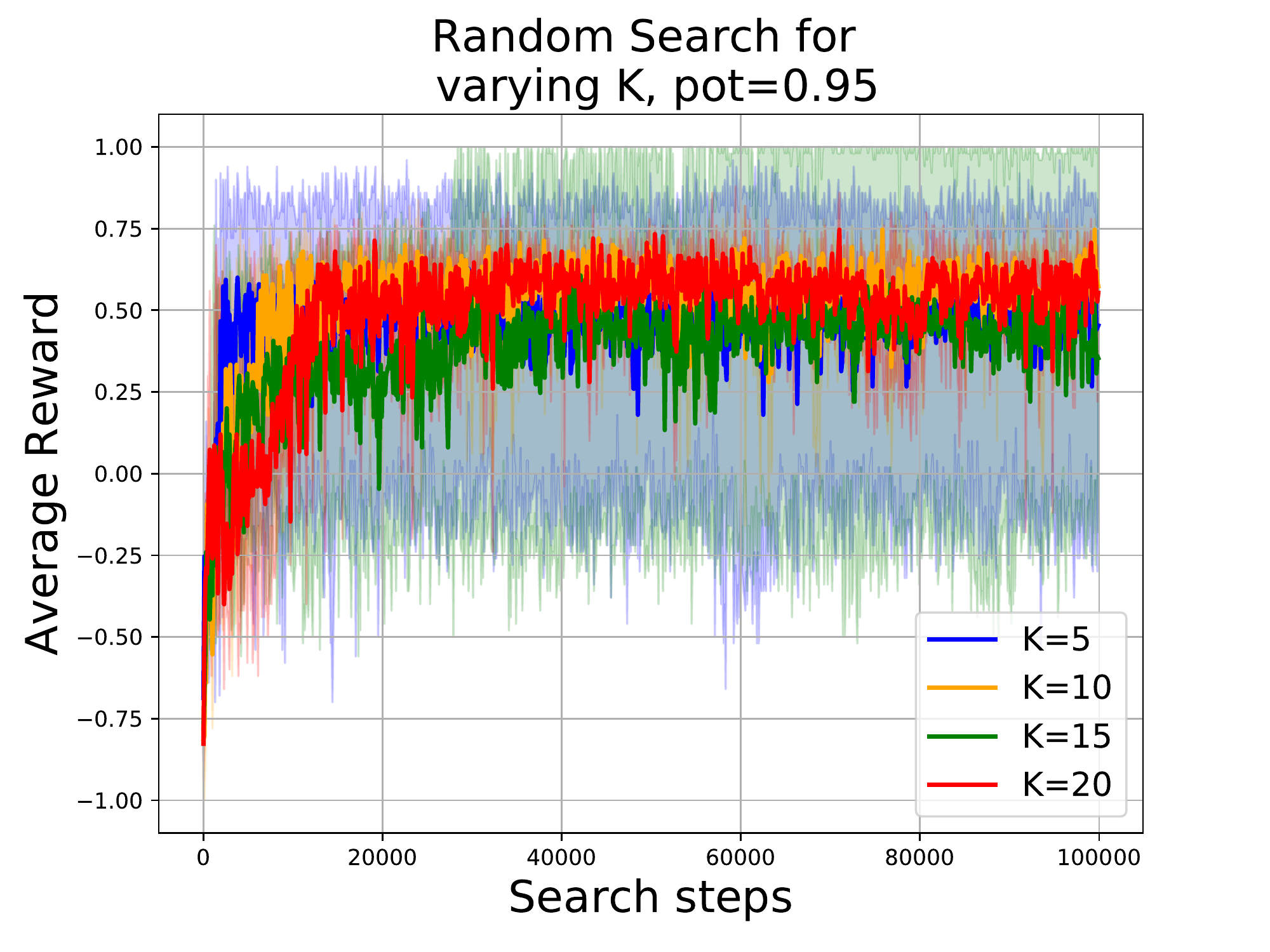}
  &
  % a2c-20
    \hspace*{-5mm}
  \includegraphics[width=0.5\columnwidth]{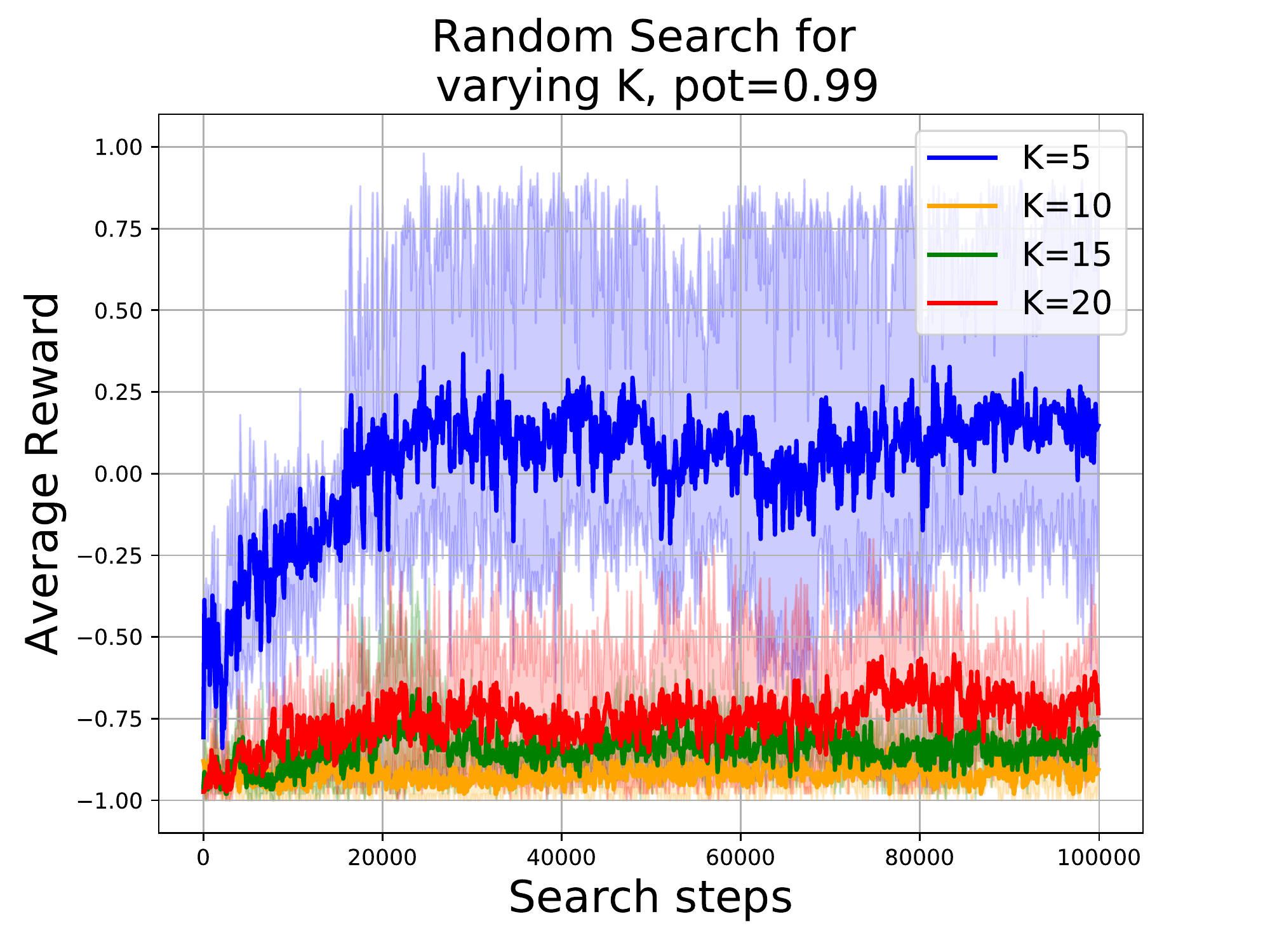}\hspace*{-9mm} 
  \\
  \end{tabular}
  \caption{\small Performance of random search from \cite{mania2018randomsearch}. The best performing RL algorithms do better.}
  \label{fig-random-search}
\end{figure}

\end{document}